\newcommand{\mB}{\mathcal{B} } 
\newcommand{\mA}{\mathcal{A} } 
\newcommand{\mT}{\mathcal{T} } 
\newcommand{\mDA}{\mathcal{D}(\mathcal{A}) } 
\newcommand{\mE}{\mathbb{E}} 
\newcommand{\mX}{X} 
\newcommand{\mR}{\mathbb{R}} 
\newcommand{\mRange}{\mathcal{R}}
\newcommand{\mH}{\mathcal{H} } 
\newcommand{\mID}{{\rm id}}
\newtheorem{assumption}{Assumption}
\begin{document}

\title{Temporal-Differential Learning in Continuous Environments}

\author{\name Tao Bian \email tbian@nyu.edu\\
       \addr Bank of America\\
       One Bryant Park \\
    New York, NY 10036, USA
       \AND
       \name Zhong-Ping Jiang  \email zjiang@nyu.edu \\
       \addr Department of Electrical and Computer Engineering \\
%   Tandon School of Engineering \\
   New York University \\
   6 Metrotech Center\\
 Brooklyn, NY 11201, USA}

\editor{}%Kevin Murphy and Bernhard Sch{\"o}lkopf}

\maketitle

\begin{abstract}%   <- trailing '%' for backward compatibility of .sty file
In this paper, a new reinforcement learning (RL) method known as the method of temporal differential is introduced. 
Compared to the traditional temporal-difference learning method, it plays a crucial role in developing novel RL techniques for continuous environments.
%In this paper, we introduce a new reinforcement learning (RL) concept known as temporal differential.
%    The temporal differential plays as a key driver in developing novel RL techniques in continuous learning environments. 
    In particular, the continuous-time least squares policy evaluation (CT-LSPE) and the continuous-time temporal-differential (CT-TD) learning methods are developed. 
    Both theoretical and empirical evidences are provided to demonstrate the effectiveness of the proposed temporal-differential learning methodology.
\end{abstract}

\begin{keywords}
  Reinforcement Learning, Markov Processes, Kernel Methods, Continuous Environment, Policy Evaluation
\end{keywords}

\section{Introduction}
Over the past decade, reinforcement learning (RL) has quickly become one of the most prominent technologies in the field of artificial intelligence (AI).
The popularity of RL is mainly driven by its recent success  in various application fields \citep{Barto2017,Silver2017,Wang2018,Kolm2019}. %such as robotics, autonomous vehicles, video games, and board games, to name a few.
%advantage in multiple industry applications \cite{Barto2017}.
Despite the rapid growth of RL in recent years, a common feature in most existing RL techniques, or broadly speaking most AI technologies, is that the learning task is performed in discrete learning environments \citep[Section 2.3.2]{Russell2010}, that is, either the time space or the state-action space or both are associated with the discrete topology. %discrete-time and/or discrete-state/action manner.
Although most practical physical environments are continuous in nature, in the sense that both the time space and the state-action spaces are continuous, RL and AI problems in continuous learning environments have rarely been touched upun in the past literature.
In fact, the ability of learning in continuous learning environments has long been visioned as an important component of a futuristic AI system \citep[Section 7]{Russell1997}.
Admittedly, building AI and RL agents in continuous environments is a challenging task.
One challenge is that most processes evolving in continuous environments are described by ordinary differential equations (ODEs), stochastic differential equations (SDEs), or other complex mathematical models.
These models are still not fully-understood from the AI and RL perspective.  
%For instance, a common practice to model most financial processes as SDEs to reflect the fast evolving of the market. 
%It is still unclear how to generalize RL method to all these models.
This raises the question on how to develop a generic learning framework applicable to all these models.
In addition, the state-action spaces encountered in practical RL problems often exhibit complex shapes and structures.
%a discrete space or Euclidean space may not be able to represent the state spaces encountered in practical environments.
For example, the presence of singularity points in robot workspace \citep{Merlet2006} is a notorious issue in robot control and learning.
These irregularities create a serious challenge for the convergence and robustness analysis
%on analyzing the convergence and robustness 
of learning algorithms.
Besides its potential value in practical applications, understanding the continuous limits of RL and AI algorithms also provides us a unique insight into the dynamics and theoretical nature of these algorithms.
In fact, dynamical systems and differential equation theories have gained great popularity recently in understanding ML, especially neural network (NN) models \citep{E2017a,Chaudhari2018,Chen2018,Ruthotto2019}.
In addition, the ODE method has already been used in stochastic approximation \citep{Ljung1977} to study the convergence of RL algorithms \citep{Kushner2003}.
%Since RL is fundamentally built upon dynamic programming (DP) theory, it is expected to see similar applications of dynamical system theory in analyzing the behavior of RL algorithms.
%Unfortunately, few works have been done to research RL methods from system theory point of view.
%The continuous limit of RL algorithms is still not well-understood.
Hence, in order to expand the application of  RL to a broader range of  problems, and to enrich our understanding of RL theory, it is necessary to investigate RL methods in general continuous learning environments. 

The purpose of this paper is to extend the classical temporal difference (TD) error \citep{Sutton1988}  to the setting of continuous environments. 
To overcome the shortcomings of traditional RL in dealing with problems arising from continuous environments, we will introduce
%To build learning agents in continuous environments, some fundamental building blocks in RL theory must be redesigned.
%In this paper, we extend the classical temporal difference (TD) error \citep{Sutton1988} to the continuous setting, by introducing 
a new concept under the name of {\em temporal-differential} error.
Similar to the TD in discrete environments, the temporal-differential error plays a key role in developing new RL methods in continuous environments.
In particular,  we introduce two continuous RL algorithms, namely,  the continuous-time least squares policy evaluation (CT-LSPE) (Algorithm \ref{alg:temporal-differential}) and the continuous-time temporal-differential (CT-TD) learning (Algorithm \ref{alg:temporal-differential-infty}).
These algorithms are extensions of the LSPE \citep{Nedic2003} and the TD learning \citep[Chapter 6]{Sutton2018}, respectively, to continuous environments.
%Besides learning in the continuous setting, 
As a continuous RL methodology, our algorithms also possess several unique features that distinguish them from most traditional RL methods.
First, instead of using NN approximation, kernel approximation is used in our design.
The linkage between the kernel function and the Hilbert space theory allows our algorithms to achieve convergence over the entire state space.
%This also relaxes our requirements on the underlying state space.
Second, systems and control theory will be applied to show the stability and robustness of the proposed learning algorithms.
Finite-time error bounds are also obtained, without assuming the stationarity of the training data as in the past literature \citep{Antos2008,Munos2008}.

\section{Related Works}
The study on RL in continuous environments can be traced back to the concept of advantage updating \citep{Baird-III1993,Harmon1995}. 
Shortly after Baird's advantage updating, the continuous-time TD learning and actor-critic algorithms were  developed by \citet{Doya1996} and \citet{Doya2000}.
In addition,  \citet{Munos2000} developed a continuous RL technique to solve viscosity solutions in order to tackle the case where the classical optimal solution does not exist.
Continuous RL has also been applied in different fields including game theory \citep{Borgers1997} and neurobiology \citep{Fremaux2013}.
A common feature of these early results is that the discrete-time sampling is performed to convert the continuous-time problem into the discrete-time setting.
More recently, the research on deep RL opens possibilities for developing practical learning algorithms in continuous state-action spaces \citep{Hasselt2007,Hasselt2012,Lillicrap2016,Duan2016,Recht2019}.
Indeed, computer experiments from balancing inverted pendulums to learning the locomotion have shown very promising results.

Besides the above collective efforts, a learning-based control design methodology, usually under the name of adaptive (or approximate) dynamic programming (ADP) \citep{Jiang2012a,Vrabie2013,Bian2016b,Kiumarsi2018}, has been developed to solve optimal control problems for continuous-time dynamical systems.
Instead of discretizing a continuous-time system into a discrete-time system, ADP algorithms solve directly the optimal control problem by  utilizing the continuous-time data flow generated by dynamical systems. 
Exploiting explicitly systems and control techniques,
%Equipped with the systems and control theory, concrete theoretical guarantees on 
the  stability and optimality of the controlled system can be guaranteed.
Finally, to perform the learning task in more complex environments, robust and decentralized extensions of ADP have also been proposed by \citet{Jiang2017,Bian2019a}.

%It is also worth mentioning that adaptive DP and approximate DP also provide learning-based solution to the system optimal control problems, with concrete theoretical guarantees.

%RL techniques in continuous environments has been applied in different fields; see \cite{Borgers1997,Fremaux2013} for example.

%\todo[inline]{add paragraph of ODE/PDE in ML}

\section{Problem Formulation}\label{sec:problem-formulation}
%Suppose $w$ is a $q$-dimensional standard Brownian motion. 
%We consider $\{x(t)\}_{t\geq0}$ as a Feller process on state space $\mX$ associated with a infinitesimal generator $\mathcal{A}:\mC\rightarrow \mC$. 
Throughout this paper, $\mathbb{R}$ denotes the set of reals.
$I$ denotes the identity matrix with appropriate dimension.  
$|\cdot|$ denotes the Euclidean norm for vectors, or the induced matrix norm for matrices. 
%$|\cdot|_1$ denotes the $L^1$ norm for vectors. 
$\mID$ denotes the identity mapping. 
Let $(\Omega, \mathcal{F}, \mathbb{P})$ be a probability space and $\{x_t\}_{t\geq0}$ be a c{\`a}dl{\`a}g and time-homogeneous strong Markov process with state space $\mX$.
Here $\mX$ is a locally compact Hausdorff space with a countable base, equipped with the Borel $\sigma$-algebra $\mB$.
Denote by $\mE_x$ the expectation conditionally on the initial state $x$ and by $\mE_\mu$ the expectation with respect to $\mu$.
%Denote $\mathcal{F}_t$ as the $\sigma$-field generated by $x(s)$,  $0\leq s\leq t$.
The Hilbert space $L^2_\mu$ and the norm $\|\cdot\|_\mu$ are induced from the inner product $\langle\cdot,\cdot\rangle_\mu$ defined via $\langle f,g\rangle_\mu =  \mE_\mu\left[ fg\right]$ for all $f, g\in L^2_\mu$. 
Denote by $\mA$ the infinitesimal generator associated with $\{x_t\}_{t\geq0}$.
%The domain and the range of $\mA$ are written as $\mDA\subseteq L^2_\mu$ and $\mRange(\mA)\subseteq L^2_\mu$, respectively.
The domain of $\mA$ is written as $\mDA\subseteq L^2_\mu$.
%Both $\mDA$ and $\mRange(A)$ are subsets of $L^2_\mu$.
%In particular, $\mDA$ is dense in $L^2_\mu$.
For more details on the notations used in this paper, see Appendix \ref{appendix:markov} and references therein. 

Our goal in this paper is to estimate the following value function $V^*$ that depends on the future path of $\{x_t\}_{t\geq0}$: 
%we can define a value function $V^*$ associate with the following discounted cost:
\begin{align}
    V^*(x)&=\mE_{x}\left[\int_0^{\infty}e^{-\gamma t}r(x_t)dt\right],\quad \forall x\in\mX,\label{equ:cost-discount}
        %\rho^*&=\limsup_{T\rightarrow\infty}\int_0^{T}r(x)dt,\label{equ:cost-ergodic}
\end{align}
where 
$r\in L^2_\mu$ is the running-reward/cost function,
and $\gamma>0$ is a scalar representing the discounting effect.
%Throughout this paper, we assume $V^*$ is well-defined over $\mX$.
%Throughout this paper, we assume 
%\begin{enumerate}
    %\item $V^*$ is measurable and $V^*(x)$ exists for all $x\in\mX$;
    %\item $r\in L^2_\mu$ is positive definite, and $\mE_x[r(x_t)]$ is measurable and bounded on any compact set. 
%\end{enumerate}
To be mathematically concrete, we assume $\mE_x[r(x_t)]$ is measurable and 
locally essentially bounded.
% bounded on any compact subset of $\mX$. 
%It is pretty much self-evident why we impose the above two assumptions.
%Perhaps the most strong condition we need here is that $r$ is positive definite.
%We require the positiveness so that it is possible to use $V^*$ as a Lyapunov function to analyze the stability of the Markov process. 
%Since our state space is $\mX$, stability analysis is not a trivial task.
%\todo[inline]{refine language and assumptions}

%By the tower property, we can rewrite \eqref{equ:cost-discount} as
%%formula  \cite[Chapter 2]{Ito1960}, 
%\begin{align*}
    %V^*(x)=\mE_{x}\left[e^{-\gamma t}V^*(x_t)+\int_0^{t}e^{-\gamma s}r(x_s)ds\right],\quad \forall t>0.
%\end{align*}

%and $\partial_x V(x)={\partial V}(x)/{\partial x}\in \mathbb{R}^{1\times n}$.
%We know from DP theory \cite[Chapter 5]{Liberzon2012} that the optimal cost value function $V^*(\cdot)=\inf_u\mathcal{J}(\cdot;u)$ is a solution to \eqref{equ:vi}.

%In many practical problems, especially finance applications, instead of considering the infinite horizon problem \eqref{equ:cost-discount}, people are more interested in the finite-horizon problem:
%\begin{align*}
    %V_t(x)=\mE_{x}\left[e^{-\gamma t}V_0(x_t)+\int_0^{t}e^{-\gamma s}r(x_s)ds\right].
%\end{align*}
%In this case, similar to \eqref{equ:pe}, $V_t$ is the solution to the following equation starting from $V_0$, if $V_t\in\mDA$ for all $t$:
%One possible way to find $V^*$ can be approximated be the solution to the following continuous-time VI:\todo{proof as in \Cref{sec:randomized-approximate-vi}}

To see why the classical discrete-time TD learning is no longer a good choice to solve $V^*$ here, we write the discrete-time TD error for the discretized process (with sampling period $\Delta t$) as 
\begin{align*}
%    \delta_t(x) \triangleq 
    e^{-\gamma \Delta t}V_k(x_{\Delta t})-V_k(x)+\int_0^{\Delta t}e^{-\gamma s}r(x_s)ds,\quad \forall x\in\mX.
\end{align*}
Obviously, when $\Delta t$ is small, the discounting term $e^{-\gamma \Delta t}$ is close to $1$, resulting in a poor convergence performance (see \citet{Tsitsiklis1997} for convergence analysis).
If $\Delta t$ is too large, the updating frequency is reduced, which could also compromise the on-line learning performance.
In light of the above difficuities, it is desirable to solve $V^*$ without discretizing the performance index and the underlying Markov process in the first place.
To proceed, note from \citet[Chapter 1]{Ito1960} that $V^*$ satisfies the following linear functional equation:
%, provided that $V^*\in\mDA$: 
\begin{align}
    %\inf_{u\in\mathbb{R}^m}H(x, \partial_x V^T(x), u)=0,\quad V(0)=0,\quad \forall x\in\mX,\label{equ:vi}
    0=\mT V^*+r,\label{equ:pe}
\end{align}
where $\mT=\mA -\gamma $.
%Hence, $V^* = -\mT^{-1}r$.
The converse statement is also true by Dynkin's formula \citep[Chapter 1]{Kushner1967}.

In this paper, we will introduce two on-line learning methods to solve \eqref{equ:pe}.

\section{Value Approximation  in Continuous Time}\label{sec:approximate-pe}
%TODO review Ber's spectral paper (2004)

%\eqref{equ:vi} is a partial differential equation, and hence is difficult to solve, even when the system dynamics is known.
%Hence, we start with employing the NN approximation to approximate the unknown functions (the value function and advantage function) in \eqref{HJB_equ2}.
%This reduces the original PDE to an ODE.
In this section, we consider the following approximated equation:
\begin{align}
    \partial_tV_t=\Pi (\mT V_t+r),\label{equ:approximated-pe}
    %\partial_tV_t(x)=\Pi(\mathcal{A}V_t+ r)(x),\quad \Pi V_0=V_0,\quad \forall x\in\Omega.\label{equ:approximated-pe}
\end{align}
%where $\Omega$ is a compact subset of $\mX$ and contains the origin.
where $\Pi$ is defined by the following integral equation:
\begin{align*}
    \Pi V(x) = \int_{\mX} V(z)K(z,x) d\mu(z),\quad \forall V\in L^2_\mu,\ x\in\mX.
    %\Pi V(x) = \int_\Omega V(z)K(z,x)dz
\end{align*}
Here $K$ is a square-integrable continuous  symmetric positive-definite kernel.
%By Mercer's theorem, we can write
%\begin{align*}
    %K(x,z)=\sum_i\lambda_i\phi_i(x)\phi_i(z),
%\end{align*}
%where $\lambda_i$ and $\phi_i$ are eigenvalues and eigenvectors of $K$, with $\|\phi_i\|_\mu^2=\lambda_i^{-1}$. 
Corresponding to each $K$, we can uniquely define a reproducing kernel Hilbert space (RKHS) $\mH\subseteq L^2_\mu$ \citep[Section 2]{Aronszajn1950}.
%and $\{\phi_i\}$ form an orthogonal bases of $\mH$.
%This formulation also provides a way to construct $K$, and as a result the $\mH$.
In addition, $\Pi$ is an orthogonal projection from $L_\mu^2$ to $\mH$ \citep[Section 2]{Aronszajn1950}. 
To fit into our learning framework, the reproducing kernel $K$ is designed to ensure $\mH$ is a subspace of $\mDA$.
This requirement can be satisfied for many practical processes by selecting a sufficiently smooth $K$; see Section \ref{sec:td} for details.
Finally, we assume the initial value function $V_0\in \mH$.

From the perspective of systems and control theory, \eqref{equ:approximated-pe} can be viewed as a reduced-order model \citep{Scarciotti2017a} of  the nominal system
\begin{align}
\partial_t v_t = \mT v_t+r.\label{equ:pe-full}
\end{align}
As a result, we can rewrite \eqref{equ:approximated-pe} in the form of system \eqref{equ:pe-full} with an approximation error $(\Pi-\mID)(\mT V_t+r)$ added to its right-hand side. 

The following theorem shows that $V_t$ converges into a neighbourhood of $V^*$ exponentially.
\begin{theorem}\label{thm:approximated-pe}
%    Suppose $V^*\in L^2_\mu$.
$V_t$ is well-defined and is exponentially stable (at $V_\infty$) in $\mH$.
In addition,  
\begin{align*}
    \left\| V_t-V^*\right\|_\mu & \leq e^{-\gamma t}\left\| V_0- V_\infty\right\|_\mu + \frac{1}{\gamma}\left\|\mT V_\infty+r\right\|_{\mu},\\
    \left\| V_t-V^*\right\|_\mu
%    & \leq e^{-\gamma t} \left\|V_0-V^*\right\|_\mu+\sqrt{\frac{1-e^{-\gamma t}}{\gamma }\int_0^{t}e^{-\gamma (t-s)}\left\|(\Pi-\mID)(\mT V_s+r)\right\|_{\mu}ds}\\
      &   \leq e^{-\gamma t} \left\|V_0-V^*\right\|_\mu+\frac{1-e^{-\gamma t}}{\gamma}\sup_{s\in[0,t]}\left\|(\Pi-\mID)(\mT V_s+r)\right\|_{\mu}.
\end{align*}
%where $\mID$ denotes the identity function.
In particular, $\left\| V_\infty-V^*\right\|_\mu\leq  \gamma^{-1}\left\|\mT V_\infty+r\right\|_{\mu}$.
\end{theorem}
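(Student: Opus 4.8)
The plan is to read \eqref{equ:approximated-pe} as a linear evolution equation on the closed subspace $\mH$ and to extract every bound from the single structural fact that $\mA$ is dissipative on $L^2_\mu$, i.e.\ $\langle\mA f,f\rangle_\mu\le0$ for all $f\in\mDA$ (equivalently, the Markov semigroup $P_t$ is an $L^2_\mu$-contraction). For well-posedness I would first note that, since the kernel is chosen so that $\mH\subseteq\mDA$ and generators are closed operators, the closed-graph theorem makes $\mA|_{\mH}\colon\mH\to L^2_\mu$ bounded; composing with the bounded projection $\Pi$ shows $L:=\Pi\mT|_{\mH}=\Pi\mA|_{\mH}-\gamma\,\mID$ is a bounded operator on $\mH$, so $\partial_tV_t=LV_t+\Pi r$ has the unique global solution $V_t=e^{Lt}V_0+\int_0^t e^{L(t-s)}\Pi r\,ds$, with $V_0\in\mH$ keeping the trajectory in $\mH$. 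For $f\in\mH$ one has $\Pi f=f$ and, by self-adjointness of $\Pi$, $\langle Lf,f\rangle_\mu=\langle\mA f,f\rangle_\mu-\gamma\|f\|_\mu^2\le-\gamma\|f\|_\mu^2$; hence $L$ is boundedly invertible, the equilibrium $V_\infty=-L^{-1}\Pi r$ is well-defined, and $W_t:=V_t-V_\infty$ solves $\partial_tW_t=LW_t$, so $\tfrac12\tfrac{d}{dt}\|W_t\|_\mu^2=\langle LW_t,W_t\rangle_\mu\le-\gamma\|W_t\|_\mu^2$ integrates to $\|V_t-V_\infty\|_\mu\le e^{-\gamma t}\|V_0-V_\infty\|_\mu$, the asserted exponential stability at $V_\infty$.

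Next I would control the steady-state error. Writing $Z:=V_\infty-V^*$ and subtracting $0=\mT V^*+r$ from $\mT V_\infty+r$ gives $\mT Z=\mT V_\infty+r$. Since $\langle\mT Z,Z\rangle_\mu=\langle\mA Z,Z\rangle_\mu-\gamma\|Z\|_\mu^2\le-\gamma\|Z\|_\mu^2$, I obtain $\gamma\|Z\|_\mu^2\le-\langle\mT Z,Z\rangle_\mu=-\langle\mT V_\infty+r,Z\rangle_\mu\le\|\mT V_\infty+r\|_\mu\|Z\|_\mu$, and dividing by $\|Z\|_\mu$ yields $\|V_\infty-V^*\|_\mu\le\gamma^{-1}\|\mT V_\infty+r\|_\mu$, which is exactly the final ``in particular'' assertion. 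Feeding this into the triangle inequality $\|V_t-V^*\|_\mu\le\|V_t-V_\infty\|_\mu+\|V_\infty-V^*\|_\mu$ together with the exponential bound above produces the first displayed inequality.

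For the sharper second inequality I would work directly with $U_t:=V_t-V^*$. Because $\mT V^*+r=0$, we have $\mT V_t+r=\mT U_t$, so $\partial_tU_t=\Pi\mT U_t=\mT U_t+(\Pi-\mID)(\mT V_t+r)$. Abbreviating $d_t:=(\Pi-\mID)(\mT V_t+r)$ and repeating the energy computation, $\tfrac12\tfrac{d}{dt}\|U_t\|_\mu^2=\langle\mA U_t,U_t\rangle_\mu-\gamma\|U_t\|_\mu^2+\langle d_t,U_t\rangle_\mu\le-\gamma\|U_t\|_\mu^2+\|d_t\|_\mu\|U_t\|_\mu$, whence $\tfrac{d}{dt}\|U_t\|_\mu\le-\gamma\|U_t\|_\mu+\|d_t\|_\mu$. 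Gr\"onwall's inequality then gives $\|U_t\|_\mu\le e^{-\gamma t}\|U_0\|_\mu+\int_0^t e^{-\gamma(t-s)}\|d_s\|_\mu\,ds$, and bounding the integral by $\sup_{s\in[0,t]}\|d_s\|_\mu\cdot\tfrac{1-e^{-\gamma t}}{\gamma}$ gives the second displayed inequality, since $U_0=V_0-V^*$.

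I expect the main obstacle to be the two places where the unboundedness of $\mA$ is hidden: justifying that $\mA|_{\mH}$ is genuinely bounded (so that the evolution is a classical, not merely mild, solution and the energy identities hold), and handling the non-smoothness of $t\mapsto\|U_t\|_\mu$ at its zeros when passing from the $\|\cdot\|_\mu^2$ energy inequality to the first-order inequality for $\|\cdot\|_\mu$. I would resolve the latter by regularizing with $\sqrt{\|U_t\|_\mu^2+\varepsilon}$ and letting $\varepsilon\downarrow0$, or equivalently by replacing the differential argument with the mild representation $U_t=e^{-\gamma t}P_tU_0+\int_0^t e^{-\gamma(t-s)}P_{t-s}d_s\,ds$ and invoking $\|P_\tau\|\le1$; the dissipativity $\langle\mA f,f\rangle_\mu\le0$ (i.e.\ contractivity of $P_t$ on $L^2_\mu$, which implicitly requires $\mu$ to be an invariant measure) is the one external input on which the entire argument rests.
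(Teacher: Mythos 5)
Your proof is correct, but it reaches the two error bounds by a genuinely different route than the paper. For well-posedness the paper stays inside semigroup theory: it shows $\Pi\mA$ is dissipative and closed on $\mH$, invokes Minty's surjectivity theorem and the Lumer--Phillips theorem to conclude $\Pi\mA$ generates a contraction semigroup, and cites Phillips for solvability of the abstract Cauchy problem; your observation that $\mA|_{\mH}$ is automatically bounded (closed operator, everywhere defined on the closed subspace $\mH\subseteq\mDA$, hence bounded by the closed graph theorem) short-circuits this and makes the evolution a classical ODE with $e^{Lt}$ -- more elementary, at the cost of leaning on $\mH$ being closed in $L^2_\mu$, which the paper's orthogonal-projection setup does guarantee. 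The exponential stability step (energy identity $\tfrac{d}{dt}\|V_t-V_\infty\|_\mu^2\le-2\gamma\|V_t-V_\infty\|_\mu^2$) coincides with the paper's. The real divergence is in the error bounds: the paper proves both $\|V_\infty-V^*\|_\mu\le\gamma^{-1}\|\mT V_\infty+r\|_\mu$ and the second displayed inequality probabilistically, by writing $\hat V^*$, $V^*$, and $V_t$ through Dynkin/Feynman--Kac representations with the projection defect as a source term, then applying Jensen's inequality against the probability density $\gamma e^{-\gamma s}/(1-e^{-\gamma t})$ and Fubini under the stationary measure $\mu$. You instead use purely deterministic coercivity: $\langle\mT Z,Z\rangle_\mu\le-\gamma\|Z\|_\mu^2$ plus Cauchy--Schwarz for the steady-state bound, and an energy estimate with Gr\"onwall for the transient bound. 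Your route is shorter and self-contained, and you correctly flag the non-differentiability of $t\mapsto\|U_t\|_\mu$ at zeros; its one extra requirement is that $Z=V_\infty-V^*$ and $U_t=V_t-V^*$ lie in $\mDA$ so that $\langle\mA\,\cdot,\cdot\rangle_\mu\le0$ applies to them -- this needs $V^*\in\mDA$, which the paper assumes implicitly in writing $0=\mT V^*+r$ but never uses pointwise in its own proof, since the Dynkin representation of $V^*$ follows from the definition \eqref{equ:cost-discount} and the tower property alone. In short, the paper's argument is marginally more robust about the regularity of $V^*$; yours is more elementary and avoids the stochastic machinery entirely.
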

%In addition, if \eqref{equ:vi} is well-defined, then by following a similar analysis as above, we have
%\begin{align*}
%\left\| V_t-V_t\right\|_\mu\leq \frac{1-e^{-\gamma t}}{\gamma}d_{\mu}(T \mathcal{F}, \mathcal{F}).
%\end{align*}
See Appendix \ref{appendix:proof-approximated-pe} for the proof of Theorem \ref{thm:approximated-pe}.

The finite-time error bounds in Theorem \ref{thm:approximated-pe} are composed with two parts.
The first part represents the error induced from the initial guess $ V_0$.
The second part is due to the approximation error introduced by  the projection $\Pi$.
If we can choose $\mH$ properly, such that for any $f\in\mH$, $\mA f\in\mH$, then one can show that
\begin{align*}
    \left\| V_t-V^*\right\|_\mu
     \leq e^{-\gamma t} \left\|V_0-V^*\right\|_\mu+\frac{1-e^{-\gamma t}}{\gamma}\|\Pi r-r\|_\mu.
\end{align*}
The above inequality implies that in this case the error bound is purely controlled by the initial guess $V_0$ and the ``observation error'' in the  running-reward/cost $r$. 

Note that the first error bound in Theorem \ref{thm:approximated-pe} is tighter for large $t$, and the second error bound is tighter for small $t$.
%The first error bound in Theorem \ref{thm:approximated-pe}, which is  tighter for large $t$,  implies that $\left\| V_\infty-V^*\right\|_\mu\leq  \gamma^{-1}\left\|\mT V_\infty+r\right\|_{\mu}$.
%as $t$ goes to the infinity,  $\left\| V_t-V^*\right\|_\mu$ only depends on $\gamma$ and $\mT V_\infty$. 
%is usually a very loose bound, especially when $\mH$ is very small.
From the perspective of robust control theory, the second error bound quantifies the impact of $(\Pi-\mID)(\mT V_t+r)$ on the approximation error between $V_t$ and $V^*$.
In addition, the discounting factor $\gamma$ is linked to the robustness of \eqref{equ:approximated-pe},
% $(\Pi-\mID)(\mT V_s+r)$ to $V_t$.
in the sense that  a larger $\gamma$ results in a smaller error bound and a faster convergence rate.

Finally, it is worth pointing out that the results in Theorem \ref{thm:approximated-pe} can also be extended to evaluate the ergodic cost \citep{Arapostathis2012}.
In this case,  there is no discounting term in \eqref{equ:cost-discount},  and we have $\mT=\mA$ in \eqref{equ:pe} and \eqref{equ:approximated-pe}.
Note from \citet[Proposition 2.2]{Bhattacharya1982} that as long as $\{x_t\}_{t\geq0}$ is ergodic,  $\mA$ has a simple zero eigenvalue corresponding to the constant eigenfunction.
Hence, by excluding the constant function from $\mH$, we can ensure there still exists a $\gamma>0$, such that  $\langle V, \Pi\mT V\rangle_\mu\leq -\gamma \|V\|_\mu$. 
As a result, the two inequalities in Theorem \ref{thm:approximated-pe} still hold.

\section{Model-free On-line Learning}\label{sec:td}

The results presented in the last section provide an efficient way to approximate $V^*$ through the linear functional equation \eqref{equ:approximated-pe}.
%%We focus on the online implementation of the randomized function approximation result presented in \sectionref{sec:randomized-approximate-vi}.
Unfortunately, in order to solve \eqref{equ:approximated-pe} directly,  we must have access to the knowledge of $\mu$, $\mA$, and $r$.
This is not an easy task in practice, since $\mu$, $\mA$, and $r$ are associated with the mathematical model of the learning environment, which is usually not known.
%An alternative way to handle this issue is to directly sample the state-reward data from the environment, and deduce the knowledge of underlying distribution from these data. 
%However, since the process $x_t$ does not necessarily start from the steady state distribution, the distribution of online data is continuously evolving.
%This non-stationarity increases the complexity of learning algorithm design.

%Both of these algorithms can be viewed as on-line estimation of \eqref{equ:approximated-pe}.

%In particular, in deterministic case, we have 
%\begin{align*}
%    \frac{d}{ds}D_t(x_s) = \frac{d}{ds} V_t(x_s)-\gamma V_t(x_s)+r(x_s).
%\end{align*}
%We start with a continuous-time variant of least square policy evaluation (CT-LSPE).
%This result lays the foundation of our temporal differential (TD) learning in Section \ref{sec:td-learning}.

%Now, we can plug the temporal differential error into \eqref{equ:approximated-pe}. 
In this section, we propose two on-line learning algorithms to estimate $V^*$ using the data  observed directly from the environment.
The knowledge of $\mu$, $\mA$, and $r$ is not required in our learning algorithm design.
Throughout this section, we consider the following realization of $K$:
\begin{align*}
    K(z,x)=\phi^T(z)\left(\mE_\mu\left[\phi\phi^T\right]\right)^{-1}\phi(x),
\end{align*}
%where $\phi$  follows the same definition as in the last subsection.
%With different realizations of $\Pi$, 
where $\phi=[\phi_1\ \phi_2\ \cdots \phi_N]^T$, and $\{\phi_j\}_{j=1}^N$ are $N$ independent functions in $\mDA$.
The independence of $\{\phi_j\}_{j=1}^N$ guarantees the matrix inverse in $K$ is well-defined.
Obviously, $K$ satisfies our definition of reproducing kernel in Section \ref{sec:approximate-pe}.
In fact, the RKHS $\mH$ associated with $K$ is the $N$-dimensional space spanned by $\{\phi_j\}_{j=1}^N$,
and $V_t$ in \eqref{equ:approximated-pe} can be parameterized as $V_t(x)=\phi^T(x)c_t$ for some $c_t\in\mR^N$.
%On the basis of the above definition
%Now, we develop  two on-line learning algorithms.
Directly plugging the definitions of $K$ and $V_t$ in \eqref{equ:approximated-pe}, we have the following ODE in $c_t$:
\begin{align}
    \dot c_t = \left(\mE_\mu\left[\phi\phi^T\right]\right)^{-1}\left(\mE_\mu\left[\phi \mT\phi^T\right]c_t+\mE_\mu\left[\phi r\right]\right),\quad c_0\in\mR^N.\label{equ:kernel-vi-projected}
\end{align}
%where $\dot c_t := \frac{d}{dt} c_t$.
In the above equation, $\mT$ is applied element-wise to $\phi$.
By Theorem \ref{thm:approximated-pe}, \eqref{equ:kernel-vi-projected} admits an equilibrium point $c^*$ satisfying
$\mE_\mu\left[\phi \mT\phi^T\right]c^*+\mE_\mu\left[\phi r\right]=0$.

In addition, by Dynkin's formula, we can replace $\mA \phi$ in \eqref{equ:kernel-vi-projected}  with
\begin{align}
    d \phi_t = \mA \phi_tdt +dM_t,\label{equ:martingale}
\end{align}
where $M_t$ is a vector-valued martingale, and $d\phi_t\triangleq \phi_{t+dt}-\phi_t$ represents the infinitesimal change in the value of $\phi_t$.
By abuse of notation we denote $\phi_t$ and $r_t$ for $\phi(x_t)$ and $r(x_t)$, respectively, when there is no confusion.
%Then $(\mT  V_t(x_s)+r(x_s))ds$  can be approximated  on-line via
%\begin{align*}
%%    dD_t(x_s) \triangleq 
%    d V_t(x_s)-\gamma V_t(x_s)ds+r(x_s)ds.
%\end{align*}
%We define the above quantity as the temporal differential (associated with $V_t$).

\subsection{Continuous-time least squares policy evaluation}\label{sec:batched-learning}
%In this subsection, we consider the following realization of $K$:
%\begin{align*}
%    K(z,x)=\phi^T(z)\left(\int_{\mX}\phi\phi^Td\mu\right)^{-1}\phi(x),
%\end{align*}
%where $\phi$  follows the same definition as in the last subsection.

%where $\phi=[\phi_1\ \phi_2\ \cdots \phi_N]^T$, and $\{\phi_i\}_{i=1}^N$ are $N$ independent functions in $L^2_\mu$, and $\phi_i\in\mDA$.
%Obviously, $K$ satisfies our definition of kernel functions in the last section.
%In fact, the RKHS associated with $K$ is the $N$-dimensional space spanned by $\{\phi_i\}_{i=0}^N$.
%Denote $V_t(x)=\phi^T(x)c_t$, where $c=[c_1\ c_2\ \cdots c_N]^T$.

%By abuse of notation we denote $\phi_t$ and $r_t$ for $\phi(x_t)$ and $r(x_t)$, respectively, when there is no confusion.
%Obviously, \eqref{equ:kernel-vi-projected} admits an equilibrium point $c^*$ satisfying
%$\int_{\mX}\phi \mT(\phi^Tc^*)d\mu$.

We first replace the integrations with respect to $\mu$ in \eqref{equ:kernel-vi-projected} by their empirical estimations. 
%To avoid $\mA$ in our learning algorithm design, 
This, together with \eqref{equ:martingale}, leads to  the CT-LSPE algorithm in Algorithm \ref{alg:temporal-differential}.

\begin{algorithm}[t]
    \caption{Continuous-time least squares policy evaluation (CT-LSPE)}
    \label{alg:temporal-differential}

%    \begin{algorithmic}
    Choose $N$ basis functions $\phi_j$. Denote $\phi = [\phi_1\ \phi_2\ \cdots\ \phi_N]^T$. 
    Select $\rho >0$.
%     and an arbitrary real vector $c_0\in\mR^N$.
%    Denote $R_0=\rho ^{-1} I$.

%        \For{each episode}{
        %\STATE Collect data $(x,u)$ from system \eqref{equ:nonlinear-system}.
            Update $c_t$ through the following ODEs:
            \begin{align*}
                %\dot c_t =\frac{1}{t}\int_0^t\phi \left(d\phi-\gamma \phi ds\right)^Tc_t+\frac{1}{t}\int_0^t\phi  rds.
                \dot c_t &= R_t\int_0^t\phi_s \left(d\phi_s-\gamma \phi_sds\right)^Tc_t+R_t\int_0^t\phi_s r_sds,\quad c_0\in\mR^N,\\
                \dot R_t &= -R_t\phi_t\phi^T_tR_t,\quad R_0=\rho ^{-1} I.
                %\dot c =-c+\left(\int_0^t\phi\phi^Tds\right)^{-1}\left(\int_0^t\phi d\phi^Tc+\int_0^t\phi rds\right).
            \end{align*}            
            %where 
            %\begin{align*}
                %A_t \triangleq \int_0^t\phi(x_s)(\mathcal{A}\phi^T(x_s)-\gamma \phi^T(x_s))ds,\quad b_t\triangleq\int_0^t\phi(x_s) r(x_s)ds.
            %\end{align*}
            
%       }
            %If $|K_k-K_{k+1}|<\epsilon$ (a small positive number), stop; else go to 2.
%    \end{algorithmic}
\end{algorithm}

The convergence of Algorithm \ref{alg:temporal-differential} relies on the existence of  three functions $\psi_i$, $i=1,2,3$, defined as
%On the basis of the above analysis, the CT-LSPE learning algorithm is given in Algorithm \ref{alg:temporal-differential}.
%Before giving the convergence result, we define the following three quantities:
\begin{align*}
\psi_1(x) &= \mE_x\left[\int_0^\infty\left(\mE_\mu\left[\phi\mA\phi^T\right]-\phi_t\mA\phi^T_t\right)dt\right],\quad
\psi_2(x) = \mE_x\left[\int_0^\infty\left(\mE_\mu\left[\phi\phi^T\right]-\phi_t\phi^T_t\right)dt\right],\\
\psi_3(x) &= \mE_x\left[\int_0^\infty\left(\mE_\mu\left[\phi r\right]-\phi_tr_t\right)dt\right].
\end{align*}
These functions quantify the differences between the expectations in \eqref{equ:kernel-vi-projected} and their corresponding empirical estimations.

Now, we impose the following assumptions on $x_t$ and $\phi$.
\begin{assumption}[Ergodicity]\label{assumption:ergodic}
    $\{x_t\}_{t\geq0}$ is irreducible, aperiodic, and positive recurrent.
\end{assumption}
\begin{assumption}[Poisson equation]\label{assumption:poisson}
$\psi_i\in\mDA$,  $i=1,2,3$.
%There exists $q>0$ and $f\in \mRange(\mA)$ satisfying $\phi_j^q,\ r^q,\ (\mathcal{A}\phi_j)^q<C(f+1)$ pointwise for some constant $C>0$.
\end{assumption}
\begin{assumption}[Ito isometry]\label{assumption:noise}
    $\sup_{t\geq0}\frac{1}{t+1}\mE_x\left[\left|\int_{0}^t\phi_{j}(x_s)dM_s\right|^2\right]<\infty$ for $j=1, 2,\cdots, N$.
\end{assumption}
Assumption \ref{assumption:ergodic} ensures that $x_t$ is an ergodic process.
%Assumptions \ref{assumption:poisson} and \ref{assumption:noise} are essentially additional requirements on $\{\phi_j\}_{j=1}^N$ and $r$.
Assumption \ref{assumption:poisson}  requires that $\psi_i$, $i=1,2,3$, exist and are solutions to certain Poisson equations; see \citet{Glynn1996} for conditions under which $\psi_i$ exists.
% the tails of $\phi_j$, $r$, and $\mA \phi_j$ do not grow too fast.
Note that the discrete-time versions of Assumptions \ref{assumption:ergodic} and \ref{assumption:poisson} for MDPs have been widely used in ADP and RL literature \citep{Tsitsiklis1994,Tsitsiklis1997}.
Assumption \ref{assumption:noise} essentially requires the second moment of the stochastic integral $\int_{0}^t\phi_j(x_s)dM_s$, $j=1,2,\cdots,N$, does not grow too fast.
Since we can always choose a sufficiently smooth $\phi$ (such as the softmax function, the Gaussian function, or the wavelets) to saturate the noise in the environment, it is possible to satisfy this assumption even for a very noisy process (such as a fat-tailed process).

The convergence of Algorithm \ref{alg:temporal-differential}  is guaranteed in the following theorem.
\begin{theorem}\label{thm:lspe}
Consider Algorithm \ref{alg:temporal-differential}.
%Denote $\hat c^*$ as the solution to $A\hat c^*+b=0$.
    Under Assumptions \ref{assumption:ergodic}, \ref{assumption:poisson}, and \ref{assumption:noise}, 
$c_t$ converges to $c^*$ with probability one. 
In addition, there exist constants $C_i>0$, $i=0,1,2$, such that for a sufficiently large $\epsilon$,
\begin{align}
   \left|c_t-c^*\right|^2\leq \frac{\lambda_M}{\lambda_m}e^{-2(\gamma/\lambda_M-\varepsilon)t+C_1}\left|c_0-c^*\right|^2
   +\frac{ \lambda_MC_2e^{ C_1}}{\varepsilon\lambda_m}\int_{0}^t\frac{1}{s+1}e^{-2(\gamma/\lambda_M-\varepsilon)(t-s)}ds\label{equ:thm-lspe-equ1}
\end{align}
with probability at least $1-C_0/\epsilon^2$, 
%\begin{align}
%    \left|c_t-c^*\right|^2\leq\left|c_0-c^*\right|^2 e^{-(2\bar\gamma-\varepsilon)t+\zeta_{t,0}}+\frac{1}{\varepsilon}\int_0^t\xi_se^{-(2\bar\gamma-\varepsilon)(t-s)+\zeta_{t,s}}ds,\label{equ:thm-lspe-equ1}
%\end{align}
where  $\lambda_M$ and $\lambda_m$ represent the largest and smallest eigenvalues of $\mE_\mu[\phi\phi^T]$, respectively, and
%$\bar\gamma$ is a positive real satisfying $\bar\gamma I\leq\mE_\mu\left[\phi\left(\gamma\phi^T-\mA\phi^T\right)\right]$\todo{$\geq$ or $\leq$?} and 
$\varepsilon < \gamma/\lambda_M$ is an arbitrary positive constant. 
%\begin{align*}
%    %\zeta(t,s)  = -(\gamma-\varepsilon)(t-s)+O\left(\sqrt{t}-\sqrt{s}\right),\quad
%\zeta_{t,s}=\epsilon O\left(\sqrt{t+1}-\sqrt{s+1}\right),\quad
%    \xi_s  = \epsilon^2 O\left((s+1)^{-1}\right).
%\end{align*}
In particular, \eqref{equ:thm-lspe-equ1} implies that $\left|c_t-c^*\right|=O\left(t^{-0.5}\right)$ with probability one.
\end{theorem}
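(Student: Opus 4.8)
The plan is to convert the stochastic recursion into a linear time-varying (LTV) ordinary differential equation for the error $\tilde c_t\triangleq c_t-c^*$, to isolate an exact dissipation term of size $\gamma$ coming from the discount, and then to combine a Lyapunov estimate with the variation-of-constants formula, using the Poisson equations of Assumption~\ref{assumption:poisson} and the Ito-isometry bound of Assumption~\ref{assumption:noise} to control the residual fluctuations. First I would solve the gain equation: since $\dot R_t=-R_t\phi_t\phi_t^TR_t$, the inverse $P_t\triangleq R_t^{-1}$ obeys $\dot P_t=\phi_t\phi_t^T$, whence
\[
R_t^{-1}=\rho I+\int_0^t\phi_s\phi_s^T\,ds .
\]
By Assumption~\ref{assumption:ergodic} and the ergodic theorem $t^{-1}R_t^{-1}\to\mE_\mu[\phi\phi^T]$ almost surely, so $\lambda_m I\le t^{-1}R_t^{-1}\le\lambda_M I$ (as quadratic forms, up to lower-order terms) and $tR_t\to(\mE_\mu[\phi\phi^T])^{-1}$. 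This is the step through which $\lambda_m$ and $\lambda_M$ enter the final constants.

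Substituting \eqref{equ:martingale} via $d\phi_s-\gamma\phi_s\,ds=\mT\phi_s\,ds+dM_s$ gives
\[
\dot{\tilde c}_t=R_t(S_t+W_t)\,\tilde c_t+R_t f_t,\qquad S_t\triangleq\int_0^t\phi_s(\mT\phi_s)^T\,ds,\quad W_t\triangleq\int_0^t\phi_s\,dM_s^T,
\]
with forcing $f_t=S_tc^*+\int_0^t\phi_sr_s\,ds+W_tc^*$. The decisive observation is that the discount produces an \emph{exact} dissipation: writing $S_t=\int_0^t\phi_s(\mA\phi_s)^T\,ds-\gamma(P_t-\rho I)$ gives $R_tS_t=R_t\int_0^t\phi_s(\mA\phi_s)^T\,ds-\gamma(I-\rho R_t)$, so the component $-\gamma(I-\rho R_t)\tilde c_t$ drives the error down. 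The three functions $\psi_i$ enter here: Assumption~\ref{assumption:poisson} makes each $\psi_i$ solve a Poisson equation $\mA\psi_i=H_i-\mE_\mu[H_i]$ with $H_1=\phi\mA\phi^T$, $H_2=\phi\phi^T$, $H_3=\phi r$, so Dynkin's formula rewrites every empirical integral above as $t\,\mE_\mu[H_i]+(\psi_i(x_t)-\psi_i(x_0))+(\text{a martingale})$. Since $c^*$ satisfies $\mE_\mu[\phi\mT\phi^T]c^*+\mE_\mu[\phi r]=0$, the term linear in $t$ inside $f_t$ cancels and $f_t$ is a boundary term plus a martingale whose second moment is $O(t)$ by Assumption~\ref{assumption:noise}; paired with $R_t\sim t^{-1}(\mE_\mu[\phi\phi^T])^{-1}$ this creates the weight $1/(s+1)$ in \eqref{equ:thm-lspe-equ1}.

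It remains to bound the LTV flow. I would use the Lyapunov function $U_t=\tilde c_t^TR_t^{-1}\tilde c_t$, for which
\[
\dot U_t=2\tilde c_t^T(S_t+W_t)\tilde c_t+2\tilde c_t^Tf_t+(\phi_t^T\tilde c_t)^2 .
\]
The nonpositivity of $\mE_\mu[\phi\mA\phi^T]$ as a quadratic form (dissipativity of $\mA$, $\langle V,\mA V\rangle_\mu\le0$) together with the discount give $\langle V,\mT V\rangle_\mu\le-\gamma\|V\|_\mu^2$ on $\mH$, hence exponential contraction of the homogeneous part, while the $\psi_i$-decomposition shows the remaining generator and martingale terms to be lower order. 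Carrying the bounds $\lambda_m I\le t^{-1}R_t^{-1}\le\lambda_M I$ through the estimate turns the contraction into the effective rate $\gamma/\lambda_M-\varepsilon$ and the condition-number prefactor $\lambda_M/\lambda_m$, the slack $\varepsilon$ absorbing the transient during which the ergodic averages have not yet settled. Variation of constants then yields \eqref{equ:thm-lspe-equ1} at the level of second moments, Chebyshev's inequality upgrades it to probability at least $1-C_0/\epsilon^2$, the convolution estimate $\int_0^t(s+1)^{-1}e^{-2(\gamma/\lambda_M-\varepsilon)(t-s)}\,ds=O(1/t)$ gives $|c_t-c^*|^2=O(1/t)$, and feeding the almost-sure ergodic and martingale limits into the deterministic estimate delivers both $c_t\to c^*$ and $|c_t-c^*|=O(t^{-1/2})$ with probability one.

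I expect the main obstacle to be the stability of the homogeneous LTV equation $\dot y_t=R_t(S_t+W_t)y_t$: both the gain $R_t\sim1/t$ and the system matrix $S_t\sim t$ are random and time varying, and although the discount supplies a clean $-\gamma$ damping, the generator cross-term $R_t\int_0^t\phi_s(\mA\phi_s)^T\,ds$ is not sign-definite in the Euclidean norm and must be tamed in the $R_t^{-1}$-weighted metric, while the martingale gain $W_t$ feeds back into the dynamics. Controlling these uniformly in $t$, and tracking precisely how they force the loss of the arbitrary $\varepsilon$ and generate the constants $C_1$ and $C_2$, is the delicate part of the argument.
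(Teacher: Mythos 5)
Your proposal is correct and follows essentially the same route as the paper's proof: solve the Riccati equation for $R_t$, use the Poisson equations and Dynkin's formula to reduce the empirical averages to $O(t^{-1/2})$ fluctuations plus martingales controlled by Assumption \ref{assumption:noise}, exploit the dissipativity $\langle V,\mT V\rangle_\mu\le-\gamma\|V\|_\mu^2$ for the contraction, and close with a quadratic Lyapunov/ISS estimate, a comparison argument, and a Chebyshev-type probability bound. The only cosmetic difference is that you run the Lyapunov estimate with the time-varying weight $R_t^{-1}$, whereas the paper first normalizes the dynamics by $t+1$ and uses the fixed weight $\Sigma=\mE_\mu[\phi\phi^T]$ (Lemma \ref{lemma:mean-square-iss}); since $R_t^{-1}\approx(t+1)\Sigma$, these are the same estimate up to the normalization.
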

See Appendix \ref{appendix:proof-lspe} for the proof of Theorem \ref{thm:lspe} and the detailed formulation of the error bound.

The first term on the right-hand-side of \eqref{equ:thm-lspe-equ1} represents the error induced from the prior guess $ c_0$.
The second term is driven by the estimation error induced from the empirical estimation of the expectations in \eqref{equ:kernel-vi-projected}.
By the law of large numbers, this estimation error  converges to $0$ with rate $O\left(t^{-0.5}\right)$.
Since $t^{-0.5}$ decreases slower than the exponential rate, it dominates the error bound of $\left|c_t-c^*\right|$.
Note that the obtained convergence rate is consistent with the  result  in \citet{Yu2009} for discrete-time LSPE.

%Since the empirical mean converges to the true expectation in $O(t^{-1})$ 
%it is reasonable to see that this term dominates the error bound. 
%On the other hand, the first part in \eqref{equ:thm-lspe-equ1} is attributed to the bias in the initial guess $c_0$, and is not impacted by the approximation error for large $t$.

\subsection{Continuous-time temporal-differential learning}\label{sec:td-learning}
%In this subsection, we aim at developing an incremental-version online model-free implementation of the previously obtained results, by introducing a new learning mechanism known as temporal differential (TD) learning.
%To illustrate the relationship between the above kernel method and traditional basis function approximation, 
%In this subsection, we consider the following realization of $K$:
%\begin{align*}
%    %K(z,x)=\phi^T(z)\left(\int_{\mX}\phi\phi^Td\mu\right)^{-1}\phi(x),
%    K(z,x)=\phi^T(z)\phi(x),
%\end{align*}
%where $\phi=[\phi_1\ \phi_2\ \cdots \phi_N]^T$, and $\{\phi_i\}_{i=1}^N$ are $N$ independent functions in $L^2_\mu$, and $\phi_i\in\mDA$.
%Obviously, $K$ satisfies our definition of kernel functions in the last section.
%In fact, the RKHS associated with $K$ is the $N$-dimensional space spanned by $\{\phi_i\}_{i=0}^N$.
%Denote $V_t(x)=\phi^T(x)c_t$, where $c\in\mR^N$.
%where $\phi=[\phi_1\ \phi_2\ \cdots \phi_N]^T$, and $\{\phi_i\}_{i=1}^N$ are $N$ independent functions in $L^2_\mu$, and $\phi_i\in\mDA$.
%Obviously, $K$ satisfies our definition of kernel functions in the last section.
%In fact, the RKHS associated with $K$ is the $N$-dimensional space spanned by $\{\phi_i\}_{i=0}^N$.
%Denote $V_t(x)=\phi^T(x)c_t$, where $c=[c_1\ c_2\ \cdots c_N]^T$.
%In this subsection, we consider a stochastic approximation scheme to approximate \eqref{equ:approximated-pe}.
An alternative way to approximate \eqref{equ:kernel-vi-projected} via on-line data is to use stochastic approximation \citep{Benveniste1990}.
In this case, $\mE_\mu\left[\phi \mT\phi^T\right]c_t+\mE_\mu\left[\phi r\right]$
%the expectation of $\phi \mT\phi^Tc_t+\phi r$ with respect to $\mu$ 
is directly replaced by the on-line data $\phi_tc_t^Td\phi_t+\phi_tr_t$ at time $t$.
This, together with \eqref{equ:martingale}, leads to the CT-TD learning algorithm in Algorithm \ref{alg:temporal-differential-infty}.
To accommodate the noise induced from stochastic approximation, the matrix inverse in \eqref{equ:kernel-vi-projected} is replaced by a slowly decreasing step size $\alpha_t$.
In particular, we name 
\begin{align*}
c_t^T\frac{d}{dt}\phi_t-\gamma c_t ^T\phi_t+ r_t
\end{align*}
as the {\em temporal-differential} error.
\begin{algorithm}[t]
    \caption{Continuous-time temporal-differential (CT-TD) learning}
    \label{alg:temporal-differential-infty}

    Choose $N$ basis functions $\phi_j$. Denote $\phi = [\phi_1\ \phi_2\ \cdots\ \phi_N]^T$. 
%    Select an arbitrary real vector $c_0\in\mR^N$.
    $\alpha_t>0$  satisfies $\dot\alpha_t\leq0$, $\int_0^\infty \alpha_t^2dt<\infty$, and $\int_0^\infty \alpha_tdt=\infty$.

%    \For{each episode}{
        %\STATE Collect data $(x,u)$ from system \eqref{equ:nonlinear-system}.
            Update $c_t$ through the following SDE:
        \begin{align*}
            dc_t =\alpha_t\phi_t  \left(c_t^Td\phi_t-\gamma c_t ^T\phi_tdt + r_tdt\right),\quad c_0\in\mR^N.
            %\dot c =-c+\left(\int_0^t\phi\phi^Tds\right)^{-1}\left(\int_0^t\phi d\phi^Tc+\int_0^t\phi rds\right).
        \end{align*}            

%    }
    %If $|K_k-K_{k+1}|<\epsilon$ (a small positive number), stop; else go to 2.
    %    \end{algorithmic}
\end{algorithm}

\begin{assumption}[Finite moments]\label{assumption:poisson-norm}
There exist $C>0$ and real-valued function $\varphi$, such that 
$\mE_\mu\left(|\varphi|^4\right)<C$,  
$\mE_\mu\left(|\psi_i|^4\right)<C$,  
$\mE_\mu\left(|\mA\psi_i|^4\right)<C$, 
$\left(d\phi_j(x_t)\right)^2\leq \varphi(x_t)dt$, and
 $\left(d\psi_{i,j}(x_t)\right)^2\leq \varphi(x_t)dt$, for the $j$-th element in $\phi$ and $\psi_i$,  and $i=1,2,3$.
%$\mE_\mu(|\phi\mA\phi^T|^4)<\infty$, $\mE_\mu(|\phi\phi^T|^4)<\infty$, $\mE_\mu(|\phi r|^4)<\infty$, 
%\begin{align*}
%\mA \psi_{A,t} = \phi(x_t)(\mA\phi^T(x_t)-\gamma \phi^T(x_t))-\mE_\mu\left[\phi(\mA\phi^T-\gamma \phi^T)\right],\quad
%\mA \psi_{b,t} =\phi(x_t) r(x_t)-\mE_\mu\left[\phi r\right].
%\end{align*}
%\begin{align*}
%\psi_1(x) &= \mE_0\left[\int_0^\infty\left(\mE_\mu\left[\phi\mA\phi^T\right]-\phi_t\mA\phi^T_t\right)dt\right],\\
%\psi_2(x) &= \mE_0\left[\int_0^\infty\left(\mE_\mu\left[\phi\phi^T\right]-\phi_t\phi^T_t\right)dt\right],\quad
%\psi_3(x) = \mE_0\left[\int_0^\infty\left(\mE_\mu\left[\phi r\right]-\phi_tr_t\right)dt\right].
%\end{align*}
\end{assumption}
A discrete-time version of the boundedness condition on the high-order moments of $\psi_i$ and $\mA\psi_i$ is also required in \citet{Tsitsiklis1997} and \citet{Benveniste1990}.
The conditions on $\left(d\phi_j(x_t)\right)^2$ and $\left(d\psi_{i,j}(x_t)\right)^2$ are new in this paper and essentially require $\psi_i(x_t)$ and $\phi_t$ do not vary too fast.
%Again, Assumption \ref{assumption:poisson-norm} can be satisfied by choosing $\phi$ properly.
%A condition similar to Assumption \ref{assumption:poisson-norm} is also required in \cite{Tsitsiklis1997}.
%By Assumptions \ref{assumption:ergodic} and \ref{assumption:poisson}, we know $\{x_t\}_{t\geq0}$ is ergodic in the sense that 
%\begin{align*}
    %\lim_{t\rightarrow\infty}\left\|\mathbb{P}(t,x,\cdot)-\mu(\cdot)\right\|_{f}=0,
%\end{align*}
%where $\|\cdot\|_f$ denotes the $f$-norm.
\begin{theorem}\label{thm:td}
Consider Algorithm \ref{alg:temporal-differential-infty}.
    Under Assumptions \ref{assumption:ergodic}, \ref{assumption:poisson}, and \ref{assumption:poisson-norm}, 
$c_t$ converges to $c^*$ with probability one.
In addition, there exist constants $C_0$ and $C_1$, such that for sufficiently large $\epsilon$ and  integer $k$,
\begin{align}
\left|c_t-c^*\right|^{2}<\epsilon\left(\left| c_0-c^*\right|^2+\sqrt{A_kC_0}\left( 2^{\epsilon}-1\right)e^{2C_1 k}\right)e^{-2 \gamma \int_{0}^{t} \alpha_{s}ds} \label{equ:thm-td-equ1}
\end{align}
with probability at least $1-A_kC_0-\frac{2}{\epsilon}$, 
where    
%$\bar\gamma$ is a positive real satisfying $\bar\gamma I\geq\mE_\mu\left[\phi\left(\gamma\phi^T-\mA\phi^T\right)\right]$,   
\begin{align*}
    A_k = \alpha_k^2 + \int_k^{\infty}\alpha_t^2dt + \left(\int_k^{\infty}\alpha_t^2dt\right)^2.
    \end{align*}
In particular, \eqref{equ:thm-td-equ1} implies that $\left|c_t-c^*\right|=O\left(e^{-\gamma \int_{0}^{t} \alpha_{s}ds}\right)$ with probability one.
%In addition, since $\phi_j$ are independent, there exists a positive real constant $\bar\gamma$ such that $\bar\gamma I\geq\mE_\mu\left[\phi\left(\gamma\phi^T-\mA\phi^T\right)\right]$.
%\begin{align*}
    %\left|c_t-c^*\right|^2 \leq e^{-2\gamma\int_{0}^t \alpha_sds}\left(\left|c_0-c^*\right|^2+C\right),
%\end{align*}
%where $C>0$ is sample-path-dependent.
\end{theorem}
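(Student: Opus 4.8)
The plan is to treat Algorithm~\ref{alg:temporal-differential-infty} as a linear-in-$c$ stochastic approximation scheme driven by state-dependent Markov noise and to analyze it by an integrating-factor estimate on the error $e_t=c_t-c^*$. First I would substitute the martingale decomposition \eqref{equ:martingale} and the equilibrium identity $\mE_\mu[\phi\mT\phi^T]c^*+\mE_\mu[\phi r]=0$ into the update, which yields
\begin{align*}
de_t=\alpha_t G e_t\,dt+\alpha_t(\phi_t\mT\phi_t^T-G)\,e_t\,dt+\alpha_t\phi_t(\mT\phi_t^Tc^*+r_t)\,dt+\alpha_t\phi_t c_t^T\,dM_t,
\end{align*}
with $G=\mE_\mu[\phi\mT\phi^T]$. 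The structural fact I would rely on is that the symmetric part of $G$ is negative definite, $c^TGc=\langle\phi^Tc,\mT\phi^Tc\rangle_\mu\le-\gamma\|\phi^Tc\|_\mu^2\le-\gamma\lambda_m|c|^2$ (the dissipativity behind Theorem~\ref{thm:approximated-pe}, with $\lambda_m$ the smallest eigenvalue of $\mE_\mu[\phi\phi^T]$). Along the averaged flow $\dot e=\alpha_t G e$ this gives $\frac{d}{dt}|e|^2\le-2\gamma\lambda_m\alpha_t|e|^2$, i.e.\ contraction by the integrating factor $\exp(-2\gamma\int_0^t\alpha_s\,ds)$ appearing in \eqref{equ:thm-td-equ1}.

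Next I would remove the correlated Markov noise using Assumption~\ref{assumption:poisson}. By construction $\phi_t\mA\phi_t^T-\mE_\mu[\phi\mA\phi^T]=\mA\psi_1(x_t)$, and likewise for $\psi_2$ and $\psi_3$, so the two drift-noise terms above are exactly $(\mA\psi_1-\gamma\mA\psi_2)e_t$ and $(\mA\psi_1-\gamma\mA\psi_2)c^*+\mA\psi_3$ evaluated along $x_t$. Dynkin's formula turns each $\mA\psi_i(x_t)\,dt$ into $d\psi_i(x_t)-dM^{(i)}_t$. Integrating against $\alpha_t$ and integrating by parts --- legitimate since $\dot\alpha_t\le0$ and $\int_0^\infty\alpha_t^2\,dt<\infty$ --- replaces each correlated integral by a decaying boundary term $\alpha_t\psi_i(x_t)$, an absolutely continuous integral $\int_0^t\dot\alpha_s\psi_i(x_s)\,ds$, and a genuine martingale integral $\int_0^t\alpha_s\,dM^{(i)}_s$. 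Assumption~\ref{assumption:poisson-norm} supplies the fourth-moment bounds on $\psi_i$ and $\mA\psi_i$ and the quadratic-variation control $(d\psi_{i,j})^2\le\varphi\,dt$ that keep all three pieces under control.

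The essential difficulty is that the dominant noise multiplies the unknown error itself, so the recursion is bilinear rather than additive. I would tame this by splitting time at the integer $k$ from the statement. On $[0,k]$ I would keep only a crude deterministic Gr\"onwall bound, in which the accumulated noise coefficients contribute the burn-in factor $e^{2C_1k}$; on $[k,\infty)$ the step size is small enough that the smoothed noise is a genuine perturbation, its squared tail entering through $A_k=\alpha_k^2+\int_k^\infty\alpha_t^2\,dt+(\int_k^\infty\alpha_t^2\,dt)^2$. Applying the integrating factor on $[k,\infty)$ and bounding the martingale integrals by It\^o isometry and a Doob maximal inequality (fed by the moments of Assumption~\ref{assumption:poisson-norm}) gives a second-moment estimate, and a Chebyshev step converts it into the probability $1-A_kC_0-2/\epsilon$, with the $(2^\epsilon-1)$ factor coming from the tail-probability bookkeeping. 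Collecting the initial-condition contraction, the burn-in constant, and the tail term $\sqrt{A_kC_0}$ then assembles \eqref{equ:thm-td-equ1}.

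Finally, almost-sure convergence and the rate $|c_t-c^*|=O(\exp(-\gamma\int_0^t\alpha_s\,ds))$ would follow by letting $k\to\infty$ along a subsequence (so $A_k\to0$) and combining Borel--Cantelli with a Robbins--Siegmund supermartingale argument on the smoothed recursion. I expect the bilinear coupling to be the main obstacle: Poisson smoothing removes the temporal correlation but leaves products of the martingale increments with the still-unknown $e_t$, and the time-split at $k$, with its competing factors $A_k$ (shrinking) and $e^{2C_1k}$ (growing), is precisely the device needed to control those products while preserving the explicit exponential rate --- reconciling these estimates and pinning down $C_0$ and $C_1$ is the delicate part of the argument.
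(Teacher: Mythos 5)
Your opening moves match the paper's in spirit: the negative definiteness of $G=\mE_\mu[\phi\mT\phi^T]$ driving the integrating factor $e^{-2\gamma\int_0^t\alpha_s ds}$, and the use of the Poisson solutions $\psi_i$ to convert the correlated drift noise $\mA\psi_i(x_t)\,dt$ into boundary terms, $\dot\alpha$-weighted integrals, and genuine martingales. (The paper packages the contraction slightly differently, first comparing $c_t$ to a deterministic auxiliary flow $\dot{\hat c}_t=\alpha_t(A\hat c_t+b)$ and then analyzing $\tilde c_t=c_t-\hat c_t$, but that is cosmetic.) The Poisson-smoothing step in the paper is implemented by subtracting the correction $\alpha_t c_t^T(\psi_{1,t}c_t+\psi_{2,t})$ before applying It\^o's formula, which is the integration-by-parts you describe.

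The genuine gap is in how you handle the bilinear coupling, which you correctly identify as the crux but do not resolve. Your time-split at $k$ misreads the role of $k$: in the paper, $k$ is not a burn-in time but a \emph{level index}. The proof of Lemma \ref{lemma:mean-square-iss-alpha} first establishes an a priori growth bound $|c_t|<be^{\lambda t}$ with probability at least $1-1/\epsilon$, using the Lyapunov function $V(z)=\log(|z|^2+1)$ on $z_t=e^{-\lambda t}c_t$ --- the logarithm is essential because its Hessian decays like $|z|^{-2}$ and exactly cancels the quadratic growth of the multiplicative noise $\alpha_t\phi_t c_t^T dM_t$. It then defines stopping times $\sigma_n,\tau_n$ for crossings of the levels $be^{\lambda n}$, notes that on the growth event any crossing of level $be^{\lambda n}$ must occur after time $n$ (whence the tail quantity $A_n$ with integrals starting at $n$), and bounds $\mathbb{P}(E_{n+1,b})\le C_0A_ne^{4\lambda}$ by contrasting the deterministic jump $|c_{\sigma_{n+1}}|^2-|c_{\tau_n}|^2\ge b^2e^{2\lambda n}$ with a second-moment estimate of that increment. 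Your scheme has no substitute for this a priori control: the ``crude deterministic Gr\"onwall bound'' on $[0,k]$ is unavailable because the coefficient $\phi_t\mT\phi_t^T-G$ and the martingale integral are not pathwise bounded (Assumption \ref{assumption:poisson-norm} only gives moments of $\psi_i$, $\mA\psi_i$, $\varphi$, not of $c_t$); and on $[k,\infty)$ the second-moment recursion for $|e_t|^2$ does not close, because the quadratic variation of the noise scales with $|c_t|^2$, which is precisely the quantity whose boundedness has not yet been established. Without the logarithmic Lyapunov step and the level-crossing argument (or an equivalent device), the Chebyshev step you invoke cannot be fed, so the probability $1-A_kC_0-2/\epsilon$ and the factor $e^{2C_1k}$ cannot be derived as sketched.
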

See Appendix \ref{appendix:proof-td} for the proof of Theorem \ref{thm:td} and the detailed formulation of the error bound.

Note that as $k$ goes to the infinity, $A_k$ converges to $0$ by the conditions on $\alpha_t$ in Algorithm \ref{alg:temporal-differential-infty}.
In addition, the convergence rate of $A_k$ is slower than the exponential rate, due to the slow decreasing rate of $\alpha_t$.
As a result, $\lim_{k\rightarrow \infty}\sqrt{A_k}e^{2C_1 k}=\infty$, and the error bound in \eqref{equ:thm-td-equ1} grows with $k$.

A key difference between Algorithms  \ref{alg:temporal-differential} and \ref{alg:temporal-differential-infty} is that the time integration of the whole time series is incorporated in Algorithm \ref{alg:temporal-differential}, while only the on-line data at current time is used in Algorithm \ref{alg:temporal-differential-infty}.
As a result, Algorithm \ref{alg:temporal-differential-infty} may produce a much noisier estimation on the value function, especially at the beginning of the learning process. 
Indeed, the experimental results in Section \ref{sec:ou-process} have shown that the point estimations  produced by Algorithm \ref{alg:temporal-differential} have smaller standard errors.
In fact, to ensure the convergence of Algorithm \ref{alg:temporal-differential-infty}, a more restrictive condition (Assumption \ref{assumption:poisson-norm}) is required.
%In fact, Assumption \ref{assumption:poisson-norm} in Theorem \ref{thm:td} is more restrictive than Assumption \ref{assumption:noise}.
In contrast, the presence of decreasing step size $\alpha_t$ in Algorithm \ref{alg:temporal-differential-infty} provides more freedom to adjust the convergence rate of the CT-TD learning.
For instance, if $\alpha_t=(a+bt^\beta)^{-1}$ with $a>0$,  $b>0$ and  $0.5<\beta\leq1$,
%$\alpha_t^2\geq -h\dot \alpha_t$ for some $h>0$ and sufficiently large $t$ (this condition holds for any $\alpha_t=(a+bt^c)^{-1}$ with $a>0$,  $b>0$ and  $0.5<c\leq1$), 
then we have by Lemma \ref{lem:step} in Appendix \ref{appendix:lemma-proof} that $\left|c_t-c^*\right|=O(t^{- \gamma \beta})$  for sufficiently large $t$.
%Hence, we can easily adjust the convergence speed by selecting the step size properly.
%This may also compromise the learning performance.

%If $\{\phi_j\}_{j=1}^N$ form an orthonormal basis, then w
We can see from the proofs of Algorithms  \ref{alg:temporal-differential} and \ref{alg:temporal-differential-infty} that the constants in the error bounds in Theorem \ref{thm:lspe} and Theorem \ref{thm:td} are polynomials of $N$, that is, the number of bases. 
As a result, if $N$ increases, the error bounds in Theorems  \ref{thm:lspe} and \ref{thm:td}, which represent the estimation errors, increase as well.
On the other hand, a small $N$ also leads to a large divergence from $\mH$ to $L^2_\mu$, which is quantified by $\left\|(\Pi-\mID)(\mT V_s+r)\right\|_{\mu}$ in Theorem \ref{thm:approximated-pe}.
As a result, a small $N$ causes a large approximation error.
This trade-off between approximation error and estimation error, or better known as the bias-variance trade-off, is a well-known phenomenon in various machine learning methods. 

Finally, note from Algorithms  \ref{alg:temporal-differential} and \ref{alg:temporal-differential-infty} that the state $x_t$ of the environment does not appear explicitly in the updating equations. 
Instead, the updating equations only depend on $r(x_t)$ and $\phi(x_t)$. 
From the perspective of control theory,  the pair $(\phi(x), r(x))$ can be viewed as the  output of the environment at state $x$.
Therefore, Algorithms  \ref{alg:temporal-differential} and \ref{alg:temporal-differential-infty} can also be applied to output-feedback control problems and partially observable Markov processes.
%\subsection{LSPE versus TD}

%\subsection{$\gamma=0$}
%\subsection{Comparison with MC}

\section{Computer-based Experiments}
In this section, we present two computer-based experiments to illustrate the effectiveness of the two learning algorithms presented in the previous section.
%The experiments are conducted in {\em Python 3.7.3 (default, Mar 27 2019, 16:54:48) 
%[Clang 4.0.1 (tags/RELEASE\_401/final)]} environment.
\subsection{Continuous-time ARMA(2,1) process}\label{sec:ou-process}
Consider $\{y_t\}_{t\geq0}$ as a real-valued continuous-time ARMA(2,1) process governed by
%The continuous-time ARMA(2,1)  process considered in the experiment is given as
\begin{align*}
\ddot y_t  + \theta_2 \dot y_t + \theta_1  y_t =  \dot \xi_{1,t} + \sigma_1\xi_{1,t}  + \sigma_2\xi_{2,t} +  \sigma_3\xi_{3,t},
\end{align*}
where  $\theta_1$, $\theta_2$, $\sigma_1$, $\sigma_2$, and $\sigma_3$ are positive reals, and $\xi_{i,t}$, $i=1,2,3$, are three independent white noises \citep[Section 3.2]{Arnold1974}.
Denote $x_t=(y_t,\dot y_t-\xi_{1,t})$. 
Then, we can derive the following two-dimensional Ornstein–Uhlenbeck (OU) process:
%The dynamics of the OU process in Section \ref{sec:ou-process} is then given as:
\begin{align*}
    dx_t  = Ax_tdt + \Sigma dw_t,\quad
    A  = 
    \begin{bmatrix}
        0&1\\
        -\theta_1&-\theta_2
    \end{bmatrix},\quad 
    \Sigma = 
    \begin{bmatrix}
        1&0&0\\
        \sigma_1-\theta_2&\sigma_2&\sigma_3\\
    \end{bmatrix},
\end{align*}
where $w_t = [w_{1,t}\ w_{2,t}\ w_{3,t}]^T$ is a vector of three independent Brownian motions defined with $\dot w_i=\xi_i$, $i=1,2,3$ \citep[Eq. (3.2.3)]{Arnold1974}. 

We know from \citet{Kushner1967a} that the discounted value function $V^*$ defined by \eqref{equ:cost-discount} solves the following partial differential equation:
\begin{align}
   0 = -\gamma V^*(x) + \partial_x V^*(x)Ax+ \frac{1}{2}\sum_{i,j}\partial_{x_ix_j}^2 V^*(x)(\Sigma\Sigma^T)_{i,j} + r(x),\label{equ:discount-Lyapunov}
\end{align}
where $(\Sigma\Sigma^T)_{i,j}$ denotes the $(i,j)$-th element in $\Sigma\Sigma^T$.
%Denote the initial state of the OU process as $x=(x_1,x_2)$.
%\begin{align*}
    %dx_t = \theta x_tdt + \Sigma dw_t,
%\end{align*}
%where $x_t$ is the state, $w$ is a vector of two independent Brownian motions, and $\theta$  and $\Sigma$ are  2-by-2 real matrices.

%The running cost is defined as $r(x)=x^TQx$ for a positive definite real matrix $Q$.
%In this setup, $V^*$ can be solved explicitly (see appendix for details).
%Both Algorithms \ref{alg:temporal-differential} and \ref{alg:temporal-differential-infty} to solve $V^*$. 

%\section{Experiment Setup}\label{appendix:experiment}
%
%\subsection{Experiment 1: Continuous-time ARMA(2,1) process}\label{appendix:experiment1}

%\begin{table}
  %\caption{Parameters}
  %\label{sample-table}
  %\centering
  %\begin{tabular}{ll}
    %\toprule
    %%\multicolumn{2}{c}{Part}                   \\
    %%\cmidrule(r){1-2}
     %Parameters     & Value \\
    %\midrule
      %$\theta_1$  & $1.0$     \\
      %$\theta_2$  & $1.0$     \\
      %$\sigma_1$  & $5.0$     \\
      %$\sigma_2$  & $5.0$     \\
      %$\rho_w$  & $0.5$     \\
    %\bottomrule
  %\end{tabular}
%\end{table}

\begin{wrapfigure}{r}{0.29\textwidth}
\centering
\vspace{-10pt}
           \includegraphics[width=0.28\textwidth]{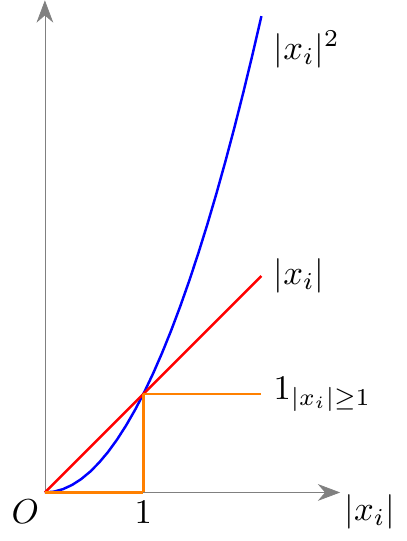}
    \caption{Comparison of different running-costs.}\label{fig:cost}
    \vspace{-10pt}
\end{wrapfigure}
%In the experiment, three types of $r$ are selected.
In this example, we aim at using
Algorithms \ref{alg:temporal-differential} and  \ref{alg:temporal-differential-infty} to estimate  
 the discounted value functions with respect to three types of running-costs. 
First, a quadratic function $r(x)=10\sum_{i=1}^2|x_i|^2$ 
is used, under which \eqref{equ:discount-Lyapunov} reduces to a Lyapunov equation.
Then, we select the $L^1$ vector norm $r(x)=10\sum_{i=1}^2|x_i|$.
Finally, we choose $r(x)=10\sum_{i=1}^21_{|x_i|\geq1}$, which is essentially the 0-1 loss.
See Figure \ref{fig:cost} for an illustration of different choices of $r$.
In all three cases, the discounting factor is fixed at $\gamma=1$, and the step size $\alpha_t$ in Algorithm \ref{alg:temporal-differential-infty} is chosen as $\alpha_t = 0.7/(1+t)$.
Model parameters of the OU process are chosen as $\theta_1=\theta_2=\sigma_1=\sigma_2=\sigma_3=1$, and the initial state  is chosen as $x_0=(1,0)$.
Six basis functions are selected: $\phi(x) = [1\ x_1\ x_2\ x_1^2\ x_1x_2\ x_2^2]$, where $x=(x_1,x_2)$.

\begin{figure}
    \centering
    \begin{subfigure}[t]{\textwidth}
        \includegraphics[width=\textwidth]{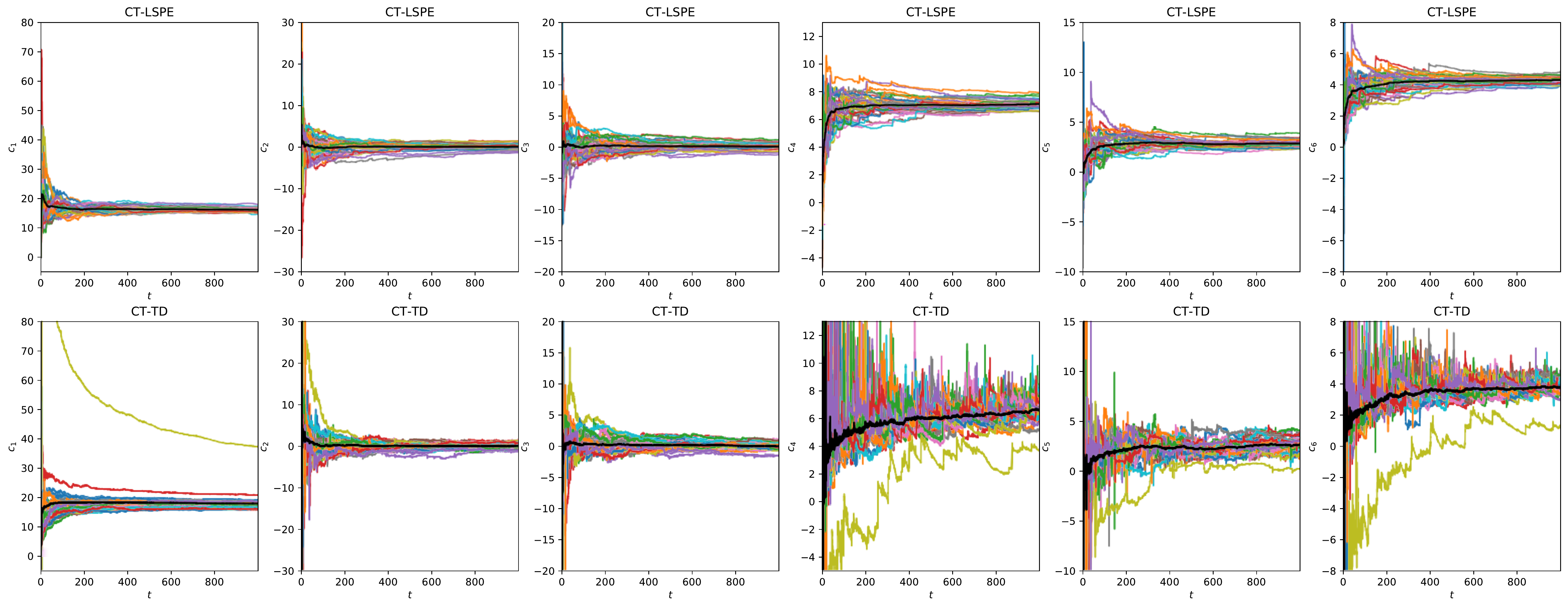}
        \caption{$r(x)=10\sum_{i=1}^2|x_i|^2$.}
        \label{fig:weights-paths-l2}
    \end{subfigure}

        \begin{subfigure}[t]{\textwidth}
        \includegraphics[width=\textwidth]{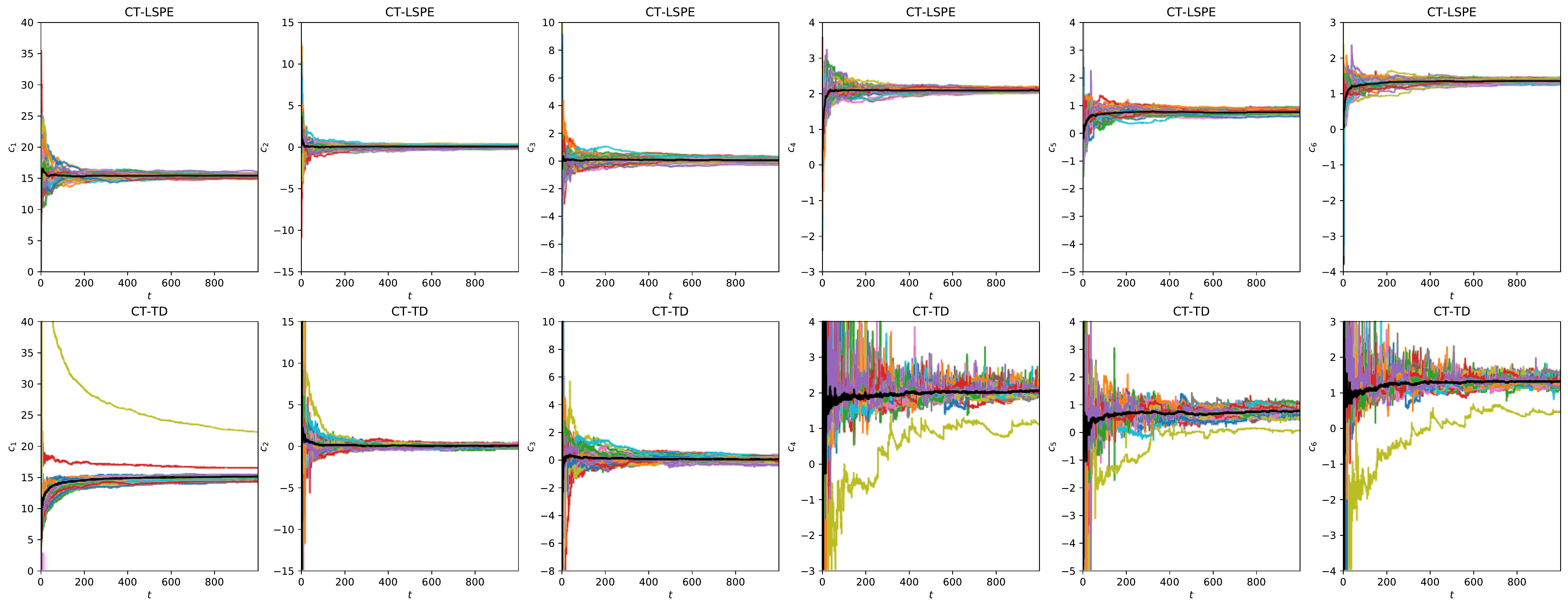}
        \caption{$r(x)=10\sum_{i=1}^2|x_i|$.}
        \label{fig:weights-paths-l1}
    \end{subfigure}

%    \caption{Box plot of weights, $r(x)=10\|x\|_2^2$.}

     \begin{subfigure}[t]{\textwidth}
        \includegraphics[width=\textwidth]{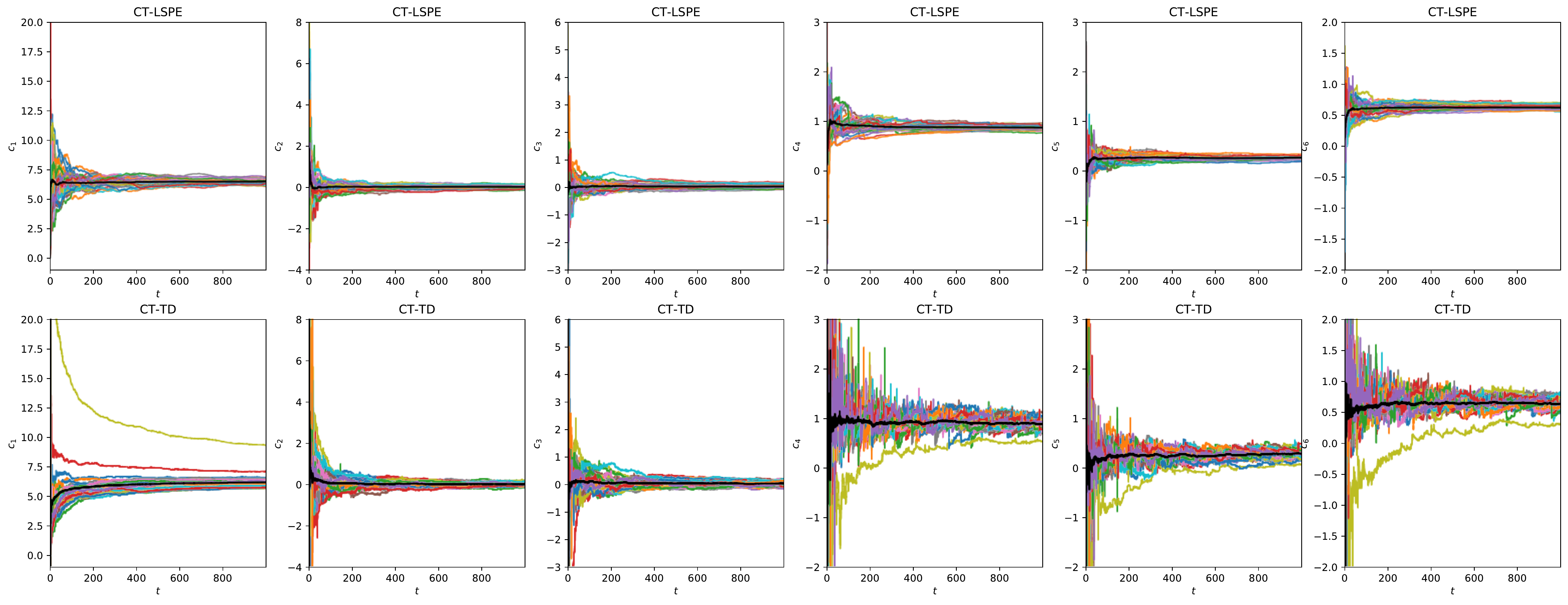}
        \caption{$r(x)=10\sum_{i=1}^21_{|x_i|\geq1}$.}
        \label{fig:weights-paths-l1-delta}
    \end{subfigure}

    \caption{Value estimation for a continuous-time ARMA(2,1) process. 
    The weights of basis functions ($\{c_{i,t}\}_{t\geq0}$ for $i=1,2,\cdots,6$) along the 35 sample paths are plotted for the three choices of $r$. 
    Each color, except black, is corresponding to one sample path.
    The solid black lines represent the point estimations of $c_{i,t}$.}
    \label{fig:experiment1-weights-paths}
\end{figure}

%We consider three cases with different choices of $r$.
%First, a quadratic function $r(x)=10|x|^2$ is used.
%Then, we select the $L^1$ vector norm $r(x)=10|x|_1$.
%Finally, we choose $r(x)=10\sum_{i=1}^2\max\{|x_i|-1,0\}$, which is essentially the $L^1$ norm with a dead zone of $[-1,1]$.
Figures \ref{fig:experiment1-weights-paths}  and \ref{fig:experiment1} show the experimental results of our algorithms.
%Figure \ref{fig:experiment1} shows the learning results at $t=1000$.
%In Figures  \ref{fig:box-l2} and  \ref{fig:mc-l2}, $r$ is a quadratic function of $x$.
To evaluate the performance of the algorithms, we conduct learning processes over $35$ sample paths. %, with corresponding random seeds from $0$ to $34$.
Figure \ref{fig:experiment1-weights-paths} illustrates the $35$ paths of the weights $\{c_i\}_{i=1}^6$ corresponding to the six basis functions.
Figures  \ref{fig:weights-series-box-l2},  \ref{fig:weights-series-box-l1}, and  \ref{fig:weights-series-box-l1-delta}
%\ref{fig:box-l2},  \ref{fig:box-l1}, and \ref{fig:box-l1-delta} 
show the box plots of these weights at $t=1000$.
Overall, the weights obtained from both the CT-LSPE algorithm  (Algorithm  \ref{alg:temporal-differential}) and the CT-TD learning algorithm (Algorithm  \ref{alg:temporal-differential-infty}) converge to same values in all three cases.
However, compared with the CT-LSPE, the CT-TD learning generates more dispersed samples, and the outliers are scattered further from the center of the clusters of samples.
%  shows relatively larger standard errors.
This is not surprising, as the time integration in the CT-LSPE tends to smooth out the noise in the learning process, especially at the beginning of the learning phase.
Indeed, we can see from Figure \ref{fig:experiment1-weights-paths} that CT-TD learning generates noisier paths of basis-function weights.
To further illustrate the effectiveness of the proposed algorithms, we compare the value functions obtained from the two learning algorithms with that from Monte-Carlo (MC) simulations in Figures \ref{fig:mc-series-l2}, \ref{fig:mc-series-l1}, and \ref{fig:mc-series-l1-delta}.
%illustrate the corresponding paths of the weights $\{c_i\}_{i=1}^6$ corresponding to the six basis functions.
The point estimations of value functions at $t=1000$ with $95\%$ confidence intervals are given in Figures \ref{fig:mc-l2},  \ref{fig:mc-l1}, and \ref{fig:mc-l1-delta}. 
%The shaded are represent the confidence interval band with one standard deviation. 
Note that in all three cases, these point estimations  are approximately at the same level.
%estimated using Algorithms  \ref{alg:temporal-differential} and  \ref{alg:temporal-differential-infty} converge to the MC simulation result. 
In addition, compared with  the CT-TD learning, the CT-LSPE produces a smaller standard error.
This result is also consistent with the observations in Figures  \ref{fig:weights-series-box-l2},  \ref{fig:weights-series-box-l1}, and  \ref{fig:weights-series-box-l1-delta}.
%  \ref{fig:box-l1}, and \ref{fig:box-l1-delta} .
%On the contrary, Algorithm \ref{alg:temporal-differential-infty} shows overshoots at the beginning of the learning stage.
%This is not surprising, as the integration step in Algorithm \ref{alg:temporal-differential} smooth out the noise in the learning process, leading to a smoother and more stable convergence result.
Finally, the MC simulation produces the largest standard error.
%Due to space limitation, we only show the weights of selected features.

\begin{figure}[t]
    \centering
      \begin{subfigure}[t]{0.31\textwidth}
      \centering
        $r(x)=10\sum_{i=1}^2|x_i|^2$
    \end{subfigure}
    ~
    \begin{subfigure}[t]{0.31\textwidth}
    \centering
        $r(x)=10\sum_{i=1}^2|x_i|$
    \end{subfigure}
    ~
    \begin{subfigure}[t]{0.31\textwidth}
    \centering
        $r(x)=10\sum_{i=1}^21_{|x_i|\geq1}$
    \end{subfigure}
    
%    \begin{subfigure}[t]{0.312\textwidth}
%        \includegraphics[width=\textwidth]{}
%        \caption{}
%        \label{fig:box-l2}
%    \end{subfigure}
%    ~
%     \begin{subfigure}[t]{0.317\textwidth}
%        \includegraphics[width=\textwidth]{}
%        \caption{}
%        \label{fig:box-l1}
%    \end{subfigure}
%    ~
%    \begin{subfigure}[t]{0.312\textwidth}
%        \includegraphics[width=\textwidth]{}
%        \caption{}
%        \label{fig:box-l1-delta}
%    \end{subfigure}

       \begin{subfigure}[t]{0.31\textwidth}
        \includegraphics[width=\textwidth]{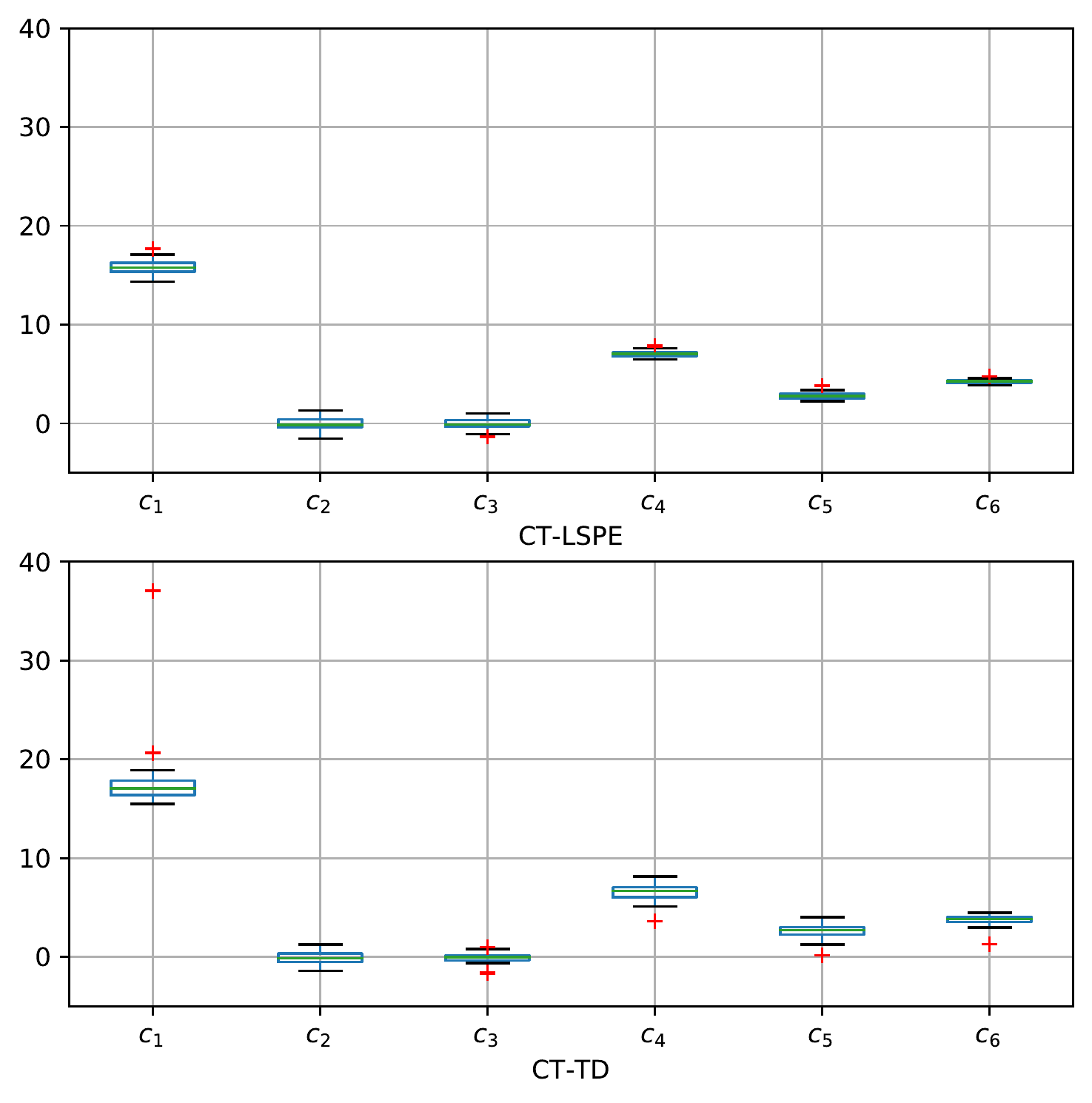}
        \caption{}
        \label{fig:weights-series-box-l2}
         \end{subfigure}%         \caption{Box plot of weights, $r(x)=10\|x\|_2^2$.}
         ~
         \begin{subfigure}[t]{0.31\textwidth}
        \includegraphics[width=\textwidth]{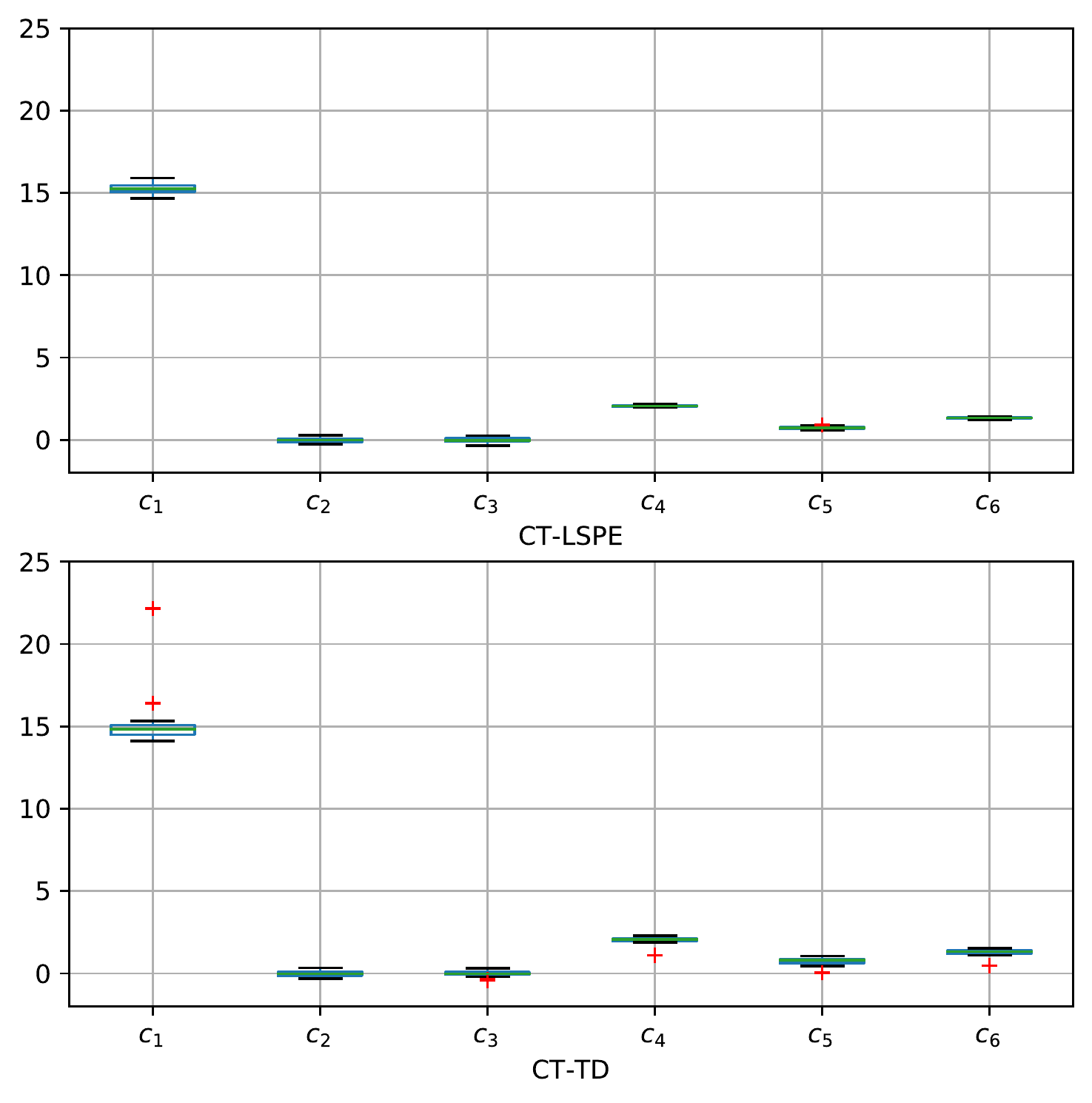}
        \caption{}
        \label{fig:weights-series-box-l1}
    \end{subfigure}
%    \caption{Box plot of weights, $r(x)=10\|x\|_2^2$.}
        ~
       \begin{subfigure}[t]{0.305\textwidth}
        \includegraphics[width=\textwidth]{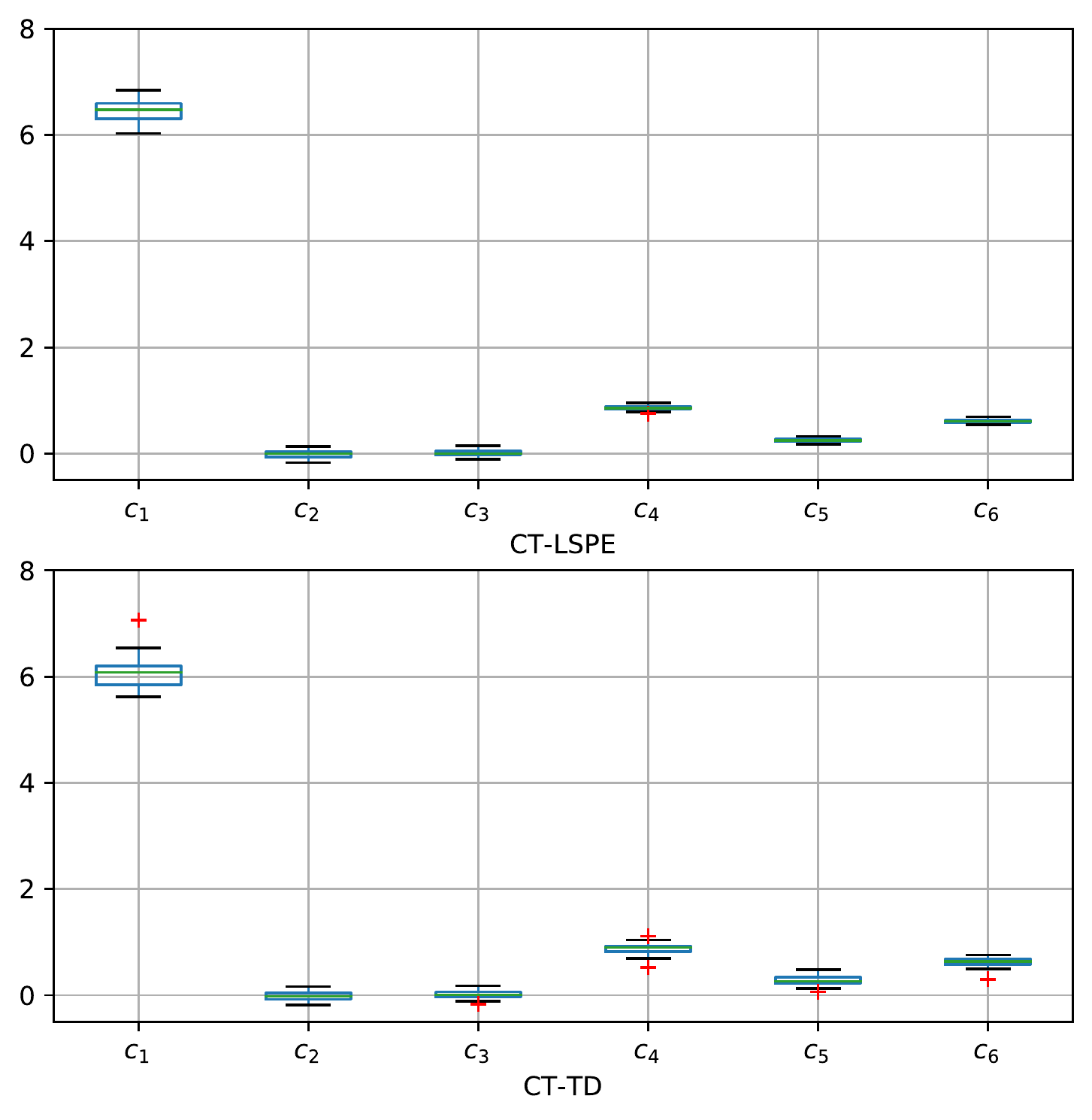}
        \caption{}
        \label{fig:weights-series-box-l1-delta}
            \end{subfigure}

        \begin{subfigure}[t]{0.31\textwidth}
        \includegraphics[width=\textwidth]{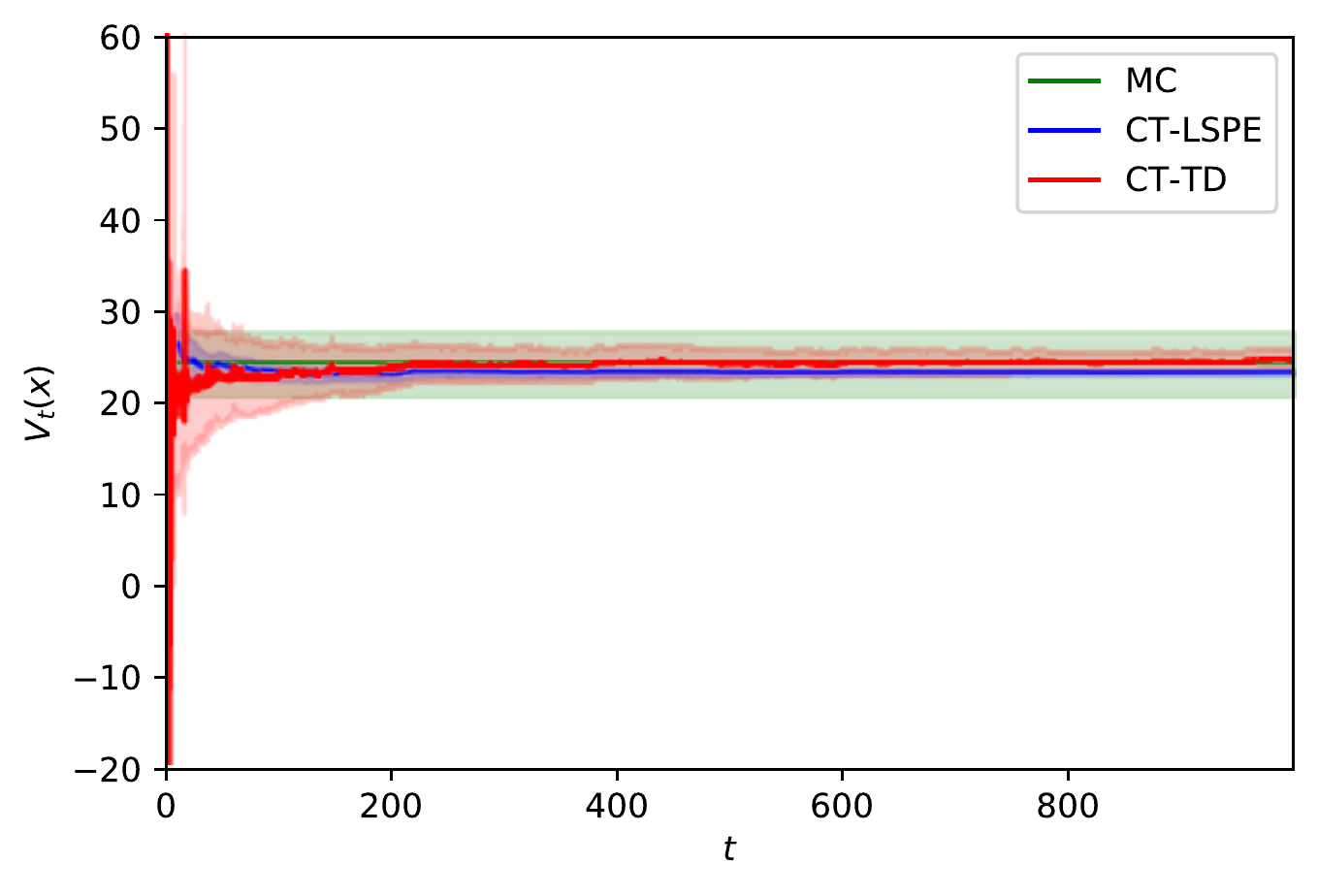}
        \caption{}
        \label{fig:mc-series-l2}
         \end{subfigure}
    ~
        \begin{subfigure}[t]{0.31\textwidth}
        \includegraphics[width=\textwidth]{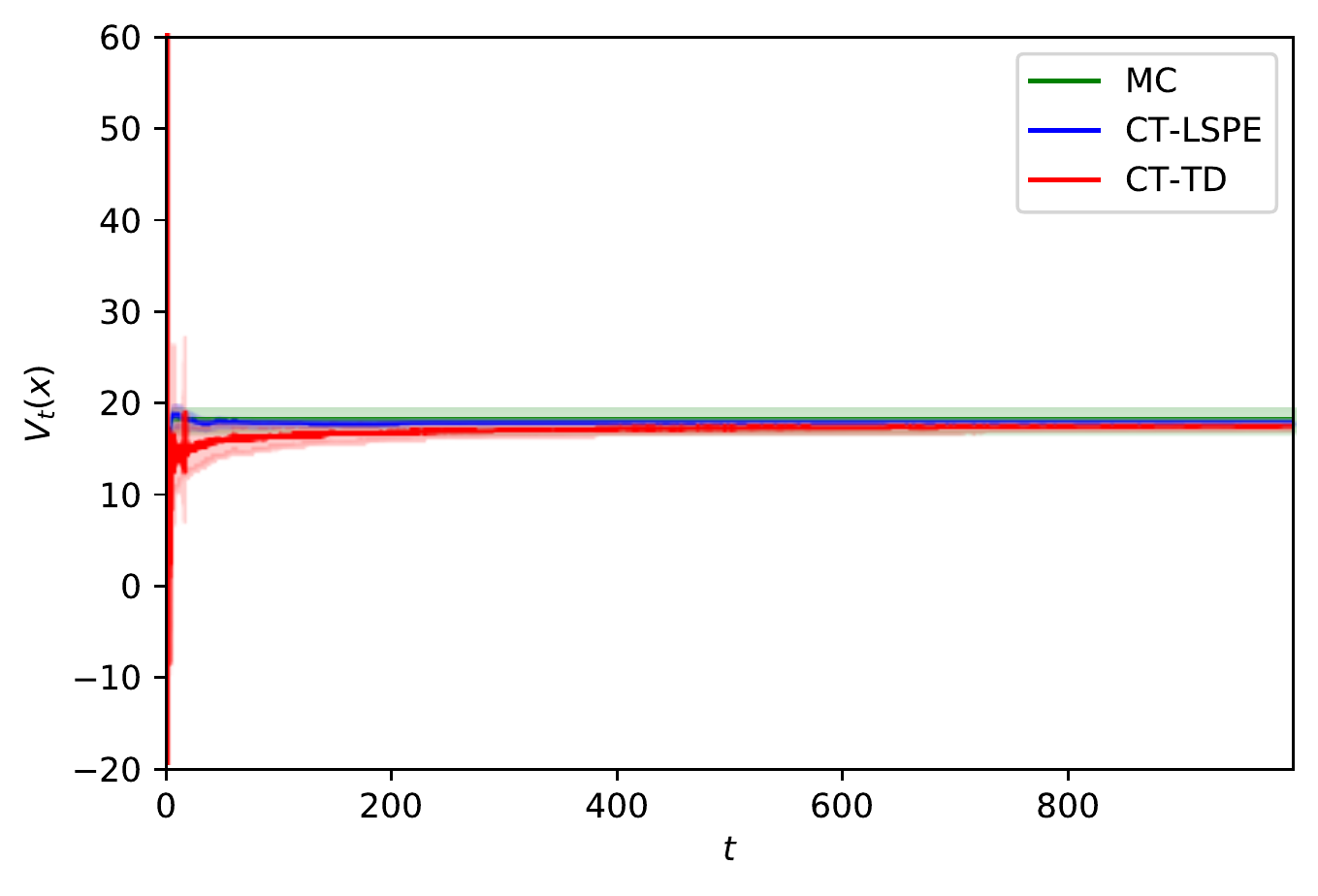}
        \caption{}
        \label{fig:mc-series-l1}
    \end{subfigure}
    ~
       \begin{subfigure}[t]{0.31\textwidth}
        \includegraphics[width=\textwidth]{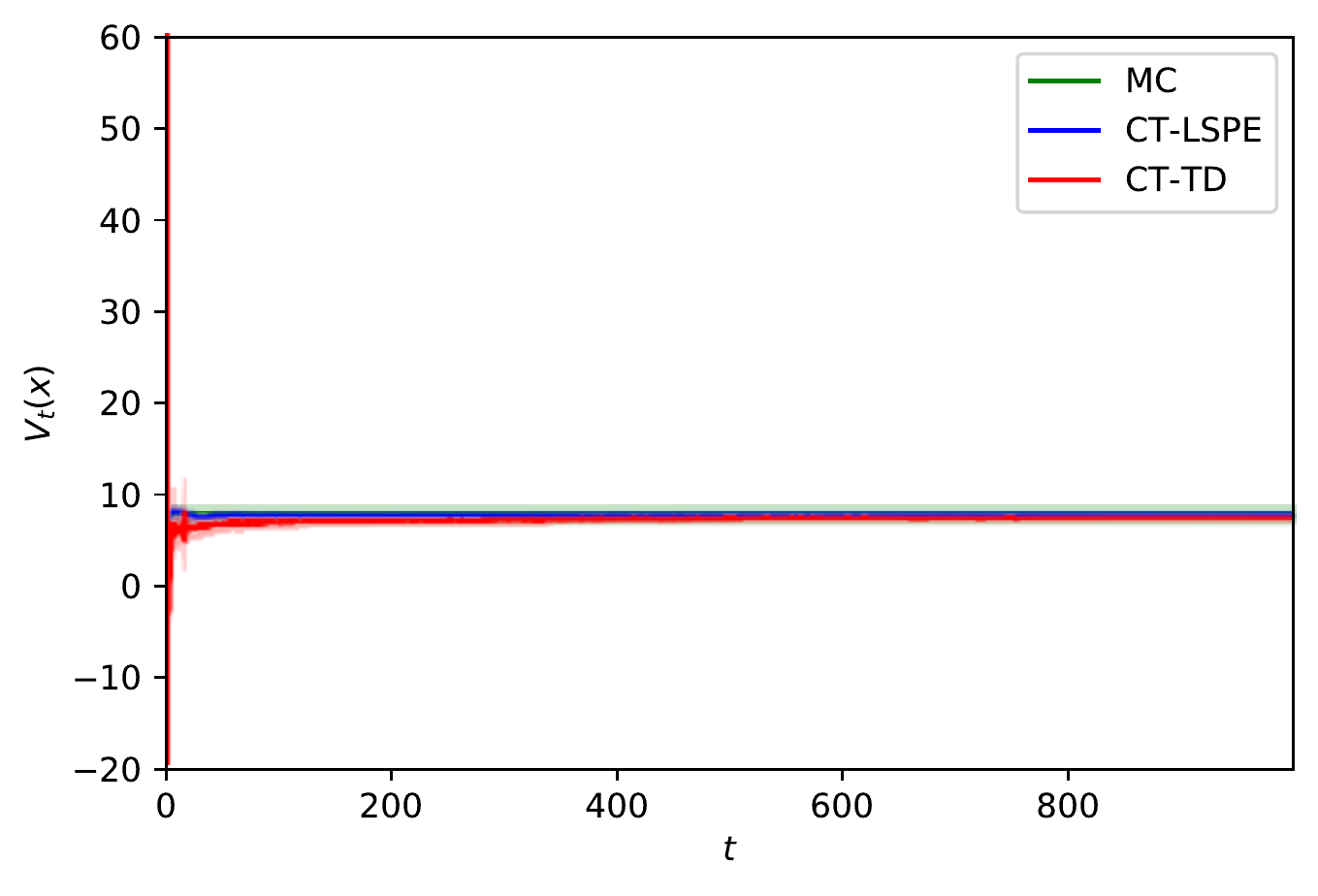}
        \caption{}
        \label{fig:mc-series-l1-delta}
            \end{subfigure}

         \begin{subfigure}[t]{0.303\textwidth}
        \includegraphics[width=\textwidth]{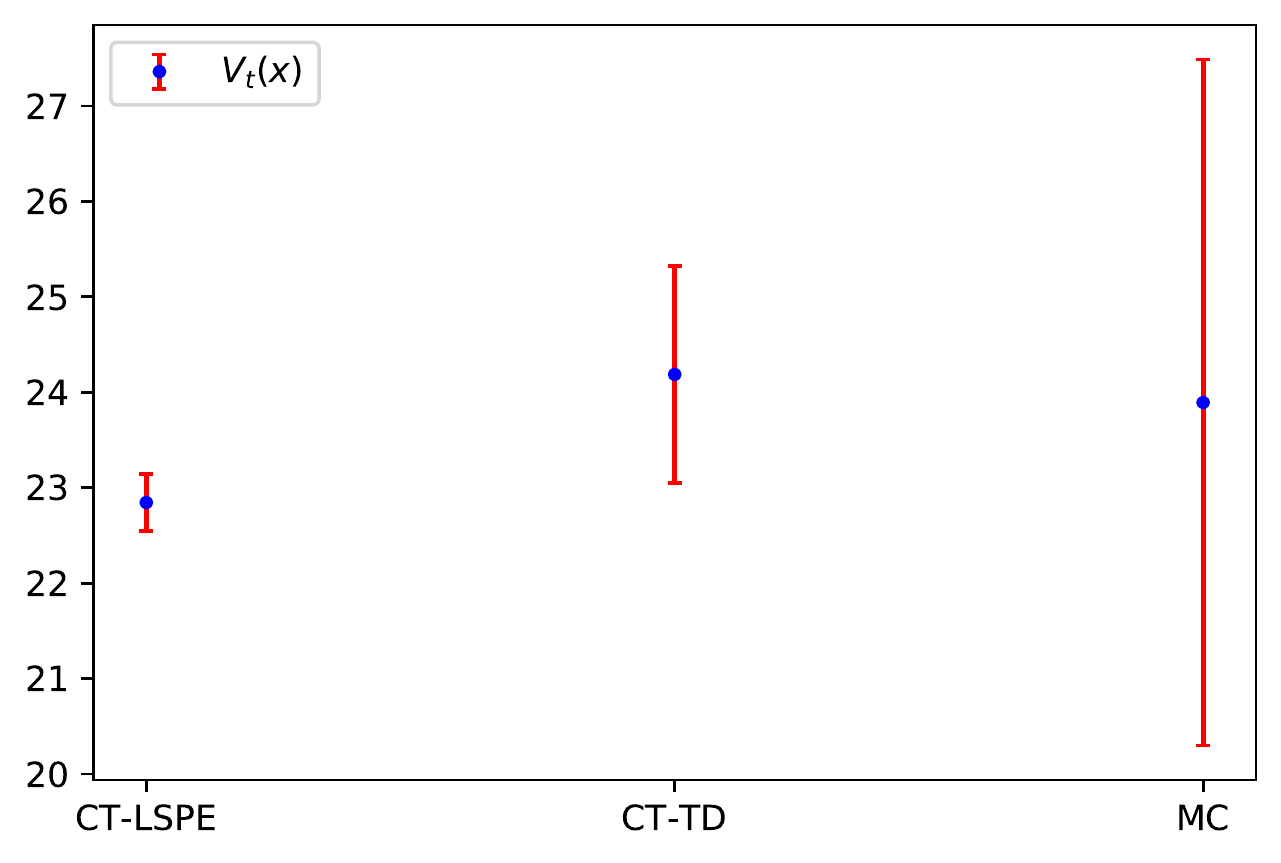}
        \caption{}
        \label{fig:mc-l2}
         \end{subfigure}
    ~
        \begin{subfigure}[t]{0.31\textwidth}
        \includegraphics[width=\textwidth]{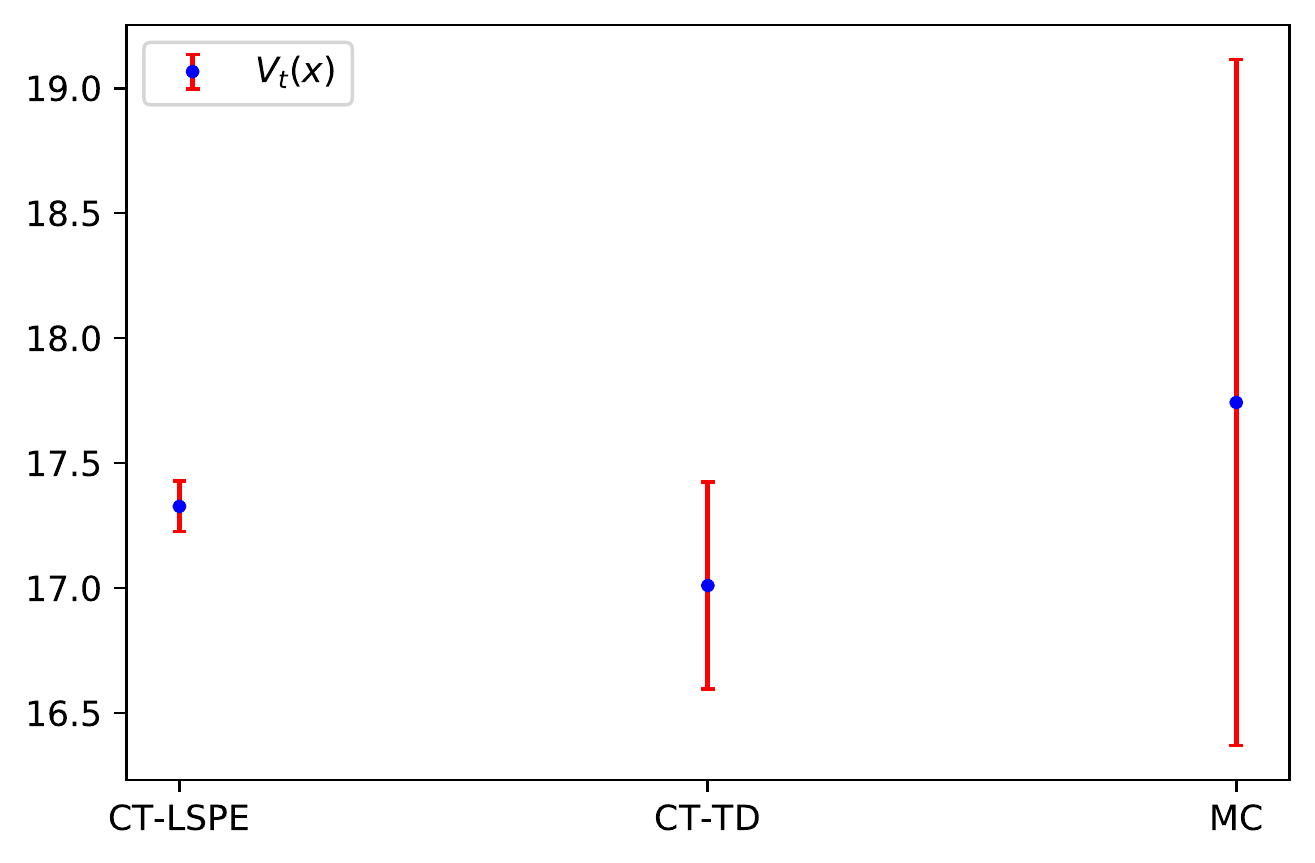}
        \caption{}
        \label{fig:mc-l1}
    \end{subfigure}
~
        \begin{subfigure}[t]{0.31\textwidth}
        \includegraphics[width=\textwidth]{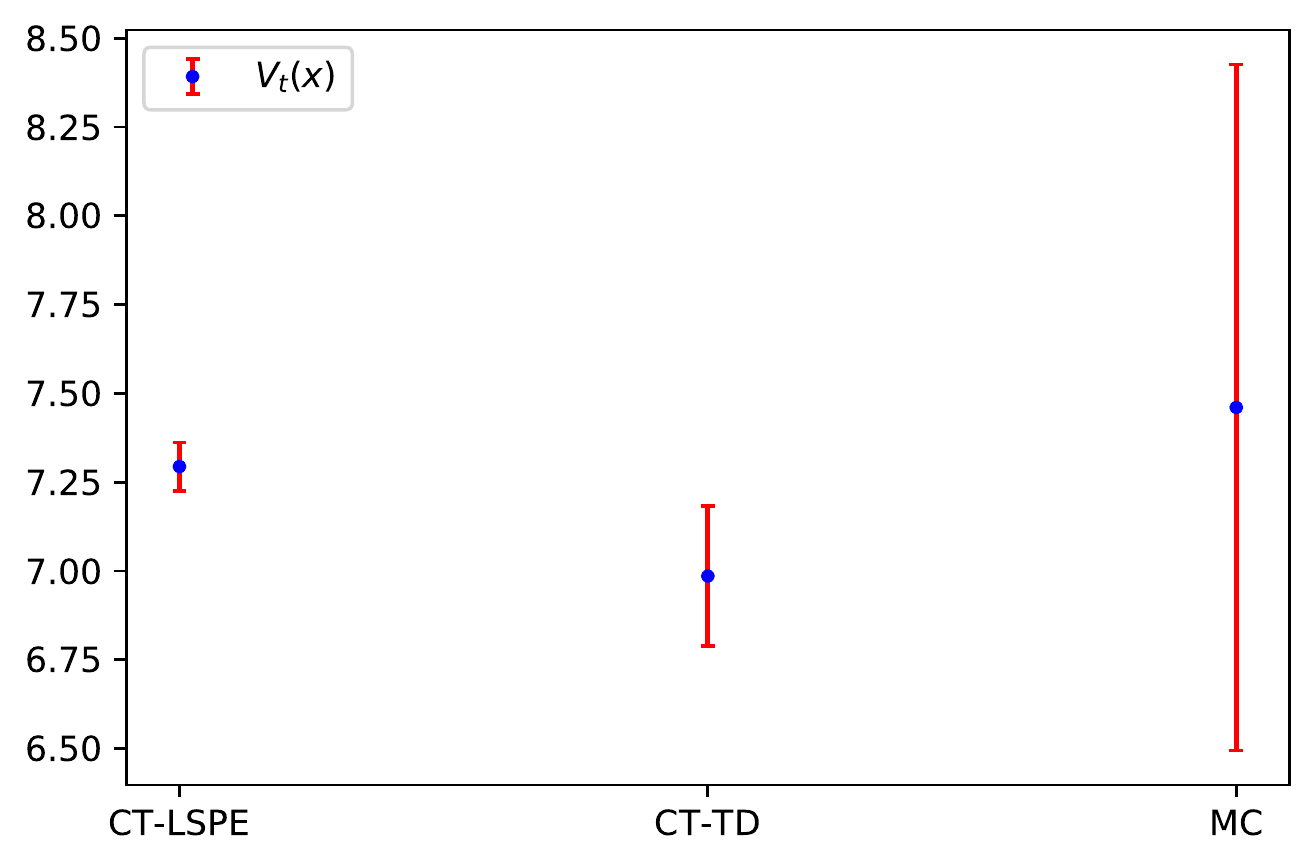}
        \caption{}
        \label{fig:mc-l1-delta}
    \end{subfigure}
    \caption{Value estimation for a continuous-time ARMA(2,1) process. 
    Each one of the three columns in the figure is corresponding to one choice of $r$.
       Figures (a), (b), and (c)  show the box plots  of the  basis-function weights ($\{c_{i,t}\}_{t\geq0}$ for $i=1,2,\cdots,6$) at $t=1000$ along the $35$ sample paths.
%   Figures (a), (c), and (e)  show the point estimations (blue dots) of the weights of basis functions ($\{c_{i,t}\}_{t\geq0}$ for $i=1,2,\cdots,6$) with $95\%$ confidence intervals (red bars), at $t=1000$.
       Figures (d), (e), and (f)  show the estimations of the value functions ($V_t(x)$) at $x=(1,0)$.
The solid lines represent point estimations of the true values using different algorithms.
    The shaded areas represent $95\%$ confidence intervals.
Figures (g), (h), and (i)  show the point estimations (blue dots) of $V_{1000}(x)$ at $x=(1,0)$ with $95\%$ confidence intervals (red bars) under different algorithms.
    }
    \label{fig:experiment1}
\end{figure}

\subsection{Control of the benchmark double inverted pendulum on a cart}
In this example, we apply the CT-LSPE together with the actor-critic algorithm \citep{Konda2000} to improve the 
control performance of a double inverted pendulum on a cart (DIPC) (Figure \ref{fig:DIPC}).
It is well-known that the DIPC is a nonlinear under-actuated plant,  and designing a controller for DIPC is a challenging task in 
the field of nonlinear control \citep{Khalil2015}.

%The double inverted pendulum on a cart (DIPC) is illustrated in Figure \ref{fig:DIPC}.
% for a graphical illustration of the DIPC.

\begin{figure}[h]
\centering
%\vspace{-10pt}
           \includegraphics[width=0.4\textwidth]{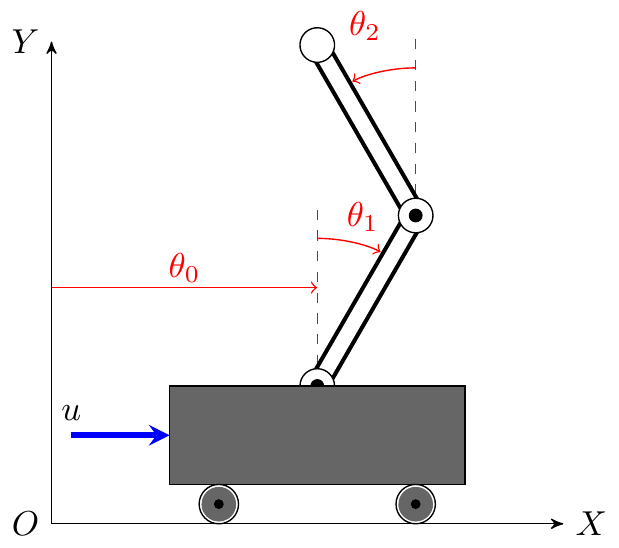}
    \caption{Double inverted-pendulum on a cart.}\label{fig:DIPC}
%    \vspace{-10pt}
\end{figure}

The mathematical model of the DIPC is given below \citep{Moysis2016}, which is derived
 from the Lagrange equations of the DIPC:
\begin{align*}
D(\theta) \ddot{\theta}+{C}(\theta, \dot{\theta}) \dot{\theta}+{G}(\theta)=H u,
\end{align*}
where $\theta = [\theta_0\ \theta_1\ \theta_2]^T$, $\theta_0$ denotes the horizontal displacement of the cart, $\theta_1$ and $\theta_2$ denote the angles of lower and upper pendulum links with respect to the vertical position, respectively, $u$ is an input force applied to the cart, and
\begin{align*}
D(\theta) & =
\begin{bmatrix}
C_1 & C_2 \cos \theta_{1} & {C_{3} \cos \theta_{2}} \\ 
{C_{2} \cos \theta_{1}} & {C_{4}} & {C_{5} \cos \left(\theta_{1}-\theta_{2}\right)} \\ 
C_{3} \cos \theta_{2} & C_{5} \cos \left(\theta_{1}-\theta_{2}\right) & {C_{6}}
\end{bmatrix}, \quad
{G}(\theta)  =
\begin{bmatrix}
{0} \\ {-C_7 \sin \theta_{1}} \\ {-C_8 \sin \theta_{2}}
\end{bmatrix},\\
{C}(\theta, \dot{\theta}) & =
\begin{bmatrix}
{0} & {-C_{2} \sin \left(\theta_{1}\right) \dot{\theta}_{1}} & {-C_{3} \sin \left(\theta_{2}\right) \dot{\theta}_{2}} \\ 
{0} & {0} & {C_{5} \sin \left(\theta_{1}-\theta_{2}\right) \dot{\theta}_{2}} \\ 
0 & -C_{5} \sin \left(\theta_{1}-\theta_{2}\right) \dot{\theta}_{1} & {0}
\end{bmatrix}, \quad  H  = 
\begin{bmatrix}
1\\ 0 \\ 0
\end{bmatrix},\\
C_{1} & =m_{0}+m_{1}+m_{2}, \quad
C_{2} =\left(\frac{1}{2} m_{1}+m_{2}\right) l_1, \quad
C_{3} =\frac{1}{2} m_{2} l_2,  \quad
C_{4} =\left(\frac{1}{3} m_{1}+m_{2}\right) l_1^{2},\\  
C_{5} &=\frac{1}{2} m_{2} l_1 l_2, \quad
C_{6} =\frac{1}{3} m_{2} l_2^{2}, \quad
C_7=\left(\frac{1}{2} m_{1}+m_{2}\right) l_1 g, \quad
C_8 =\frac{1}{2} m_{2} l_2 g,
\end{align*}
The model parameters are listed in Table \ref{tab:dipc}.
\begin{table}[htp]
\caption{Parameters of the DIPC.}
\begin{center}
\begin{tabular}{l|l|l}
\hline
Parameters & Value & Definitions\\\hline
$m_0$ & 1.5kg &  Weight of the cart\\
$m_1$ & 0.5kg & Weight of the lower pendulum link\\
$m_2$ & 0.75kg & Weight of the upper pendulum link\\
$l_1$ & 0.5kg & Length of the lower pendulum link\\
$l_2$ & 0.75kg & Length of the upper pendulum link\\
$g$ & 9.81m/s$^2$ & Gravity constant\\
\hline
\end{tabular}
\end{center}
\label{tab:dipc}
\end{table}%

Denote the state of the DIPC as $x=[\theta_0\ \theta_1\ \theta_2\ \dot\theta_0\ \dot\theta_1\ \dot\theta_2]$.
Now, we can rewrite the above model in the standard state-space form:
\begin{align*}
\dot x = 
\begin{bmatrix}
0 & I\\
0 & -D^{-1}C
\end{bmatrix}x
+
\begin{bmatrix}
0 \\
-D^{-1}G
\end{bmatrix}
+ 
\begin{bmatrix}
0 \\
-D^{-1}H
\end{bmatrix}u.
\end{align*}
%where $x=[\theta_0\ \theta_1\ \theta_2\ \dot\theta_0\ \dot\theta_1\ \dot\theta_2]$.
Note that $D$, $C$, and $G$ are also nonlinear functions of $x$.
Hence, the above system is an affine nonlinear system.
In the learning task, at each state-action pair $(x,u)$, the following quadratic running cost is used
\begin{align*}
r(x,u)  = x^TQx+u^2,\quad 
Q={\rm diag}\{100,\ 1000,\ 1000,\ 50,\ 10,\ 10\}.
\end{align*}
We select basis functions as  the 2nd order polynomials of  $\theta_0$,  $\theta_1$ and $\theta_2$ together with the constant function.
Hence, there are totally $10$ bases in our learning algorithm.
The initial state of the DIPC is at $(\theta_0, \theta_1, \theta_2)= (0, 10^{\circ}, -10^{\circ})$.
Here, we only consider linear controllers.
In another word, we can always write $u_t=K x_t$ for some real control gain matrix $K$.

The entire learning process is composed with  $50$ learning trials, indexed by $k=1,2,\cdots,50$.
Each learning trial is performed over a fixed time interval $[0, t_f]$.
The initial controller parameter $K_1$ is adopted from \citet{Moysis2016}:
\begin{align*}
K_1 = [-2.2361\ 499.6181\ -578.2160\ -8.2155\ 19.1832\ -88.4892].
\end{align*}
The CT-LSPE algorithm is employed in each learning trial to estimate the value function.
To facilitate the learning process, we add exploration noises in the system input in each learning trial.
As a result, the control action applied to the DIPC in the $k$-th learning trial is $u_t = K_k x_t+{\rm noise}_t$, where $\text{noise}_t$ is a stationary Gaussian process with fixed distribution  $\mathcal{N}(0,\sigma)$ for all $t\in[0, t_f]$.
Then, $u_t$ has a distribution $\mathcal{N}(K_kx_t,\sigma)\triangleq\pi_k(a_t)$ in the $k$-th trial.
Extending the actor-critic algorithm with eligibility trace in \citet{Konda2002} to the continuous-time setting, we update the control gain matrix after finishing the $k$-th  learning trial via
\begin{align*}
K_{k+1} =  K_k- hz_{t_f} V_{t_f}(x_0),
\end{align*}
where $h>0$ is the step size, $V_{t_f}$ is the value function learned in the $k$-th trial, and 
\begin{align*}
z_t = \int_0^t\partial_{K_k}\left(\log \left(\pi_k(a_s)\right)\right)ds \propto \int_0^tx_s^T\left(a_s-K_k x_s\right)ds
\end{align*}
is the eligibility trace.
%We drop $C$ in our algorithm as it only affects the learning rate.
Note that we only update the control gain matrix $K_k$ at the end of each learning trial in order to increase computational efficiency.

The experimental results are given in Figures \ref{fig:experiment2} and \ref{fig:dipc-paths}.
We fix $h=0.0005$ and ${t_f}=800$ during the learning process.
After $50$ learning trials, the control gain matrix becomes
\begin{align*}
K_{50} = [-2.0547\  498.9838\ -578.0119\  -16.9386\ 38.3267\  -90.5340].
\end{align*}
The cost values $V_{t_f}(x_0)$ estimated in each learning trial are plotted in Figure \ref{fig:exp2-cost}.
Clearly the cost values show a decreasing trend as the trial number increases.
We also plot the system trajectories under controllers $u_1=K_1x_t $ and $u_{50}=K_{50} x_t $ in Figure \ref{fig:exp2-path}, respectively.
Obviously, $u_{50}$ leads to a much better transient performance, in the sense that the DIPC shows less oscillations.
Finally, the weights of basis functions ($\{c_{i,t}\}_{t\geq0}$ for $i=1,2,\cdots,10$) before and after conducting the learning process are plotted in Figure \ref{fig:dipc-paths}.

%In each learning trial, the CT-LSPE algorithm is employed to estimate the discounted cost.

%For more details on the experiment setup and the full set of experiment results, see Appendix \ref{appendix:experiment2}.

\begin{figure}[t]
    \centering
    \begin{subfigure}[t]{0.35\textwidth}
        \includegraphics[width=\textwidth]{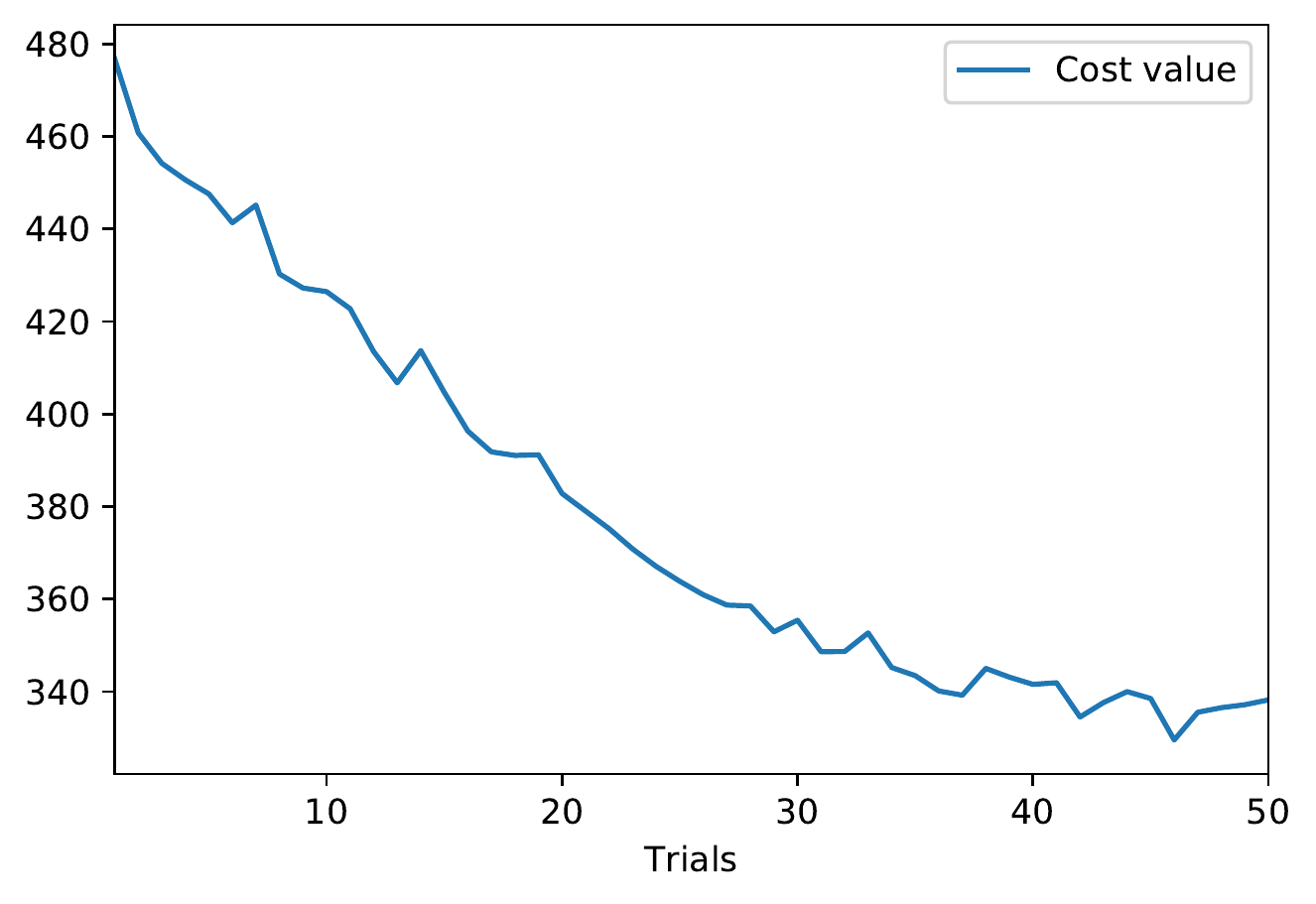}
        \caption{Costs at different learning trials.}
        \label{fig:exp2-cost}
    \end{subfigure}
    ~
        \begin{subfigure}[t]{0.62\textwidth}
        \includegraphics[width=\textwidth]{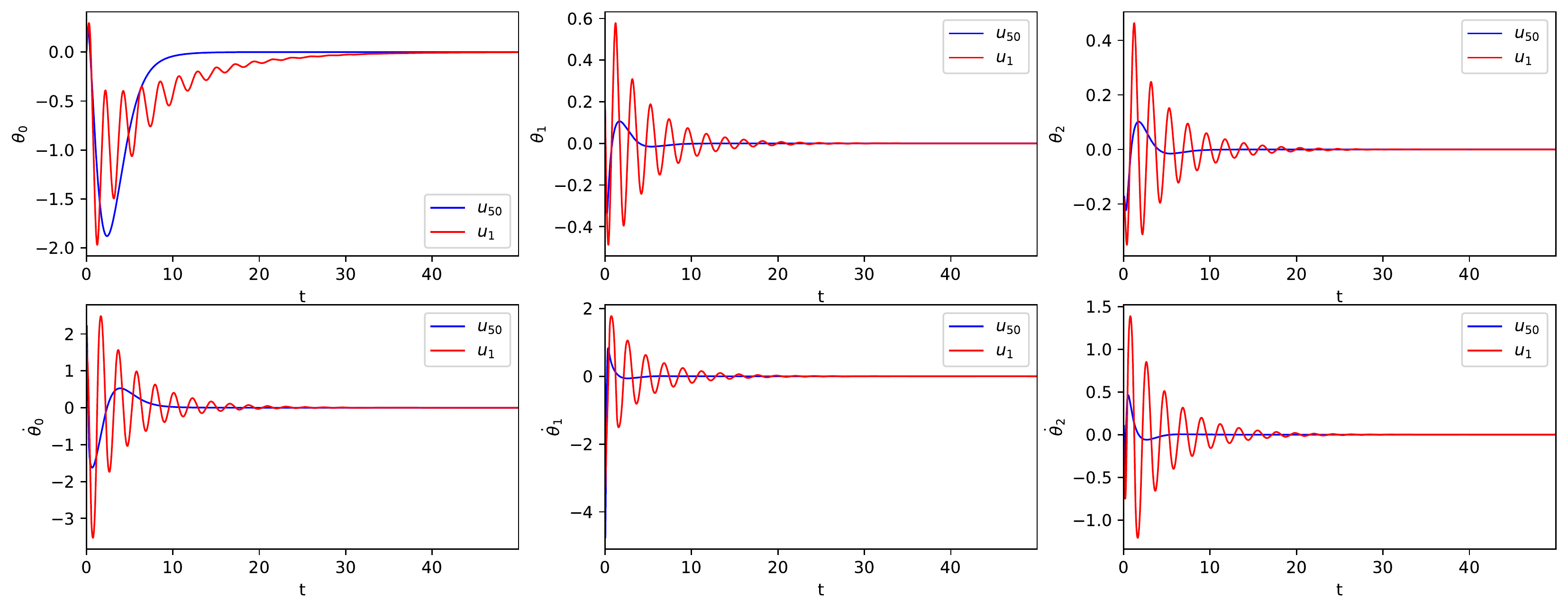}
        \caption{Comparison between different controllers.}
        \label{fig:exp2-path}
         \end{subfigure}

    \caption{The value functions and system trajectories of DIPC, before and after conducting the learning process.}\label{fig:experiment2}
\end{figure}

\begin{figure}
\centering
           \includegraphics[width=\textwidth]{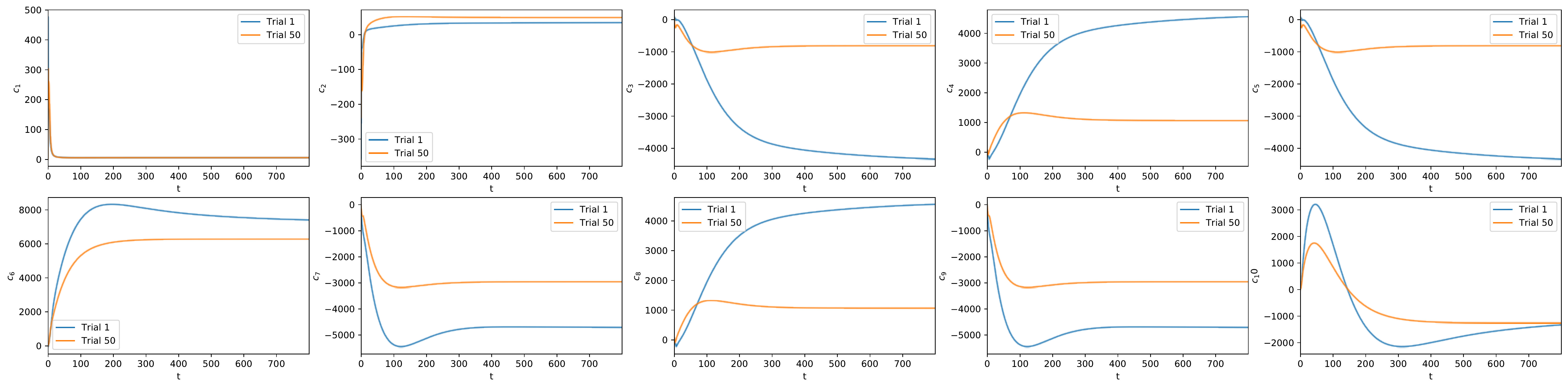}
    \caption{The weights of basis functions ($\{c_{i,t}\}_{t\geq0}$ for $i=1,2,\cdots,10$) for the DIPC, in the first and the $50$-th learning trials.}
    \label{fig:dipc-paths}
\end{figure}

\section{Conclusions}

This paper is motivated to provide a new RL framework for learning problems in continuous environments.
A novel concept, known as the temporal-differential error, and a new class of temporal-differential learning methods, are proposed.
In particular, two RL algorithms with detailed convergence analysis are designed for RL in continuous environments.  
Next, we point out several future research directions that deserve further investigations.
First, the proposed algorithms are purely based on the temporal-differential error.
It is interesting to see how to incorporate eligibility traces in RL algorithms driven by the temporal-differential error.
Second, it is worth checking how to extend other existing discrete-time TD-learning-based RL algorithms to the continuous setting, by means of the techniques developed in this paper.
Finally, besides the prediction problem which is the main focus of this paper, it is also important to investigate the off-policy learning problem in continuous environments.

% Acknowledgements should go at the end, before appendices and references

\acks{The work of Z. P. Jiang has been supported partially by the National Science Foundation under Grants ECCS-1501044 and EPCN-1903781.
The opinions expressed in this paper are those of the authors and do not necessarily reflect the views and policies of Bank of America.}

% Manual newpage inserted to improve layout of sample file - not
% needed in general before appendices/bibliography.

\appendix

\section{Review of Continuous-time Markov Processes}\label{appendix:markov}

Let $(\Omega, \mathcal{F}, \mathbb{P})$ be a probability space. 
Consider $\{x_t\}_{t\geq0}$ as a homogeneous c{\`a}dl{\`a}g (right continuous with left limits) strong Markov process on state space $\mX$, which is a locally compact Hausdorff space with a countable base and is equipped with the Borel $\sigma$-algebra $\mB$.
The definition of $\mX$ covers most spaces that will be encountered in RL tasks, including Euclidean spaces, countable discrete spaces, and topological manifolds.
Starting with $x_0=x$, $x_t$ admits a stationary probability measure $\mu$ on $\mB$. 
%Denote $\mathcal{F}_t$ as the $\sigma$-field generated by $x(s)$,  $0\leq s\leq t$.
Denote by $\mE_x$ the expectation conditionally on the initial state $x$ and by $\mE_\mu$ the expectation with respect to $\mu$.
$[M]_t$ denotes the quadratic variation of a real-valued stochastic process $M_t$ defined on $(\Omega, \mathcal{F}, \mathbb{P})$.
%Throughout this paper, we impose the following assumption on $\{x_t\}_{t\geq0}$:

Given the stationary distribution $\mu$,
we can define a Hilbert space $L^2_\mu$ equipped with the inner product $\langle\cdot,\cdot\rangle_\mu$ and the norm $\|\cdot \|_\mu$:
\begin{align*}
\langle f,g\rangle_\mu = \int_\mX fg\ d\mu = \mE_\mu\left[ fg\right],\quad \|f\|_\mu = \sqrt{\langle f,f\rangle_\mu}, \quad \forall f,\ g\in L^2_\mu.
\end{align*}
%and the $L^2$ norm $\|\cdot \|_\mu$ on $L^2_\mu$ is naturally defined as $\|f\|_\mu = \sqrt{\langle f,f\ran gle_\mu}$ for all $f\in L^2_\mu$.
In addition, we can define an operator  $P_t:L^2_\mu\rightarrow L^2_\mu$ as
\begin{align*}
    P_tf(x)=\mE_x[f(x_t)],\quad \forall x\in \mX.
\end{align*}
%where $\mE_x$ denotes the expectation conditionally on the initial state $x$.
One can easily check that $P_t$ is a contraction semigroup on $L^2_\mu$ (see Lemma \ref{lem:contraction} in Section \ref{appendix:lemma-proof}).
In addition, the infinitesimal generator $\mA$ associated with $P_t$ is defined as
\begin{align*}
    \mathcal{A}f = \lim_{t\rightarrow0}\frac{P_tf-f}{t},
\end{align*}
provided that the above limit exists.
Denote the domain and the range of $\mA$ as $\mDA\subseteq L^2_\mu$ and $\mRange(\mA)\subseteq L^2_\mu$, respectively.
%Both $\mDA$ and $\mRange(A)$ are subsets of $L^2_\mu$.
By Hille–Yosida theorem \citep{Pazy1983}, we know $\mA$ is closed, and  $\mDA$ is a dense subset of $L^2_\mu$.
Since $x_t$ is right continuous, its transition probability is stochastically continuous, and is uniquely determined by its  infinitesimal generator.
    %$\mathcal{A}$ uniquely defines the dynamics of a Markov process.
    %Following the defination in \cite[Chapter 1]{Ito1960}, $\mDA$ can be seen as the image of the Green operator with domain as Borel measurable bounded functions.
%Nevertheless, it is not hard to determin $\mDA$ for many popular processes.
%Denote the domain of $\mathcal{A}$ as $\mDA$.
%governed by the following SDE: 
%\begin{align}
%\mathbb{E}[dx|x] = f(x)dt,\quad \mathbb{E}[dxdx^T|x] = \Sigma(x)dt,
%dx=f(x)dt+\sigma(x)dw,\quad x(0)=x_0,\label{equ:system}
%\end{align}
%where $f$ and $\Sigma$ are two nonlinear functions. 
For additional properties on $P_t$ and $\mA$, see \citet{Nelson1958,Pazy1983,Hansen1995}.

\section{Proofs of Theorems}\label{appendix:theorem-proof}
%\todo[inline]{add some meat here}
%$|\cdot|_F$ denotes the Frobenius norm for matrices. 
%For any $Q\subseteq\mathbb{R}^n$, ${\rm int}(Q)$ denotes the interior of $Q$.
%A real-valued function $f$ is of class $\mathcal{C}^k(Q)$, with $k$ a nonnegative integer and $Q\subseteq\mathbb{R}^n$, if $f$ is continuous when $k=0$, or $f$ is $k$-times continuously differentiable on $Q$ when $k\geq 1$.
%A  function $f: Q\rightarrow \mathbb{R}_+$, where $Q\subseteq\mathbb{R}^n$ and $0\in Q$, is called positive definite (semidefinite), if $f(x)>0$ ($\geq0$) for all $x\in Q\setminus\{0\}$, and $f(0)=0$; $f$ is called proper, if $Q=\mathbb{R}^n$ and $\lim_{|x|\rightarrow \infty}f(x)= \infty$.
In this section, we present proofs of our  three main theorems.
\subsection{Proof of Theorem \ref{thm:approximated-pe} }\label{appendix:proof-approximated-pe}

We first show that $V_t$ exists on $\mH$.
For any $V\in \mH$, we have from the  definition of $K$ that
\begin{align*}
    %\phi^T(x)\dot c_t 
    %& = \int_{\mX} T V_t(z)K(z,x)\mu(dz)\\
    \langle V, \Pi\mA V\rangle_\mu
%     & = \int_{\mX}  V(x)\int_{\mX} \mA V(z)K(z,x)d\mu(z)d\mu(x)\\
     & = \int_{\mX}  V(x)\int_{\mX} (\mA V)(z)K(z,x)d\mu(z)d\mu(x)\\
     & =  \int_{\mX}(\mA V)(z)\int_{\mX}  V(x)K(x,z)d\mu(x)d\mu(z)\\
     & = \langle  V, \mA V\rangle_\mu.
    %& = -\gamma \phi^T(x)c_t + \phi^T(x)\Lambda\int_{\mX} \phi(z)\mA\phi^T(z)\mu(dz)c_t + \int_{\mX} r(z)K(z,x)\mu(dz).
\end{align*}
Since $\mA$ generates a contraction semigroup $P_t$ in $L^2_\mu$ (Lemma \ref{lem:contraction}), we know by Lumer–Phillips theorem \citep{Lumer1961} that $\mA$ is closed and dissipative.
%, that is, $\langle f,\mA f\rangle_\mu\leq0$ for all $f\in \mDA$. 
Hence, $\langle  V, \mA V\rangle_\mu\leq0$.
Thus, $\Pi\mA$ is dissipative on $\mH$, and $-\Pi\mA$ is maximal monotone \citep[Proposition 1]{Borwein2010}.
Then, we have by Minty surjectivity theorem \citep{Minty1962} that $\Pi\mA-\mID$ is surjective. 
In addition, since $\mA$ is closed, we know by the definition of $\Pi$ and the dominated convergence theorem that $\Pi\mA$ is closed on $\mH$.
Hence, again by Lumer–Phillips theorem, $\Pi\mA$ generates a contraction  semigroup on $\mH$.
Then we have from \citet{Phillips1954} that the abstract Cauchy problem \eqref{equ:approximated-pe} admits a unique solution $V_t$.
In addition, by Hille-Yosida theorem,  $(\Pi\mA-\gamma)^{-1}$ is bounded, and hence 
\begin{align*}
0= \Pi (\mT \hat V^*+r)= (\Pi\mA-\gamma) \hat V^*+\Pi r
\end{align*}
 admits a unique solution $\hat V^*$ on $\mH$.
%Thus, we have $\langle V, \Pi(\mA-\gamma I)V\rangle_\mu\leq -\gamma \| V\|_\mu^2$.
%Now, by the Lax-Milgram theorem \cite{Zeidler1995}, we know the linear equation $\Pi (\mT \hat V^*+r)=0$ admits a weak solution $\hat V^*$, in the sense that for any $V\in \mH$, $\langle V, \Pi (\mT \hat V^*+r)\rangle_\mu=0$.

Denote $\tilde V_t= V_t-\hat V^*$.
We next show that $V_t$ converges to $\hat V^*$ exponentially.
%Obviously $\tilde V_t\in\mH$.
Note from \eqref{equ:approximated-pe} and the above analysis that
\begin{align*}
    \frac{d}{dt}\left\|\tilde V_t(x)\right\|_\mu^2
    %\phi^T(x)\dot c_t 
    %& = \int_{\mX} T V_t(z)K(z,x)\mu(dz)\\
      = 2\int_{\mX} \tilde V_t(x)\int_{\mX} \left(\mA\tilde V_t-\gamma \tilde V_t\right)(z)K(z,x)d\mu(z)d\mu(x)
%     & = -2\gamma \|\tilde V_t\|_\mu^2 + 2\int_{\mX} \tilde V_t(x)\int_{\mX} (\mA\tilde V_t)(z)K(z,x)d\mu(z)d\mu(x)\\
%     & = -2\gamma \|\tilde V_t\|_\mu^2 + 2\int_{\mX}(\mA\tilde V_t)(z)\int_{\mX} \tilde V_t(x)K(x,z)d\mu(x)d\mu(z)\\
     \leq -2\gamma \left\|\tilde V_t\right\|_\mu^2.
    %& = -\gamma \phi^T(x)c_t + \phi^T(x)\Lambda\int_{\mX} \phi(z)\mA\phi^T(z)\mu(dz)c_t + \int_{\mX} r(z)K(z,x)\mu(dz).
\end{align*}
%where $\Lambda=\mDiag\{\lambda_i\}$.
%Since $K$ is positive semi-definite, $\Lambda\geq0$.
%Since $\mA$ generates the contraction semigroup $P_t$ on $L^2_\mu$, we know by Lumer–Phillips theorem \cite{Lumer1961} that $\mA$ is dissipative, i.e., $\langle f,\mA f\rangle_\mu\leq0$ for all $f\in \mDA$. 
%Hence, $\langle \tilde V_t, \mA\tilde V_t\rangle_\mu\leq0$.
%Thus, we have from the above  Lyapunov functional  that  $\lim_{t\rightarrow \infty}\left\| V_t-\hat V^*\right\|_\mu=0$.
This implies that $V_t$ is globally exponentially stable at $\hat V^*$.

%Next, we present error bound for \eqref{equ:kernel-vi}.
%Denote $\hat V^*=\phi^Tc^*$ where $c^*$ is the equilibrium of \eqref{equ:randomized-vi-variant}.
%Then we have
Now, to derive the first finite-time error bound, we have
\begin{align*}
    \left\| V_t-V^*\right\|_\mu
     \leq \left\| V_t-\hat V^*\right\|_\mu + \left\|\hat V^*-V^*\right\|_\mu
     \leq e^{-\gamma t}\left\| V_0-\hat V^*\right\|_\mu + \left\|\hat V^*-V^*\right\|_\mu.
\end{align*}
Denote $0=\mT (\hat V^*)+r+\varepsilon$, 
where $\varepsilon$ is the approximation error due to $\Pi$.
Using Dynkin's formula and the tower property, we have 
\begin{align*}
     \hat V^*(x)&=\mE_{x}\left[e^{-\gamma t}  \hat V^*(x_t)+\int_0^{t}e^{-\gamma s}(r(x_s)+\varepsilon(x_s))ds\right],\\
    V^*(x)&=\mE_{x}\left[e^{-\gamma t}V^*(x_t)+\int_0^{t}e^{-\gamma s}r(x_s)ds\right].
\end{align*}
Since $\mu$ is the stationary distribution, we have by Jenson's inequality and Fubini's theorem that
\begin{align*}
    \gamma^2\left\|\hat V^*-V^*\right\|_\mu^2
    =\mE_\mu\left[\left|\mE_{x}\left[\int_0^{\infty} \gamma e^{-\gamma s}\varepsilon(x_s)ds\right]\right|^2\right]
    &\leq \mE_\mu\left[\int_0^{\infty} \gamma e^{-\gamma s}\varepsilon^2(x_s)ds\right]\\
    %&\leq \mE_\mu\left[\int_0^{\infty} \gamma e^{-\gamma s}\varepsilon^2(x_s)ds\right]\\
     &= \left\|\mT (\hat V^*)+r\right\|_{\mu}^2.
    %\rho^*&=\limsup_{T\rightarrow\infty}\int_0^{T}r(x)dt,\label{equ:cost-ergodic}
\end{align*}
%Hence, with probability at least $1-\delta$, 
%\begin{align*}
    %\left\| V_t-V^*\right\|_\mu^2
     %\leq e^{-\gamma t}\left\| V_0-\hat V^*\right\|_\mu^2 + \frac{1}{\gamma^2}d_{\mu}^2(T\mH,\mH).
%\end{align*}

To derive the second finite-time error bound, we rewrite \eqref{equ:approximated-pe} as
%Since \eqref{equ:kernel-vi-projected} is equivalent to \eqref{equ:approximated-pe}, we can rewrite it as 
\begin{align*}
    \partial_t V_t(x)=\mT  V_t(x) + r(x)+\varepsilon_t(x),
\end{align*}
where $\varepsilon_t$ is the approximation error at time $t$.
%Since $V_t\in\mCb$, we know $\varepsilon_t\in\mCb$ for all $t$.
%Since we have shown the convergence of $V_t$ in \sectionref{sec:randomized-approximate-vi}, $V_t$ satisfies with probability at least $1-\delta$ that
%\begin{align*}
    %\|\varepsilon_t\|_\mu\leq d_{\mu}(T \mathcal{F}, \mathcal{F})+\frac{C}{\sqrt{J}}\left(1+\sqrt{2 \log \frac{1}{\delta}}\right),\quad \forall t,
%\end{align*}
%for some $C>0$.
%Note that if $|\varepsilon_t(x)| < r(x)$ for all $x\in\Omega$, then the convergence of the above approximate VI is guaranteed.
Using Dynkin's formula, we have 
\begin{align*}
     V_t(x)&=\mE_{x}\left[e^{-\gamma t} V_0(x_t)+\int_0^{t}e^{-\gamma s}(r(x_s)+\varepsilon_{t-s}(x_s))ds\right],
        %\rho^*&=\limsup_{T\rightarrow\infty}\int_0^{T}r(x)dt,\label{equ:cost-ergodic}
\end{align*}
and hence
\begin{align*}
     V_t(x) - V^*(x)&=\mE_{x}\left[e^{-\gamma t} (V_0-V^*)(x_t)+\int_0^{t}e^{-\gamma s}\varepsilon_{t-s}(x_s)ds\right].
        %\rho^*&=\limsup_{T\rightarrow\infty}\int_0^{T}r(x)dt,\label{equ:cost-ergodic}
\end{align*}
%By tower property,
%\begin{align*}
%    V^*(x)=\mE_{x}\left[e^{-\gamma t}V^*(x_t)+\int_0^{t}e^{-\gamma s}r(x_s)ds\right].
%\end{align*}
%Note that if $x$ is a random variable with the steady-state probability distribution $\mu$, then $x_t$ all share the same distribution $\mu$.
%If $V^*\in L^2_\mu$, we have  
Then, 
%$V_t$ is well-defined, by choosing the same initial condition $V_0$, one has
\begin{align}
    &\quad\frac{\gamma}{1-e^{-\gamma t}}\left\| V_t-V^*\right\|_\mu\notag\\
    &\leq\frac{\gamma e^{-\gamma t}}{1-e^{-\gamma t}}\left\|  \mE_{x}\left[(V_0-V^*)(x_t)\right]\right\|_\mu+\left\| \mE_{x}\left[\int_0^{t}\frac{\gamma e^{-\gamma s}}{1-e^{-\gamma t}}\varepsilon_{t-s}(x_s)ds\right]\right\|_\mu\notag\\
    &=\frac{\gamma e^{-\gamma t}}{1-e^{-\gamma t}}\sqrt{\mE_\mu \left[  \left|\mE_{x}\left[(V_0-V^*)(x_t)\right]\right|^2\right]}
    +\sqrt{\mE_\mu\left[\left| \mE_{x}\left[\int_0^{t}\frac{\gamma e^{-\gamma s}}{1-e^{-\gamma t}}\varepsilon_{t-s}(x_s)ds\right]\right|^2\right]}\notag\\
    &\leq\frac{\gamma e^{-\gamma t}}{1-e^{-\gamma t}}\sqrt{\mE_\mu \left[  \left|(V_0-V^*)(x_t)\right|^2\right]}
    +\sqrt{\mE_\mu\left[\left| \int_0^{t}\frac{\gamma e^{-\gamma s}}{1-e^{-\gamma t}}\varepsilon_{t-s}(x_s)ds\right|^2\right]}\label{equ:jenson-3}\\
%    &=\mE_\mu\left[\left|\mE_{x}\left[\frac{\gamma e^{-\gamma t}}{1-e^{-\gamma t}}( V_0(x_t)-V^*(x_t))+\int_0^{t}\frac{\gamma e^{-\gamma s}}{1-e^{-\gamma t}}\varepsilon_s(x_s)ds\right]\right|^2\right]\notag\\
%    &\leq\mE_\mu\left[\left|\frac{\gamma e^{-\gamma t}}{1-e^{-\gamma t}}( V_0(x_t)-V^*(x_t))+\int_0^{t}\frac{\gamma e^{-\gamma s}}{1-e^{-\gamma t}}\varepsilon_s(x_s)ds\right|^2\right]\label{equ:jenson-1}\\
%    &\leq \left(\frac{\gamma e^{-\gamma t}}{1-e^{-\gamma t}}\right)\left\| V_0-V^*\right\|_\mu+2\mE_\mu\left[\left|\int_0^{t}\frac{\gamma e^{-\gamma s}}{1-e^{-\gamma t}}\varepsilon_s(x_s)ds\right|^2\right]\label{equ:sum-square}\\
    &\leq \frac{\gamma e^{-\gamma t}}{1-e^{-\gamma t}} \left\| V_0-V^*\right\|_\mu+\sqrt{\int_0^{t}\frac{\gamma e^{-\gamma s}}{1-e^{-\gamma t}}\|\varepsilon_{t-s}\|_\mu^2ds}\label{equ:jenson-2}\\
     &= \frac{\gamma e^{-\gamma t}}{1-e^{-\gamma t}} \left\| V_0-V^*\right\|_\mu+\sqrt{\int_0^{t}\frac{\gamma e^{-\gamma (t-s)}}{1-e^{-\gamma t}}\left\|(\Pi-\mID)(\mT V_s+r)\right\|_{\mu}^2ds}\notag\\
    &\leq \frac{\gamma e^{-\gamma t}}{1-e^{-\gamma t}} \left\| V_0-V^*\right\|_\mu+\sup_{s\in[0,t]}\left\|(\Pi-\mID)(\mT V_s+r)\right\|_{\mu},\notag
\end{align}
where
\eqref{equ:jenson-3} and \eqref{equ:jenson-2} above are due to Jenson's inequality.
This completes the proof.

\subsection{Proof of Theorem \ref{thm:lspe}}\label{appendix:proof-lspe}
We first show $R_t = \left(\int_0^t\phi_s\phi^T_sds+\rho I\right)^{-1}$. 
%Then we can deduce an ODE of $R$ from the following equality: 
Since
\begin{align*}
    \frac{d}{dt}\left(R_t\left(\int_0^t\phi_s\phi^T_sds+\rho I\right)\right) = \dot R_t\left(\int_0^t\phi_s\phi^T_sds+\rho I\right)+R_t\phi_t\phi^T_t,
\end{align*}
$R_t = \left(\int_0^t\phi_s\phi^T_sds+\rho I\right)^{-1}$ solves the ODE of $R_t$ in Algorithm \ref{alg:temporal-differential}.
Since the right-hand-side of the ODE of $R_t$ is locally Lipschitz, and $\left(\int_0^t\phi_s\phi^T_sds+\rho I\right)^{-1}$ is bounded, we can select a sufficiently large subset $S$ of the space of all symmetric positive definite matrices, such that $R_t = \left(\int_0^t\phi_s\phi^T_sds+\rho I\right)^{-1}$ is a unique solution of the ODE of $R_t$ on $S$, and it also remains in $S$.

We next denote
\begin{align*}
    A_t &\triangleq \frac{1}{t+1}\int_0^t\phi_s(\mA\phi_s-\gamma \phi_s)^Tds,\quad 
    b_t\triangleq\frac{1}{t+1}\int_0^t\phi_s r_sds,\\
    \Sigma_t &\triangleq  \frac{1}{t+1}\left(\int_0^t\phi_s\phi^T_sds+\rho I\right),\quad
    \eta_t \triangleq \frac{1}{t+1}\int_0^t\phi_sdM_s^T.
%    A &\triangleq \int_{\mX}\phi(\mA\phi^T-\gamma \phi^T)d\mu,\quad b\triangleq\int_{\mX}\phi rd\mu,
%\Sigma \triangleq \int_{\mX}\phi\phi^Td\mu.
    %A_t &\triangleq \frac{1}{t}\int_0^t\phi_s(\mA\phi^T_s-\gamma \phi^T_s)ds,\quad b_t\triangleq\frac{1}{t}\int_0^t\phi_s r_sds,\quad
    %\Sigma_t \triangleq  \frac{1}{t}\int_0^t\phi_s\phi^T_sds+\frac{\lambda}{t} I,\\
    %A &\triangleq \int_{\mX}\phi(\mA\phi^T-\gamma \phi^T)d\mu,\quad b\triangleq\int_{\mX}\phi rd\mu,\quad
    %\Sigma \triangleq \int_{\mX}\phi\phi^Td\mu.
\end{align*}
%where $M_t$ is a martingale satisfying 
%\begin{align*}
%    d\phi_t = \mA\phi_tdt +dM_t,
%\end{align*}
%which is due to Dynkin's formula.
Then, we can rewrite the updating equation of $c_t$ in Algorithm \ref{alg:temporal-differential} as 
%\eqref{equ:randomized-vi-empirical} can be rewritten as
%By replacing the inner product in \eqref{equ:kernel-vi} with empirical time integration, we naturally obtain the following equation:
\begin{align}
    \dot c_t 
    %& =\frac{1}{t}\int_0^t\phi(x_s) (d\phi^T(x_s)-\gamma \phi(x_s) ds)c_t+\frac{1}{t}\int_0^t\phi(x_s) r(x_s)ds\notag\\
%     & = \Sigma_t^{-1}\frac{1}{t+1}\int_0^t\phi_s(\mA\phi^T_s-\gamma \phi^T_s)dsc_t +\Sigma_t^{-1}\frac{1}{t+1}\int_0^t\phi_s r_sds+\Sigma_t^{-1}\frac{1}{t+1}\int_0^t\phi_s dM_t^Tc_t\notag\\
    %& = \Sigma_t^{-1}A_tc_t+\Sigma_t^{-1}b_t+\Sigma_t^{-1}\int_0^t\phi(x_s) dM_s^Tc_t\notag\\
     = \Sigma^{-1}Ac_t+\Sigma^{-1}b +(\Sigma_t^{-1}A_t-\Sigma^{-1}A)c_t + \Sigma_t^{-1}b_t-\Sigma^{-1}b+\Sigma_t^{-1}\eta_tc_t,\label{equ:randomized-vi-empirical}
\end{align}
where
\begin{align*}
    A &\triangleq \int_{\mX}\phi(\mA\phi-\gamma \phi)^Td\mu,\quad b\triangleq\int_{\mX}\phi rd\mu,\quad
    \Sigma\triangleq \int_{\mX}\phi\phi^Td\mu.
\end{align*}
%the proof of Theorem \ref{thm:td}
%$M_t$ is a martingale satisfying 
%%\begin{align*}
%    %\dot c 
%    %%& = \left(\int_0^t\phi\phi^Tds\right)^{-1}\int_0^t\phi (\mA\phi^Tc-\phi^Tc+r)ds.\\
%    %%& =-\gamma c+\left(\int_0^t\phi\phi^Tds\right)^{-1}\int_0^t\phi \mA\phi^Tdsc+\left(\int_0^t\phi\phi^Tds\right)^{-1}\int_0^t\phi rds.\\
%     %& =\int_0^t\phi (-\gamma \phi ds+d\phi^T)c+\int_0^t\phi rds+\int_0^t\phi dM^Tc.
%%\end{align*}
%\begin{align*}
%    d\phi_t = \mA\phi_tdt +dM_t,
%\end{align*}
%which is due to Dynkin's formula.

%Note that
%\begin{align*}
    %d\left(\phi(x_t)\phi^T(x_t)\right) = \phi(x_t)d\phi^T(x_t) + d\phi(x_t)\phi^T(x_t) + d\phi(x_t)d\phi^T(x_t)
%\end{align*}

%To simplify the notation, we denote
%\begin{align*}
    %A_t &:= \frac{1}{t}\int_0^t\phi(x_s)(\mA\phi^T(x_s)-\gamma \phi^T(x_s))ds,\quad b_t:=\frac{1}{t}\int_0^t\phi(x_s) r(x_s)ds,\\
    %A &:= \int_{\mX}\phi(x)(\mA\phi^T(x)-\gamma \phi^T(x))d\mu,\quad b:=\int_{\mX}\phi(x) r(x)d\mu.
%\end{align*}
%Then \eqref{equ:randomized-vi-empirical} can be rewritten as
%\begin{align}
    %\dot c_t 
    %& = A_tc_t+b_t+\frac{1}{t}\int_0^t\phi(x_s) dM_s^Tc_t\notag\\
    %& = Ac_t+b +(A_t-A)c_t + b_t-b+\frac{1}{t}\int_0^t\phi(x_s) dM_s^Tc_t.
%\end{align}
%Denote $c^*$ as the solution to $Ac^*+b=0$.
%Since     
    %\begin{align*}
        %\int_{\mX}\phi(x)\phi^T(x)d\mu>0,
    %\end{align*}
%$c^TAc<0$ for all $c\in\mR^N$.
%Hence, $c^*=-A^{-1}b$.

%Next, we show $c_t$ converges to $c^*$ in the mean square sense with respect to the stationary distribution $\mu$.
Let $\tilde c_t=c_t- c^*$.
Then,
\begin{align*}
    \dot{\tilde c}_t 
      = \Sigma^{-1}A\tilde c_t &+\left(\Sigma_t^{-1}A_t-\Sigma^{-1}A +\Sigma_t^{-1}\eta_t\right)\tilde c_t
      + \Sigma_t^{-1}b_t-\Sigma^{-1}b+\left(\Sigma_t^{-1}A_t-\Sigma^{-1}A + \Sigma_t^{-1}\eta_t\right)c^*.
\end{align*}
%From Lemma \ref{lemma:mean-square-iss}, in order to prove the convergence of $c_t$ to $c^*$, we only need to show that 
%\begin{align*}
%    \Sigma_t^{-1}A_t-\Sigma^{-1}A,\quad
%    \Sigma_t^{-1}\int_0^t\phi_s dM_s^T,
%    %A_t-A +\frac{1}{t}\int_0^t\phi(x_s) dM_s^T
%    \quad \text{and}\quad 
%    \Sigma_t^{-1}b_t-\Sigma^{-1}b%+\left(\Sigma_t^{-1}A_t-\Sigma^{-1}A + 
%    %\Sigma_t^{-1}\int_0^t\phi(x_s) dM_s^T
%     %b_t-b+\left(A_t-A + \frac{1}{t}\int_0^t\phi(x_s) dM_s^T\right)c^*
%\end{align*}
%converge to $0$ asymptotically in mean square sense.%$\|\cdot\|_\mu$. 
By Assumptions \ref{assumption:poisson}, we have
%Denote $\psi_A$, $\psi_b$, and $\psi_\Sigma$ as the solutions to 
%the following Poisson equations, respectively:
\begin{align*}
    \mA \psi_i(x_s) =\Delta_{i,s},
    \end{align*}
where $i=2,3,4$, $\psi_4= \psi_1-\gamma\psi_3$, and
\begin{align*}
    \Delta_{2,s} =\phi(x_s)\phi^T(x_s)-\Sigma,\quad
     \Delta_{3,s} =\phi(x_s) r(x_s)-b,\quad
         \Delta_{4,s} &=\phi(x_s)(\mA\phi^T(x_s)-\gamma \phi^T(x_s))-A.
\end{align*}
%By Assumptions \ref{assumption:ergodic} and \ref{assumption:poisson}, we know $\{x_t\}_{t\geq0}$ is ergodic in the sense that 
%\begin{align*}
%    \lim_{t\rightarrow\infty}\left\|\mathbb{P}(t,x,\cdot)-\mu(\cdot)\right\|_{f}=0,
%\end{align*}
%where $\|\cdot\|_f$ denotes the $f$-norm, and  $\psi_A$, $\psi_b$, and $\psi_\Sigma$ exist. 
%and in fact are bounded by a linear function of $V$.
Then,  we have by Lemma \ref{lemma:poisson-equation} (element-wise) that
\begin{align*}
\lim_{t\rightarrow\infty}\mE_x\left[\psi_i(x_t)\right]=0, \quad  \sup_{t\geq0}\frac{1}{t+1}\mE_x\left[\left|\int_0^t\Delta_{i,s}ds\right|^2\right]<\infty,\quad i=2,3,4.
\end{align*}
%where $i=2,3,4$.
%where $C_A$ and $C_b$ are constants.

%Under \assumptionref{assumption:noise}, we have
%Suppose $\phi_j$ is sufficiently smooth so that we have
Now consider the boundedness of $\eta_t$.
The quadratic variation of the $j$-th element in $M_t$ is given below: 
\begin{align*}
    d[M_j]_t & = (dM_{j,t})^2 = (d\phi_j(x_t))^2 + O(dt^2).
    %& = \left(\sum_i\partial_{x_i}\phi_j(x)dx_i+O(dx_i^2)\right)^2 + O(dt^2) = \sum_iO(dx_i^2)+ O(dt^2).
\end{align*}
By Assumption \ref{assumption:noise} and Ito's isometry, we have 
%\begin{align*}
    %0  \leq \mE_x\left[\left|\int_0^t\phi_j(x_s) dM_{k,s}\right|^2\right]=
    %\int_{t_0}^t\phi_j^2(x_s) d[M_k]_s = \int_{t_0}^t\phi_j^2(x_s) d[\phi_k(x)]_s + O(dt)\\ 
    %%& = \frac{1}{t^{1+\delta}}\int_{t_0}^t\phi_j^2(x)\frac{1}{t^{1-\delta}}\frac{d[\phi_k]}{ds}ds\\ % + \frac{1}{t}\sum_iO(dx_i)\\
      %\leq C\int_{t_0}^t\phi_j^2(x_s)ds + O(dt)% + \frac{1}{t}\sum_iO(dx_i)\\
%\end{align*}
%for some $C>0$.
\begin{align*}
    \sup_{t\geq0}(t+1)\mE_x\left[\left|\eta_{t}\right|^2\right]<\infty.
%    \lim_{t\rightarrow\infty}\mE_x\left[\left|\eta_{t}\right|^2\right]=0.
\end{align*}
%By \assumptionref{assumption:poisson}, we know $\int_0^t\phi_j(x_s) dM_{k,s}$ is bounded in $L^2$ norm.
%Denote $c^*$ as the equilibrium point $Ac^*+b=0$, and $\tilde c_t=c_t-c^*$.
%Next, we show $c_t$ converges to $c^*$ in the mean square sense with respect to the stationary distribution $\mu$.
%\begin{align}
    %\frac{d}{dt}|\tilde c_t|^2 
    %& = \tilde c_t^TA\tilde c_t +\tilde c_t^T\frac{1}{t}\int_0^t\phi(x_s) dM_t^Tc_t+\tilde c_t^T(A_t-A)c_t+\tilde c_t^T(b_t-b)\\
    %& \leq -\gamma|\tilde c_t|^2 +\tilde c_t^T\frac{1}{t}\int_0^t\phi(x_s) dM_t^Tc_t+\tilde c_t^T(A_t-A)c_t+\tilde c_t^T(b_t-b).%\label{equ:randomized-vi-empirical}
%\end{align}
%Note by \assumptionref{assumption:ergodic} that $\mE_0[A_t]$ and $\mE_0[b_t]$ converge to $A$ and $b$, respectively.

%Now, we know by Ito isometry that
%\begin{align*}
    %\frac{1}{t}\int_0^t(\phi \mathcal{A}\phi^Tds -\phi d\phi^Tds)c
%\end{align*}
%converges $0$ in mean square sense, at a rate at least of $t^{1-\delta}$.

Applying  Lemma \ref{lemma:law-large-number} with $\alpha_t=1/\sqrt{t+1}$, we know with probability one that
$A_t$, $b_t$, and $\Sigma_t$ converge to $A$, $b$, and $\Sigma$, respectively.
In addition, we know with a probability at least of $1-C_0/\epsilon^2$ that
\begin{align*}
    |A_t-A|\leq \frac{\epsilon}{\sqrt{t+1}},\quad
    |b_t-b|\leq \frac{\epsilon}{\sqrt{t+1}}, \quad 
    |\Sigma_t-\Sigma|\leq \frac{\epsilon}{\sqrt{t+1}}, \quad \text{and}\quad 
    |\eta_t|\leq \frac{\epsilon}{\sqrt{t+1}}, 
%    \left|\frac{1}{t+\rho}\int_0^t\phi_s dM_s^T\right|\leq \frac{\epsilon}{\sqrt{t+\rho}},\quad 
    %\forall t\geq0,
    %\lim_{t\rightarrow\infty}A_t=A, \quad \lim_{t\rightarrow\infty}b_t=b,\quad \text{and}\quad \frac{1}{t}\int_0^t\phi(x_s) dM_s^T=0,
\end{align*}
for all $t\geq0$,
where 
\begin{align*}
    C_0 = \sup_{t\geq0}\max\left\{ 
    (t+1)\mE_x\left[\left|\eta_t\right|^2\right],\ 
        \frac{1}{t+1}\mE_x\left[\left|\int_0^t\Delta_{i,s}ds\right|^2\right],\ i=2,3,4
    \right\}.
\end{align*}
%Note that by the definition of $\Sigma_t$ and the fact that it converges to $\Sigma$ in the induced norm,
%there exists a positive real $C$, such that 
%\begin{align*}
    %\Sigma_t - C\left(\frac{1}{t}+1 \right)I\geq0,\quad \forall t\geq0.
%\end{align*}
%Moreover,
%\begin{align*}
%%\left|\Sigma_t^{-1}\right|\geq \left|\Sigma^{-1}\right| - \left|\Sigma^{-1}\left(\Sigma_t-\Sigma\right)\Sigma_t^{-1}\right|
%%\geq \left|\Sigma^{-1}\right| - \left|\Sigma^{-1}\right|\left|\Sigma_t-\Sigma\right|\left|\Sigma_t^{-1}\right| \\
%%\left|\Sigma_t^{-1}\right|\geq \frac{\left|\Sigma^{-1}\right|}{1 + \left|\Sigma^{-1}\right|\left|\Sigma_t-\Sigma\right|} \\
    %\Sigma_t^{-1}A_t-\Sigma^{-1}A = 
    %%\left(\Sigma_t^{-1}-\Sigma^{-1}\right)A_t+\Sigma^{-1}(A_T-A) = 
        %%\Sigma_t^{-1}\left(\Sigma-\Sigma_t\right)\Sigma^{-1} A_t+\Sigma^{-1}(A_T-A) = 
            %\Sigma_t^{-1}\left(\Sigma-\Sigma_t\right)\Sigma^{-1}(A_t-A)+\Sigma_t^{-1}\left(\Sigma-\Sigma_t\right)\Sigma^{-1}A+\Sigma^{-1}(A_t-A).
%\end{align*}
As a result, by the boundedness of $A_t$, $b_t$, $\Sigma_t$, and $\eta_t$, we have with a probability at least  $1-C_0/\epsilon^2$ that
\begin{align*}
    %\Sigma_t & \geq \frac{1}{C}((t+1)^{-1}+1),\\
    \left|\Sigma_t^{-1}\right| & \leq {\rho_0},\quad 
    \left|I-\Sigma\Sigma_t^{-1}\right|  \leq  \left|\Sigma_t-\Sigma\right| \left|\Sigma_t^{-1}\right|\leq \frac{\epsilon\rho_0}{t+1},\\
     \left|\Sigma\Sigma_t^{-1}\right| & \leq  1+ \frac{\epsilon\rho_0}{t+1},\quad
    \left|\Sigma\Sigma_t^{-1}\eta_t\right|\leq \frac{\epsilon\left(1+ \epsilon\rho_0\right)}{\sqrt{t+1}},\\%\frac{\rho}{t_0+\rho},\\
%    \left|\Sigma_t^{-1}A_t-\Sigma^{-1}A\right|
%    & = \left|\Sigma_t^{-1}\left(\Sigma-\Sigma_t\right)\Sigma^{-1}(A_t-A)+\Sigma_t^{-1}\left(\Sigma-\Sigma_t\right)\Sigma^{-1}A+\Sigma^{-1}(A_t-A)\right|\\
%    & \leq \left|\Sigma_t^{-1}\right|\left|\Sigma-\Sigma_t\right|\left|\Sigma^{-1}\right|\left|A_t-A\right|+\left|\Sigma_t^{-1}\right|\left|\Sigma-\Sigma_t\right|\left|\Sigma^{-1}A\right|+\left|\Sigma^{-1}\right|\left|A_t-A\right|\\
%        & \leq \frac{\epsilon^2\rho _0}{(t+1)^2}\left|\Sigma^{-1}\right|+\frac{\epsilon \rho_0}{t+1}\left|\Sigma^{-1}A\right|+\frac{\epsilon}{{t+1}}\left|\Sigma^{-1}\right|\\
%    & \leq \frac{\epsilon}{{t+1}}\left( \epsilon\rho _0\left|\Sigma^{-1}\right|+\rho_0\left|\Sigma^{-1}A\right|+\left|\Sigma^{-1}\right|\right),\\
% \left|\Sigma_t^{-1}b_t-\Sigma^{-1}b\right|
%    & \leq \frac{\epsilon}{{t+1}}\left( \epsilon\rho _0\left|\Sigma^{-1}\right|+\rho_0\left|\Sigma^{-1}b\right|+\left|\Sigma^{-1}\right|\right),
\left|\Sigma\Sigma_t^{-1}A_t-A\right|
%    & = \left|\Sigma\Sigma_t^{-1}(A_t-A) + (\Sigma\Sigma_t^{-1}-I)A\right|\\
    & \leq \left|\Sigma\Sigma_t^{-1}\right|\left|A_t-A\right|+ \left|\Sigma\Sigma_t^{-1}-I\right|\left|A\right|
%       \leq \left(1+ \frac{\epsilon\rho_0}{t+1}\right)\frac{\epsilon}{t+1}+\frac{\epsilon\rho_0}{t+1}|A|\\
       \leq \frac{\epsilon}{\sqrt{t+1}}\left(\epsilon\rho_0+\rho_0|A|+1\right),\\
%    & \leq \frac{\epsilon}{{t+1}}\left( \epsilon\rho _0\left|\Sigma^{-1}\right|+\rho_0\left|\Sigma^{-1}A\right|+\left|\Sigma^{-1}\right|\right),\\
 \left|\Sigma\Sigma_t^{-1}b_t-b\right|
    & \leq \frac{\epsilon}{\sqrt{t+1}}\left( \epsilon\rho _0+\rho_0\left|b\right|+1\right),
\end{align*}
%where $t_0$ satisfies
%\begin{align*}
%\frac{\rho}{t_0+\rho} + \frac{\epsilon}{\sqrt{t_0+\rho}}\leq \left|\Sigma\right|.
%\end{align*}
where
\begin{align*}
%\rho_0 = \max\left\{\frac{\epsilon+\rho}{\rho|\Sigma|}, \frac{1}{\rho}\right\}.
\rho_0 = \frac{\epsilon+|\Sigma|}{|\Sigma|^2} + \frac{\left(\epsilon+|\Sigma|\right)^2}{\rho|\Sigma|^2}.
%\frac{\epsilon+\sqrt{\epsilon^2+4\rho|\Sigma|}}{2|\Sigma|\sqrt{\rho}}.
%\frac{\rho}{t_0+\rho} + \frac{\epsilon}{\sqrt{t_0+\rho}}\leq \left|\Sigma\right|.
\end{align*}
%and the convergence rate is $O(1/\sqrt{t})$.
Applying Lemma \ref{lemma:mean-square-iss}, we know $c_t$ converges to $c^*$ with probability one. 
In addition,
\begin{align*}
    \left|c_t-c^*\right|^2\leq \frac{\lambda_M}{\lambda_m}e^{\zeta(t,0)}\left|c_0-c^*\right|^2+\frac{\lambda_M}{\varepsilon\lambda_m}\int_{0}^t\xi(s)e^{\zeta(t,s)}ds
    %\left|c_t-c^*\right|^2\leq e^{-\gamma(t-1)+\log t+4\epsilon N(\sqrt{t}-1)}\left|c_{1}-c^*\right|^2+{\epsilon^2N(\sqrt{N}|c^*|+1)^2}\int_{1}^te^{-\gamma(t-s)+\log(t/s)+4\epsilon N(\sqrt{t}-\sqrt{s})}ds
    %\left|c_t\right|^2\leq e^{-(\gamma-\varepsilon)t+2\int_0^t|\Delta A_{s}|ds}\left|c_0\right|^2+\frac{1}{\varepsilon}\int_0^te^{-(\gamma-\varepsilon)(t-s)+2\int_s^t|\Delta A_{\tau}|d\tau}\left|\Delta_{s}\right|^2ds.
\end{align*}
%\begin{align*}
    %\left|c_t\right|^2\leq e^{-(\gamma-\varepsilon)t+2\int_0^t|\Delta A_{s}|ds}\left|c_0\right|^2+\frac{1}{\varepsilon}\int_0^te^{-(\gamma-\varepsilon)(t-s)+2\int_s^t|\Delta A_{\tau}|d\tau}\left|\Delta_{s}\right|^2ds.
%\end{align*}
with a probability at least $1-C_0/\epsilon^2$, 
where $\lambda_M$ and $\lambda_m$ represent the largest and the smallest eigenvalues of $\Sigma$, respectively, $\varepsilon <2\gamma/\lambda_M$ is an arbitrary positive constant, and by Lemma \ref{lem:bounds},
\begin{align*}
    \zeta(t,s) & 
= -\left(\frac{2\gamma}{\lambda_M}-\varepsilon\right)(t-s)
%+2\epsilon^2\rho_0\left|\Sigma^{-1}\right|\left(\frac{1}{s+\rho}-\frac{1}{t+\rho}\right)\\
%+a\log\left(\frac{t+1}{s+1}\right)
+4\epsilon\left(2+ 2\epsilon\rho_0+\rho_0\left|A\right|\right)\left(\sqrt{t+1}-\sqrt{s+1}\right)\\
&\leq -2\left(\frac{\gamma}{\lambda_M}-\varepsilon\right)(t-s)+ C_1,\\
%&\leq-(2\bar\gamma-\varepsilon)(t-s)+4\epsilon C_1\left(\sqrt{t+\rho}-\sqrt{s+\rho}\right)\\
%&\leq-2(\bar\gamma-\varepsilon)(t-s)+\frac{\epsilon C_1}{\varepsilon}+\frac{\varepsilon}{4\epsilon C_1}-1,\\
\xi(s) & 
%= \frac{\epsilon^2}{s+1}\left(\rho_0\left(\left|b\right|+\left|A\right||c^*|\right) +(1+\epsilon\rho_0)(1+2|c^*|)\right)^2
= \frac{ C_2}{s+1},
\end{align*}
where
\begin{align*}
C_1 = \varepsilon+\frac{4\epsilon^2\left(2+ 2\epsilon\rho_0+\rho_0\left|A\right|\right)^2}{\varepsilon},\quad %-a\log\left(\frac{\varepsilon}{2a}\right),\\
%\rho_0\left|\Sigma^{-1}\right|(a\rho-\log(a\rho))+\frac{2}{\epsilon}\left(\rho_0+\rho_0\left|\Sigma^{-1}A\right|+\left|\Sigma^{-1}\right|\right)\left(\frac{1}{4a}+a\rho\right),\\
%a = 2\epsilon\left(\epsilon\rho_0+1+\rho_0\left|A\right|\right),\\
C_2 = \epsilon^2\left(\rho_0\left(\left|b\right|+\left|A\right||c^*|\right)+(1+\epsilon\rho_0)(1+2|c^*|)\right)^2.
%2\left(\rho_0\left(\left|b\right|+\left|A\right||c^*|\right)+(1+\epsilon\rho_0)(1+|c^*|)\right)^2
\end{align*}
In particular,  we know by Lemma \ref{lem:int-bound} that $\left|c_t-c^*\right|^2=O\left(t^{-1}\right)$.
This completes the proof.

\subsection{Proof of Theorem \ref{thm:td}}\label{appendix:proof-td}
First, we introduce the following auxillary system:
\begin{align*}
    \dot{\hat c}_t = \alpha_t(A\hat c_t+b),\quad \hat c_0 = c_0,
\end{align*}
where $A$ and $b$ follow the definitions in Appendix \ref{appendix:proof-lspe}.

We first show that $\hat c_t$ converges to $c^*$.
Indeed, a direct calculation shows that 
\begin{align*}
    \frac{d}{dt}\left|\hat c_t-c^*\right|^2  = 2\alpha_t(\hat c_t-c^*)^TA(\hat c_t-c^*) \leq -2\gamma\alpha_t|\hat c_t-c^*|^2.
\end{align*}
%By Gr{\"o}nwall's inequality \cite{Dragomir2003}, we have 
%Multiplying by $e^{2\gamma\int_0^t\alpha_sds}$ on both sides of the above inequality, and taking integration, we have
By the comparison lemma \citep[Lemma C.3.1]{Sontag1998}, we have 
\begin{align*}
    |\hat c_t-c^*|^2 \leq e^{-2\gamma\int_{0}^t \alpha_sds}|\hat c_0-c^*|^2.
\end{align*}
Since $\int_0^\infty\alpha_tdt = \infty$,  $\hat c_t$ converges to $c^*$ as $t$ goes to the infinity.

Denote $\tilde c = c-\hat c$.
%Using the notations given in the previous subsection, w
We can rewrite the updating equation in Algorithm \ref{alg:temporal-differential-infty} as 
%\eqref{equ:randomized-vi-empirical} can be rewritten as
%By replacing the inner product in \eqref{equ:kernel-vi} with empirical time integration, we naturally obtain the following equation:
\begin{align*}
    d\tilde c_t = \alpha_t(A\tilde c_t+\Delta_{4,t}\tilde c_t+\Delta_{4,t}\hat c_t+\Delta_{3,t})dt + \alpha_t\phi(x_t)(\tilde c_t+\hat c_t)^TdM_t,\quad \tilde c_0=0,%\label{equ:randomized-vi-empirical}
\end{align*}
where $\Delta_{3,t}$ and $\Delta_{4,t}$ follow the definitions in Appendix \ref{appendix:proof-lspe}.
%\begin{align*}
%    \Delta_{A,t} &\triangleq\phi(x_t)(\mA\phi^T(x_t)-\gamma \phi^T(x_t))-A,\quad
%    \Delta_{b,t}  \triangleq\Delta_{A,t}\hat c_t+\Delta_{b,t},
%%    A &\triangleq \int_{\mX}\phi(\mA\phi^T-\gamma \phi^T)d\mu,\quad b\triangleq\int_{\mX}\phi rd\mu.
%\end{align*}

Note that $c^TdM = \sum_jc_jdM_j$, where $dM_j$ denotes the $j$-th element in $dM$.
Now the proof is completed by applying  Lemma \ref{lemma:mean-square-iss-alpha}.

%Moreover,
%\begin{align*}
    %\left|c_t-c^*\right|^2 \leq e^{-2\gamma\int_{0}^t \alpha_sds}\left(\left|c_0-c^*\right|^2+C\right),
%\end{align*}
%where $C>0$ is sample-path-dependent.
%This completes the proof.
%\begin{algorithm2e}[t]
    %\caption{TD($\lambda$) learning}
    %\label{alg:temporal-differential-lambda}

    %Choose $N$ basis functions $\phi_j$. Denote $\phi = [\phi_1\ \phi_2\ \cdots\ \phi_N]^T$. 
%Select an arbitrary real vector $c(0)$.

        %\For{each episode}{
        %%\STATE Collect data $(x,u)$ from system \eqref{equ:nonlinear-system}.
            %\begin{align*}
                %dc_t & =\alpha_tz_t \left(d\phi-\gamma \phi dt\right)^Tc_t+\alpha_t z_t rdt,\\
                %\dot z_t & = -\lambda z_t + \phi_t.
                %%\dot c =-c+\left(\int_0^t\phi\phi^Tds\right)^{-1}\left(\int_0^t\phi d\phi^Tc+\int_0^t\phi rds\right).
            %\end{align*}            
            
       %}
            %%If $|K_k-K_{k+1}|<\epsilon$ (a small positive number), stop; else go to 2.
%%    \end{algorithmic}
%\end{algorithm2e}

\section{Technical Lemmas}\label{appendix:lemma-proof}
In this section, we present several supporting lemmas that have been used in the proofs of our main results.
\begin{lemma}\label{lem:contraction}
    $P_t$ is a contraction semigroup in $L^2_\mu$.
\end{lemma}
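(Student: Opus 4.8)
The plan is to verify the three defining properties of a $C_0$ contraction semigroup in turn, namely the composition law, the contraction bound, and strong continuity at the origin; well-definedness of $P_t$ as a map $L^2_\mu\to L^2_\mu$ will fall out of the contraction estimate. First I would treat the semigroup law. Clearly $P_0f(x)=\mE_x[f(x_0)]=f(x)$, so $P_0=\mID$. For the composition, I would combine time-homogeneity with the Markov property: conditioning on $\mathcal{F}_t$ and using the tower property,
\[
P_{t+s}f(x)=\mE_x[f(x_{t+s})]=\mE_x\big[\mE_x[f(x_{t+s})\mid\mathcal{F}_t]\big]=\mE_x\big[\mE_{x_t}[f(x_s)]\big]=\mE_x[(P_sf)(x_t)]=P_tP_sf(x),
\]
where the third equality is exactly the Markov property together with homogeneity. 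This gives $P_{t+s}=P_tP_s$.

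Next I would establish the contraction bound by applying the conditional Jensen inequality pointwise and then invoking stationarity of $\mu$. Precisely,
\[
\|P_tf\|_\mu^2=\int_\mX\big|\mE_x[f(x_t)]\big|^2d\mu(x)\leq\int_\mX\mE_x\big[|f(x_t)|^2\big]d\mu(x)=\mE_\mu\big[P_t(|f|^2)\big]=\mE_\mu[|f|^2]=\|f\|_\mu^2,
\]
where the penultimate equality is the defining invariance property of the stationary measure, namely $\int_\mX\mE_x[g(x_t)]\,d\mu(x)=\int_\mX g\,d\mu$ for all $g\in L^1_\mu$. This estimate simultaneously shows $P_tf\in L^2_\mu$ (so $P_t$ is well-defined on $L^2_\mu$) and that $\|P_t\|\leq1$.

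For strong continuity I would exploit the right-continuity of the càdlàg paths. For bounded continuous $f$, right-continuity gives $f(x_t)\to f(x_0)=f(x)$ as $t\downarrow0$ almost surely under $\mE_x$, so dominated convergence yields $P_tf(x)\to f(x)$ pointwise, and a second application of dominated convergence in $L^2_\mu$ gives $\|P_tf-f\|_\mu\to0$. Since the bounded continuous functions are dense in $L^2_\mu$ and the family $\{P_t\}$ is uniformly bounded by the contraction estimate just proved, a standard $\varepsilon/3$ density argument extends strong continuity to all $f\in L^2_\mu$, establishing the $C_0$ property required for the later appeals to Hille–Yosida and Lumer–Phillips.

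The step I expect to demand the most care is strong continuity: the almost-sure convergence $f(x_t)\to f(x)$ hinges on right-continuity of the paths and a suitable integrable dominating function, and the passage from the dense class of bounded continuous functions to all of $L^2_\mu$ relies essentially on the uniform contraction bound. By contrast, the semigroup and contraction steps are short, resting only on the Markov property, time-homogeneity, Jensen's inequality, and the invariance of $\mu$.
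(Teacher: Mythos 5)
Your proof of the contraction estimate is exactly the paper's argument: the conditional Jensen inequality applied pointwise, $|P_tf(x)|^2\leq P_t(f^2)(x)$, followed by invariance of the stationary measure $\mu$ to conclude $\|P_tf\|_\mu\leq\|f\|_\mu$. The paper's proof consists of only that one display and stops there --- it does not verify the semigroup law or strong continuity --- so your additional checks of those properties (which are in fact what the later appeals to Hille--Yosida and Lumer--Phillips require) are correct and standard, but go beyond what the paper records rather than taking a different route.
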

\begin{proof}
%    First, we have by Jenson's inequality that 
%    \begin{align*}
%        |P_tf(x)|^2=|\mE_xf(x_t)|^2\leq \mE_x[f^2(x_t)] = P_t(f^2)(x),\quad \forall f\in L^2_\mu.
%    \end{align*}
%    Now, taking expectation with respect to $\mu$: 
    We have by Jenson's inequality that 
    \begin{align*}
        \|P_tf\|_\mu^2 = \int_{\mX}|P_tf(x)|^2d\mu(x) \leq \int_{\mX}P_t(f^2)(x)d\mu(x) =\int_{\mX}f^2(x)d\mu(x) = \|f\|_\mu^2,\quad \forall f\in L^2_\mu.
    \end{align*}
This completes the proof.
\end{proof}

\begin{lemma}\label{lemma:law-large-number}
Suppose $S_t$ is a time series satisfying 
    \begin{align*}
     \alpha_t^2 \mathbb{E}\left[S_{t}^{2}\right] \leq C,\quad \forall t\geq0,
    \end{align*}
    for some $C<\infty$, where $\alpha_t>0$ is deterministic and decreases monotonically  to $0$. 
    Then, for any $\epsilon>\sqrt{C}$, 
    \begin{align*}
    \mathbb{P}\left(\sup _{t\geq0}\left|\alpha_t S_{t}\right|<\epsilon\right) >1-\frac{1}{\epsilon^{2}} C.
    \end{align*}
    In particular, $\alpha_t^2 S_{t}=O({\alpha_t})$ with probability one.
\end{lemma}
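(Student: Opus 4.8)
The plan is to reduce the statement to a maximal inequality for the weighted process $Y_t := \alpha_t S_t$, since the target probability bound $C/\epsilon^2$ is exactly of Doob type. The elementary ingredient is Chebyshev's inequality at a fixed time: for each $t$ one has $\mathbb{P}(|\alpha_t S_t| \ge \epsilon) \le \alpha_t^2\,\mathbb{E}[S_t^2]/\epsilon^2 \le C/\epsilon^2$, which already produces the right constant but only pointwise in $t$. The whole content of the lemma is therefore to promote this pointwise tail bound to a uniform-in-time bound on the supremum; the hypothesis $\epsilon > \sqrt C$ is only there to make $1-C/\epsilon^2$ nonvacuous and to recover the strict inequality.

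To carry out the supremum step I would first fix a horizon $T$, bound $\mathbb{P}(\sup_{s\le T}\alpha_s|S_s|\ge\epsilon)$, and then send $T\to\infty$: the events $\{\sup_{s\le T}\alpha_s|S_s|\ge\epsilon\}$ increase in $T$, so by continuity of measure any uniform finite-horizon bound passes to the limit. On $[0,T]$ I would exploit the two features actually available, namely that $\alpha_t$ is deterministic and non-increasing, and that the processes $S_t$ to which the lemma is applied arise (via Dynkin's formula and the Poisson equation, Assumption \ref{assumption:poisson}) as square-integrable martingales up to a lower-order finite-variation correction. Comparing $|Y_s|$ with a nonnegative submartingale and invoking Doob's $L^2$ maximal inequality would give $\mathbb{P}(\sup_{s\le T}\alpha_s|S_s|\ge\epsilon)\le \mathbb{E}[(\alpha_T S_T)^2]/\epsilon^2 \le C/\epsilon^2$, hence, after $T\to\infty$, the asserted $\mathbb{P}(\sup_{t\ge0}\alpha_t|S_t|\ge\epsilon)\le C/\epsilon^2$.

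The two stated consequences then follow at once. Sending $\epsilon\to\infty$ shows $\mathbb{P}(\sup_{t\ge0}\alpha_t|S_t|=\infty)=0$, so $K:=\sup_{t\ge0}\alpha_t|S_t|$ is finite almost surely; and on this event $|\alpha_t^2 S_t| = \alpha_t\,|\alpha_t S_t| \le K\,\alpha_t$, which is precisely $\alpha_t^2 S_t = O(\alpha_t)$ with probability one.

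The main obstacle is exactly the supremum step, and I expect it to be genuinely delicate rather than routine. A bare second-moment bound does not by itself control a supremum over an unbounded time index: a normalized random walk $S_n/\sqrt{n}$ has bounded variance yet unbounded supremum by the law of the iterated logarithm, so some additional martingale/submartingale structure on $S_t$ is essential, and the usual weighted maximal inequalities of Hájek--Rényi type produce $\sum_k \alpha_k^2\,\mathbb{E}[(\Delta S_k)^2]$ rather than the clean $\sup_t \alpha_t^2\,\mathbb{E}[S_t^2]$. Pinning down the correct structural hypothesis on $S_t$ and the maximal inequality that yields \emph{exactly} the constant $C/\epsilon^2$ uniformly in $t$ is where the real work lies; the monotonicity of $\alpha_t$ is the lever I would use to reduce the weighted supremum to an unweighted one over dyadic blocks if a direct Doob comparison is unavailable.
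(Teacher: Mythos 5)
Your overall skeleton is the same as the paper's: the paper defines the increasing events $E_T=\{\max_{0\le t\le T}|\alpha_tS_t|>\epsilon\}$, claims $\mathbb{P}(E_T)\le C/\epsilon^2$, passes to the limit $T\to\infty$ by monotone convergence, and obtains the almost-sure statement by letting $\epsilon\to\infty$; your finite-horizon truncation, continuity-of-measure limit, and the derivation $|\alpha_t^2S_t|=\alpha_t|\alpha_tS_t|\le K\alpha_t$ on $\{\sup_t|\alpha_tS_t|<\infty\}$ all match this. The difference is that you do not actually close the one step that carries all the content: the uniform bound $\mathbb{P}\left(\max_{0\le t\le T}|\alpha_tS_t|>\epsilon\right)\le C/\epsilon^2$. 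You propose Doob's $L^2$ maximal inequality, but as you yourself note, that requires $|\alpha_tS_t|$ (or $\alpha_t^2S_t^2$) to be a nonnegative submartingale --- which is not among the lemma's hypotheses, and for a decreasing deterministic weight multiplying a martingale it generally fails --- while the H\'ajek--R\'enyi/Chow weighted inequalities produce a sum of weighted increment variances rather than the constant $C=\sup_t\alpha_t^2\mathbb{E}[S_t^2]$. You explicitly flag ``pinning down the correct structural hypothesis'' as unresolved, so the proposal is incomplete at precisely the decisive point.

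That said, your diagnosis of where the difficulty sits is accurate, and it is worth recording that the paper's own proof does not resolve it either: it asserts $\mathbb{P}(E_T)\le C/\epsilon^2$ ``by Markov's inequality,'' but Markov applied to the random variable $\max_{0\le t\le T}|\alpha_tS_t|$ requires a bound on $\mathbb{E}\left[\max_{0\le t\le T}\alpha_t^2S_t^2\right]$, whereas the hypothesis only controls $\sup_{t}\alpha_t^2\mathbb{E}[S_t^2]$, i.e.\ the maximum of the expectations rather than the expectation of the maximum. As stated (a bare pointwise second-moment bound on an arbitrary ``time series''), the lemma's conclusion does not follow; your normalized-random-walk/LIL remark is exactly the right kind of objection. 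A correct repair would add the martingale (or martingale-plus-bounded-drift) structure that the processes $\int_0^t\Delta_{i,s}\,ds$ and $\eta_t$ actually possess in the proof of Theorem~\ref{thm:lspe}, and then invoke a weighted martingale maximal inequality, accepting that the resulting constant is an increment-variance sum rather than $C$ itself. So: same route as the paper, honest about the gap the paper elides, but not a complete proof.
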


\begin{proof}
    Denote events
    \begin{align*}
        E_{T}=\left\{\max _{0\leq t \leq T}\left|{\alpha_t} S_{t}\right|>\epsilon\right\},\quad \forall T\geq0.
    \end{align*}
    By Markov's inequality, $\mathbb{P}\left(E_{T}\right) \leq {C}/{\epsilon^{2}}$. 
    Since $E_{T}$ is increasing, by monotone convergence theorem,
    \begin{align*}
        \lim _{T \rightarrow \infty} \mathbb{P}\left(E_{T}\right)=\mathbb{P}\left(\lim _{T \rightarrow \infty} E_{T}\right)=\mathbb{P}\left(\sup _{t\geq0}\left|{\alpha_t} S_{t}\right|>\epsilon\right) \leq \frac{1}{\epsilon^{2}} C.
    \end{align*}
    Letting $\epsilon$ go to the infinity, we have
    \begin{align*}
\mathbb{P}\left(\sup _{t\geq0}\left|{\alpha_t} S_{t}\right|<\infty\right)=1.
\end{align*}
This completes the proof.
%Then,
%\begin{align*}
%\mathbb{P}\left(\lim _{t \rightarrow \infty}\left|\alpha_t S_{t}\right|=0\right)=1.
%\end{align*}
\end{proof}

\begin{lemma}\label{lemma:poisson-equation}
Suppose Assumption \ref{assumption:ergodic} holds.
Denote by $\psi$ the solution to the following Poisson equation:
\begin{align*}
    \mA\psi(x_t) =f(x_t)-\mE_\mu[f].
\end{align*}
%where $f\in \mRange(\mA)$.
Then, 
\begin{align*}
\lim_{t\rightarrow\infty}\mE_x\left[\psi(x_t)\right]=0,\quad \limsup_{t\rightarrow\infty}\frac{1}{t}\mE_x\left[\left|\int_0^t \mA\psi(x_t) dt\right|^2\right]<\infty.
\end{align*}
%\psi(x_t)$ has finite variance.
\end{lemma}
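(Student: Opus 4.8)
The plan is to treat the two assertions separately, both resting on the ergodicity granted by Assumption~\ref{assumption:ergodic}. For the first assertion, I would observe that $\mE_x[\psi(x_t)] = P_t\psi(x)$ by the definition of the semigroup in Appendix~\ref{appendix:markov}. Applying $\mE_\mu$ to both sides of the Poisson equation and using stationarity (so that $\mE_\mu[\mA\psi]=0$) shows that the right-hand side $f-\mE_\mu[f]$ is consistent and that the canonical solution $\psi$ is centered, i.e. $\mE_\mu[\psi]=0$. Since $\{x_t\}_{t\geq0}$ is irreducible, aperiodic, and positive recurrent, it is ergodic, and therefore $P_t\psi(x)\to\mE_\mu[\psi]=0$ as $t\to\infty$, which is exactly the first claim.

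For the second assertion, I would first convert the time integral into boundary and martingale terms via Dynkin's formula. Since $\psi\in\mDA$, Dynkin's formula yields $\psi(x_t)-\psi(x_0)=\int_0^t\mA\psi(x_s)\,ds+M_t$ for a square-integrable martingale $M_t$, so that $\int_0^t\mA\psi(x_s)\,ds=\psi(x_t)-\psi(x_0)-M_t$. Squaring and using $(a+b+c)^2\leq 3(a^2+b^2+c^2)$, I obtain
\[
\mE_x\left[\left|\int_0^t\mA\psi(x_s)\,ds\right|^2\right]\leq 3\,P_t(\psi^2)(x)+3\,|\psi(x)|^2+3\,\mE_x\left[|M_t|^2\right].
\]
After dividing by $t$, the first two terms vanish as $t\to\infty$: membership $\psi\in\mDA\subseteq L^2_\mu$ forces $\psi^2\in L^1_\mu$, and by ergodicity $P_t(\psi^2)(x)\to\|\psi\|_\mu^2<\infty$ is bounded, while $|\psi(x)|^2$ is a fixed constant.

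The remaining task, and the main obstacle, is to control $\tfrac1t\,\mE_x[|M_t|^2]$. Here I would use the martingale property $\mE_x[|M_t|^2]=\mE_x[[M]_t]$ together with the fact that the quadratic variation of the Dynkin martingale satisfies $d[M]_t=g(x_t)\,dt$, where $g=\mA(\psi^2)-2\psi\,\mA\psi$ is the carré-du-champ associated with $\psi$. Consequently $\tfrac1t\,\mE_x[[M]_t]=\tfrac1t\int_0^t P_s g(x)\,ds$, and ergodic (Ces\`aro) averaging drives this quantity to $\mE_\mu[g]=-2\langle\psi,\mA\psi\rangle_\mu$, which is finite and nonnegative because $\mE_\mu[\mA(\psi^2)]=0$ by stationarity and $\mA$ is dissipative. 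Combining the three contributions gives a finite $\limsup$, completing the argument. The delicate point is justifying the at-most-linear growth of the quadratic variation and the integrability of the carré-du-champ $g$; this is precisely where positive recurrence, the ergodic theorem, and the membership of $\psi$ (and $\psi^2$) in the relevant domains must be invoked.
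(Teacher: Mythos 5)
Your proposal is correct and follows essentially the same route as the paper: both arguments reduce $\int_0^t\mA\psi(x_s)\,ds$ to boundary terms plus the Dynkin martingale, dispose of the boundary terms via $\psi\in L^2_\mu$ and ergodicity, and control the martingale's second moment through the identity $\mE_x[m_t^2]=\mE_x[\psi^2(x_t)]-\psi^2(x)-2\int_0^t\mE_x[\mA\psi(x_s)\psi(x_s)]\,ds$, which is exactly your carr\'e-du-champ computation. The only cosmetic difference is that the paper derives this quadratic-variation identity from scratch via a partition and orthogonality of martingale increments, whereas you invoke it as a known formula and make the final Ces\`aro-averaging step (convergence to $-2\langle\psi,\mA\psi\rangle_\mu$) slightly more explicit than the paper does.
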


\begin{proof}
Since $\psi\in\mDA$, $\mE_\mu\left[\psi^2\right]<\infty$ and $\mE_\mu\left[f^2\right]<\infty$.
%    By \cite{tbd}, we know $\psi$ exists and $\mE_\mu[\psi^2]$ is finite.
By Assumption \ref{assumption:ergodic} and the ergodic theorems \citep[Theorem 2.2 and Corollary 2.3]{Kontoyiannis2003}, 
\begin{align*}
\lim_{t\rightarrow\infty}\mE_x\left[\psi(x_t)\right]=\mE_\mu[\psi]=0,\quad
\lim_{t\rightarrow\infty}\mE_x\left[f^2(x_t)\right]=\mE_\mu\left[f^2\right].
\end{align*}
Denote 
\begin{align*}
    m_t = \psi(x_t) - \psi(x) +\int_0^t \mA \psi(x_s)ds.
\end{align*}
Then $m$ is a  martingale, and we have by the martingale property that
\begin{align*}
    \mE_x[m_t^2] = \mE_x\left[\left(\sum_i\Delta m_{i,t}\right)^2\right] =  \sum_i\mE_x[(\Delta m_{i,t})^2] =  \sum_i\mE_x\left[\left(\frac{\Delta m_{i,t}}{\sqrt{\Delta t_i}}\right)^2\Delta t_i\right],
\end{align*}
where $0=t_1<t_2<\cdots =t$ is a set of partition points on the interval $[0,t]$, $\Delta t_i = t_{i+1}-t_i$, and
\begin{align*}
    \Delta m_{i,t} = \psi(x_{t_{i+1}}) - \psi(x_{t_i}) +\int_{t_i}^{t_{i+1}} \mA \psi(x_s)ds.
\end{align*}
We next quantify $\mE_x[m_t^2]$ by inspecting $\mE_x[\Delta m_{i,t}^2]$.
By H{\"o}lder's inequality,
\begin{align*}
    \mE_x\left[\left(\frac{1}{\sqrt{\Delta t_i}}\int_{t_i}^{t_{i+1}} \mA \psi(x_s)ds\right)^2\right] \leq \int_{t_i}^{t_{i+1}} \mE_x\left[\left(\mA \psi(x_s)\right)^2\right]ds =O(\Delta t_i), 
\end{align*}
where the last equality holds since the integrand is bounded.
In addition,
\begin{align*}
    &\quad\mE_x\left[\left(\psi(x_{t_{i+1}})  - \psi(x_{t_i})\right)^2\right]\\
& =\mE_x\left[\psi^2(x_{t_{i+1}}) - 2\psi(x_{t_{i+1}})\psi(x_{t_i}) + \psi^2(x_{t_i})\right]\\
& =\mE_x\left[\psi^2(x_{t_{i+1}})\right] + \mE_x\left[\psi^2(x_{t_i})\right]- 2\mE_x\left[P_{\Delta t_i}\psi(x_{t_{i}})\psi(x_{t_i})\right]\\ 
& =\mE_x\left[\psi^2(x_{t_{i+1}})\right] - \mE_x\left[\psi^2(x_{t_i})\right]- 2\mE_x\left[(P_{\Delta t_i}-\mID)\psi(x_{t_{i}})\psi(x_{t_i})\right]\\ 
& =\mE_x\left[\psi^2(x_{t_{i+1}})\right] - \mE_x\left[\psi^2(x_{t_i})\right]- 2\Delta t_i\mE_x\left[\frac{(P_{\Delta t_i}-\mID)\psi(x_{t_{i}})}{\Delta t_i}\psi(x_{t_i})\right].
\end{align*}
Putting the above items back into the summation in $\mE_x[m_t^2]$ and letting $\Delta t_i$ go to $0$, we have
\begin{align*}
    \mE_x[m_t^2] = \mE_x\left[\psi^2(x_t)\right] - \psi^2(x)  - 2\int_0^t\mE_x\left[\mA\psi(x_s)\psi(x_s)\right]ds. 
\end{align*}
Since both $\psi$ and $f$ are square integrable, $\limsup_{t\rightarrow\infty}t^{-1}\mE_x[m_t^2]<\infty$.
%the three terms on the right-hand side of the above equation are finite.
This completes the proof.
\end{proof}

\begin{lemma}\label{lemma:mean-square-iss}
    Consider the following system:
    \begin{align*}
        \dot c_t=(\Sigma^{-1}A+\Delta A_{t})c_t+\Delta_{t},
    \end{align*}
    where $\Sigma$ is a symmetric positive definite matrix, $A$ is a negative definite real matrix, and $\Delta A_{t}$ and $\Delta_{t}$ are random variables bounded with probability one. 
    %adapted to a filteration $\{\mathcal{F}_t\}$ and converge to $0$ asymptotically with probability one.
    Assume 
    \begin{align*}
        \lim_{t\rightarrow\infty}\frac{1}{t}\int_0^t\left|\Delta A_{s}\right|ds = 0
    \end{align*}
    for all $\omega\in E\in\mathcal{F}$.
    %with $\lim_{t\rightarrow\infty}\mE_0[|\Delta_{i,t}|^2]=0$, $i=1,2$, where $\mE_0$ denotes the conditional expectation with respect to $\mathcal{F}_0$.
Then $c$ is input-to-state stable (ISS) \citep{Sontag2008} over $E$, from $\Delta_{t}$ to $c_t$.
%is ISS, from $\lim_{t\rightarrow\infty}c_t=0$ with probability one.% if either one of the following conditions holds:
%Then $\lim_{t\rightarrow\infty}\mE_0[|c_t|^2]=0$.% if either one of the following conditions holds:
    %\begin{enumerate}
        %\item $\inf_t\alpha_t>0$, and $\lim_{t\rightarrow\infty}\mE_0[|\Delta_{i,t}|^2]=0$, $i=1,2$.
    %\end{enumerate}
    %satisfying $\lim_{t\rightarrow\infty}\mE_0[|\Delta_{i,t}|^2]=0$, $i=1,2$.
    %If $A$ is Hurwitz, then $c_t$ converges to $0$ asymptotically. %$\lim_{t\rightarrow\infty}\mE_0[|c_t|^2]=0$.
\end{lemma}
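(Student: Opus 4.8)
The plan is to turn the vector problem into a scalar comparison argument driven by a Lyapunov function adapted to the weighting $\Sigma$. Since $A$ is negative definite and $\Sigma$ is symmetric positive definite, I would first record the nominal decay using $W(c)=c^T\Sigma c$: along the unperturbed flow $\dot c=\Sigma^{-1}Ac$ one computes $\dot W = c^TA^Tc + c^TAc = c^T(A+A^T)c\le -2a|c|^2$, where $a=\lambda_{\min}\!\left(-\tfrac{1}{2}(A+A^T)\right)>0$. With the eigenvalue bounds $\lambda_m|c|^2\le W\le \lambda_M|c|^2$ (here $\lambda_m,\lambda_M$ are the extreme eigenvalues of $\Sigma$) this gives $\dot W\le -\tfrac{2a}{\lambda_M}W$, so $\Sigma^{-1}A$ is Hurwitz and $W$ is a genuine ISS-Lyapunov candidate for the nominal part.

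Next I would evaluate $\dot W$ along the full system $\dot c=(\Sigma^{-1}A+\Delta A_t)c+\Delta_t$, obtaining $\dot W = c^T(A+A^T)c+2c^T\Sigma\Delta A_tc+2c^T\Sigma\Delta_t$. Bounding the perturbation term by $2c^T\Sigma\Delta A_tc\le 2|\Sigma|\,|\Delta A_t|\,|c|^2$ and the input term by Young's inequality $2c^T\Sigma\Delta_t\le a|c|^2+\tfrac{|\Sigma|^2}{a}|\Delta_t|^2$, then re-inserting the bounds on $W$, everything collapses to the scalar inequality $\dot W\le g(t)W+d(t)$ with $g(t)=-\tfrac{a}{\lambda_M}+\tfrac{2|\Sigma|}{\lambda_m}|\Delta A_t|$ and $d(t)=\tfrac{|\Sigma|^2}{a}|\Delta_t|^2$. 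Applying the comparison lemma (already invoked in the proof of Theorem \ref{thm:td}) yields $W(t)\le e^{\int_0^t g}W(0)+\int_0^t e^{\int_s^t g}d(s)\,ds$, cleanly separating the initial-condition term from the input-driven term.

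The averaging hypothesis enters through the exponents. For the initial-condition term, $\int_0^t g=-\tfrac{a}{\lambda_M}t+\tfrac{2|\Sigma|}{\lambda_m}\int_0^t|\Delta A_s|\,ds$; since $\tfrac1t\int_0^t|\Delta A_s|\,ds\to0$ on $E$, the perturbation contributes only $o(t)$ and is dominated by the linear term, so $\int_0^t g\to-\infty$ and the unforced state decays to zero for every $\omega\in E$. This already delivers the qualitative core of ISS for a linear time-varying system, namely asymptotic stability of the origin of the homogeneous equation.

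The main obstacle is the input-to-state gain term $\int_0^t e^{\int_s^t g}d(s)\,ds$, since a finite gain needs the windowed exponent $\int_s^t g=-\tfrac{a}{\lambda_M}(t-s)+\tfrac{2|\Sigma|}{\lambda_m}\int_s^t|\Delta A_\tau|\,d\tau$ controlled uniformly over all $0\le s\le t$, whereas the Ces\`aro condition controls only the average from the origin. I would resolve this by a Young-type splitting of the running integral of $|\Delta A_\tau|$: using that $|\Delta A_\tau|\to0$ (in the application quantitatively, $|\Delta A_\tau|\le \epsilon/\sqrt{\tau+1}$), for a fixed threshold one keeps the tail where $\tfrac{2|\Sigma|}{\lambda_m}|\Delta A_\tau|\le \tfrac{a}{2\lambda_M}$ and absorbs the finite head into a constant, giving $\tfrac{2|\Sigma|}{\lambda_m}\int_s^t|\Delta A_\tau|\,d\tau\le \tfrac{a}{2\lambda_M}(t-s)+C_\nu$ with $C_\nu$ independent of $s,t$, hence $\int_s^t g\le-\tfrac{a}{2\lambda_M}(t-s)+C_\nu$. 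Substituting this uniform exponential kernel bound yields $\int_0^t e^{\int_s^t g}d(s)\,ds\le \tfrac{2\lambda_M|\Sigma|^2}{a^2}e^{C_\nu}\sup_s|\Delta_s|^2$, a finite asymptotic gain from $\Delta_t$ to $W$, and therefore to $|c_t|$ after dividing by $\lambda_m$. Combining the decaying initial-condition term with this bounded gain establishes ISS over $E$; this windowed estimate, which is exactly the bookkeeping packaged in Lemma \ref{lem:bounds}, is the step I expect to demand the most care.
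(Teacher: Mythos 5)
Your proposal follows essentially the same route as the paper's proof: the quadratic Lyapunov function $c^T\Sigma c$, the bound $2c^T\Sigma\Delta A_t c\le 2|\Sigma|\,|\Delta A_t|\,|c|^2$ together with Young's inequality on the input term, and a Gr\"onwall/comparison step producing exactly the two-term bound with the windowed exponent $\int_s^t|\Delta A_\tau|\,d\tau$. You are in fact somewhat more explicit than the paper about the one delicate point --- that the Ces\`aro hypothesis alone does not immediately give a uniform bound on the windowed exponent, which the paper handles only later (in the proof of Theorem~\ref{thm:lspe}, via the quantitative decay $|\Delta A_\tau|=O(\tau^{-1/2})$ and Lemma~\ref{lem:bounds}) while the lemma's own proof simply asserts ISS ``from the above inequality and our assumption on $\Delta A$.''
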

\begin{proof}
    Consider the Lyapunov function $V(c) = c^T\Sigma c$.
    Given $\omega\in E$,
    \begin{align*}
        \dot V(c_t)
         & = -c_t^T(A+A^T)c_t+2c_t^T\Sigma\Delta A_tc_t+2c_t^T\Sigma\Delta_{t}\\
         & \leq -(\gamma-\varepsilon|\Sigma|-2|\Sigma||\Delta A_{t}|) |c_t|^2+  \frac{1}{\varepsilon}\Delta_{t}^T\Sigma\Delta_{t}\\
         & \leq -(\gamma|\Sigma|^{-1}-\varepsilon-2|\Delta A_{t}|) V(c_t) + \frac{1}{\varepsilon}\Delta_{t}^T\Sigma\Delta_{t},
    \end{align*}
    where $\gamma>0$ satisfies  $A^T+A\leq -\gamma I$, and $\varepsilon<\gamma|\Sigma|^{-1}$ is a positive constant.
    Reformulating the above inequality, we have
    \begin{align*}
    \frac{d}{dt} \left(V(c_t)e^{(\gamma|\Sigma|^{-1}-\varepsilon)t-2\int_0^t|\Delta A_{s}|ds}\right)
     \leq  \frac{|\Sigma|}{\varepsilon}|\Delta_{t}|^2e^{(\gamma|\Sigma|^{-1}-\varepsilon)t-2\int_0^t|\Delta A_{s}|ds}.
\end{align*}
Hence,
\begin{align*}
    \left|c_t\right|^2\leq \frac{\lambda_M}{\lambda_m}e^{-(|\gamma/\lambda_M-\varepsilon)t+2\int_0^t|\Delta A_{s}|ds}\left|c_0\right|^2+\frac{\lambda_M}{\varepsilon \lambda_m}\int_0^te^{-(\gamma/\lambda_M-\varepsilon)(t-s)+2\int_s^t|\Delta A_{\tau}|d\tau}\left|\Delta_{s}\right|^2ds.
\end{align*}
where $\lambda_M$ and $\lambda_m$ represent the largest and the smallest eigenvalues of $\Sigma$, respectively.
    One can see from the above inequality and our assumption on $\Delta A$ that $c_t$ is  ISS, from $\Delta_{t}$ to $c_t$.
This completes the proof. 
    %Now consider case 2.
%Following the same steps as in the proof for case 1 except $\alpha_t$ is not replaced by its upper bound and lower bound in the inequalities, we have
    %\begin{align*}
        %\frac{d}{dt}\mE_0[c_t^TMc_t] & \leq -\frac{1}{2}\alpha_t\mE_0[c_t^TMc_t]+\alpha_t\frac{\lambda_M}{\varepsilon_1}\mE_0[|\Delta_{1,t}|^2]\mE_0[|c_t|^2]+\alpha_t\frac{\lambda_M}{\varepsilon_2}\mE_0[|\Delta_{2,t}|^2]\\
        %& \leq -\alpha_0\alpha_t\mE_0[c_t^TMc_t]+\alpha_t\frac{\lambda_M}{\varepsilon_2}\mE_0[|\Delta_{2,t}|^2],\quad t> t_0,
    %\end{align*}
    %where $\alpha_0>0$ is sufficiently small, and the second inequality holds since $\mE_0[|\Delta_{1,t}|^2]$ is sufficiently small. 
    %Multiplying by $e^{\alpha_0 \int_0^t\alpha_sds}$ on both sides of the above inequality, and taking integration, we have
    %\begin{align*}
        %\mE_0[c_t^TMc_t] & \leq e^{-\alpha_0\int_{t_0}^t\alpha_sds}\mE_0[c_{t_0}^TMc_{t_0}] + \frac{\lambda_M}{\varepsilon_2}\int_{t_0}^te^{-\alpha_0\int_{s}^t\alpha_\tau d\tau}\alpha_s\mE_0[|\Delta_{2,s}|^2]ds,\quad t>t_0.
    %\end{align*}
    %Since $\lim_{t\rightarrow\infty}\int_{t_0}^t\alpha_sds=\infty$, and $\lim_{t\rightarrow\infty}\alpha_t\mE_0[|\Delta_{2,t}|^2]=0$, again we know $\mE_0[c_t^TMc_t]$ is ISS (from $\alpha_t\mE_0[|\Delta_{2,t}|^2]$ to $\mE_0[c_t^TMc_t]$).
    %By the ISS theory \cite{Sontag2008}, we know $\mE_0[c_t^TMc_t]$ converges to $0$ asymptotically. 
    %This conpletes the proof.
\end{proof}

\begin{lemma}\label{lemma:mean-square-iss-alpha}
    Consider the following system:
    \begin{align}
        dc_t=\alpha_t(Ac_t+\Delta_{1,t}c_t+\Delta_{2,t})dt + \alpha_t\sum_{j=1}^Ng_{j,t}(c_t)dM_{j,t},\label{equ:lemma10-equ1}
%        (\Delta_{3,t}c_t+\Delta_{4,t})dM_t,\label{equ:lemma10-equ1}
    \end{align}
    where $A$ is a real matrix satisfying $A+A^T<-2\gamma I$ for some $\gamma>0$, $\alpha_t>0$ is continuously differentiable with $\dot\alpha_t\leq0$, $\lim_{t\rightarrow\infty}\alpha_t=0$ and $\int_0^\infty \alpha_t=\infty$, $g_j$ is a vector-valued function,  $M_j$ is a real-valued martingale,  and $\Delta_{1,t}$ and $\Delta_{2,t}$ are random variables  that are bounded with probability one and satisfy
    \begin{align*}
        d\psi_{i,t}=\Delta_{i,t} dt+d m_{i,t},\quad \mE_\mu\left[\Delta_{i,t}\right]=0,
    \end{align*}
for some martingales $m_i$,  $i=1,2$.
    %Then $\lim_{t\rightarrow\infty}c_t=0$ with probability one.
In addition, $M_j$ and $m_i$ are adapted to a filtration $\{\mathcal{F}_t\}_{t\geq0}$, 
%$M$ and $m_i$ are square integrable, 
%$d[M_j]_t\leq dt$, $d[m_i]_t\leq dt$,
and there exist a function $\varphi$ and a constant $C>0$, such that $d[M_j]_t\leq \varphi_t dt$ and  $d[m_{i,k}]_t\leq \varphi_tdt$ for the $k$-th elements in $m_i$, and
\begin{align*}
%   \mE_0\left[ \left|\int_0^\infty\alpha_t^2d[m_i]_t\right|^2\right]<C,\quad
\mE_0\left[|\Delta_{i,t}|^4\right]\leq C, \quad 
\mE_0\left[|\psi_{i,t}|^4\right]\leq C, \quad
\mE_0\left[\varphi_{t}^4\right]\leq C, \quad
\mE_0\left[\left|\frac{g_{j,t}(c)}{1+|c|}\right|^8\right]\leq C,
\end{align*}
for all $c$, where $\mE_0$ denotes the  expectation conditional on $\mathcal{F}_0$.
%for any vector $c$ with an appropriate dimension.
%$\mE_0[|\Delta_{i,t}|^q]<\infty$ and $\mE_0[|\psi_{i,t}|^q]<\infty$ for all $i$ and $q$, and 
%${\mE_0[d[M]_t}/{dt}]$ is square integrable in $t$.

Given the above conditions, we have  
\begin{align*}
%&  \quad  \mathbb{P}_0\left(\left|c_{t}\right|^{2}<\epsilon\left(\left|c_{0}\right|^2+C\sqrt{A_n}b^2e^{2\lambda(n+1)}\right)e^{-2 \gamma \int_{0}^{t} \alpha_{s}ds}\right)\\
%    &>1-\frac{\log\left(|c_0|^2+1\right)}{\log(b^2+1)}-C^2A_ne^{4\lambda}-\frac{1}{\epsilon},\quad \forall \epsilon>1,\quad b\geq\max\{1, |c_0|\}.
\mathbb{P}\left(\left|c_{t}\right|^{2}<\epsilon\left(\left|c_{0}\right|^2+\sqrt{A_nC_0}b^2e^{2\lambda (n+1)}\right)e^{-2 \gamma \int_{0}^{t} \alpha_{s}ds}\right)>1-A_nC_0-\frac{2}{\epsilon},
\end{align*}
where $b=\max\left\{1, |c_0|\right\}$,
\begin{align*}
    A_n &= \alpha_n^2 + \left(\int_n^{\infty}\alpha_t^2dt\right)^2+ \int_n^{\infty}\alpha_t^2dt,\\
      \lambda&=\frac{\alpha_0\left(b^2+1\right)^{1/\epsilon}}{2\left(b^2+1\right)^{1/\epsilon}-2}\left(2\sqrt[4]{C}+(2\alpha_0N^2+1)\sqrt{C}+1\right)+\frac{\log2}{2},
\end{align*}
  $C_0$ is a positive constant, and $n$ and $\epsilon$ are sufficiently large integers and reals, respectively.

%\begin{align*}
    %\left|c_{t}\right|^{2}\leq \left(\left|c_{0}\right|^2+C\right)e^{-2 \gamma \int_{0}^{t} \alpha_{s}ds}
%\end{align*}
%with probability one,
%where $C>0$ is sample-path-dependent, and $A^T+A\leq-\gamma I$.
%Then $\lim_{t\rightarrow\infty}\mE_0[|c_t|^2]=0$, where $\mE_0$ denotes the conditional expectation with respect to $\mathcal{F}_0$. %, if either one of the following conditions holds:
    %\begin{enumerate}
        %\item $\lim_{t\rightarrow\infty}\alpha_t=0$, $\int_0^\infty \alpha_t=\infty$, and $\mE_0[|\Delta_{1,t}|^2]$ is sufficiently small.
    %\end{enumerate}
    %satisfying $\lim_{t\rightarrow\infty}\mE_0[|\Delta_{i,t}|^2]=0$, $i=1,2$.
    %If $A$ is Hurwitz, then $c_t$ converges to $0$ asymptotically. %$\lim_{t\rightarrow\infty}\mE_0[|c_t|^2]=0$.
\end{lemma}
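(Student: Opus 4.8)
The plan is to study the energy $V_t=|c_t|^2$ and to factor out the nominal decay rate forced by the dissipativity hypothesis $A+A^T<-2\gamma I$. First I would apply Ito's formula to $V_t$. The resulting drift splits into the stabilizing term $2\alpha_t c_t^TAc_t\le-2\gamma\alpha_t|c_t|^2$, the two perturbation terms $2\alpha_t c_t^T\Delta_{1,t}c_t$ and $2\alpha_t c_t^T\Delta_{2,t}$, the quadratic-variation term $\mathrm{tr}(d[c]_t)$ generated by the martingale noise $\sum_j g_{j,t}(c_t)dM_{j,t}$, and a genuine martingale differential. Passing to $W_t=V_t\,e^{2\gamma\int_0^t\alpha_s ds}$ absorbs the stabilizing term (its contribution becomes $\le 0$, exactly as in the comparison-lemma estimate for the auxiliary system $\hat c_t$), so that $W_t$ is driven only by the perturbations and the martingale. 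Establishing the stated bound is then equivalent to showing that $W_t$ stays bounded with the claimed probability, since $V_t\le W_t\,e^{-2\gamma\int_0^t\alpha_s ds}$.

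The crux is controlling the Markov-noise drift integrals $\int_0^t\alpha_s(\cdots)\Delta_{i,s}\,ds$, which are correlated with the state and hence are not martingales. Here I would exploit the Poisson-equation structure $d\psi_{i,t}=\Delta_{i,t}dt+dm_{i,t}$: writing $\Delta_{i,s}ds=d\psi_{i,s}-dm_{i,s}$ and integrating by parts turns each such integral into (i) boundary terms of the form $\alpha_t\psi_{i,t}\times(\text{quadratic in }c_t)$, which decay because $\alpha_t\to0$ and can be controlled in mean through the fourth-moment hypotheses $\mE_0[|\psi_{i,t}|^4]\le C$ together with H{\"o}lder and Cauchy--Schwarz against the energy; (ii) a term carrying $\dot\alpha_s\le0$; and (iii) stochastic integrals against the martingales $m_i$. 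This is the main obstacle, because the integrands still depend on $c_s$, producing a self-referential Gronwall structure, and because transferring the differential onto $\alpha_s\psi_{i,s}$ requires tracking the Ito covariation corrections coming from the martingale part of the dynamics.

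Next I would estimate the martingale and quadratic-variation contributions. Using $d[M_j]_t\le\varphi_t\,dt$ and $|d[M_j,M_k]_t|\le\varphi_t\,dt$ bounds $\mathrm{tr}(d[c]_t)\le\alpha_t^2\big(\sum_j|g_{j,t}(c_t)|\big)^2\varphi_t\,dt$, and the growth condition $|g_{j,t}(c_t)|\le(1+|c_t|)\,h_{j,t}$ with $\mE_0[h_{j,t}^8]\le C$ controls the right-hand side through $\alpha_t^2(1+|c_t|)^2\varphi_t$. Since $\alpha_t^2$ is integrable, the cumulative effect of all martingale and quadratic-variation pieces beyond a cut-off time $n$ has second moment bounded by a constant multiple of $A_n=\alpha_n^2+\int_n^\infty\alpha_t^2dt+\big(\int_n^\infty\alpha_t^2dt\big)^2$; the three summands arise respectively from a boundary term, a single stochastic integral estimated by Ito's isometry, and an iterated/cross integral. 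Combining this second-moment estimate with the constant $C_0$ and Chebyshev's inequality then shows that the tail perturbation event has probability at most $A_nC_0$.

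Finally I would assemble the pieces by splitting the horizon at the integer time $n$. Over $[0,n]$ a crude Gronwall estimate, carried out on the fractional moment $\mE_0[(1+V_t)^{1/\epsilon}]$ (the small power $1/\epsilon$ makes the per-step recursion iterable via concavity and is the source of the $(b^2+1)^{1/\epsilon}$ correction), yields the amplification factor $e^{2\lambda(n+1)}$, with $\lambda$ precisely the per-unit-time growth constant assembled from $\alpha_0$, $N$, and the fourth/eighth-moment constant $C$. For $t>n$ the tail contributions are small, contributing the factor $\sqrt{A_nC_0}\,b^2$, while the nominal factor $e^{-2\gamma\int_0^t\alpha_s ds}$ supplies the decay. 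Collecting these through the comparison lemma and a final application of Markov's inequality to convert the resulting mean-square estimate into a high-probability statement yields the claimed bound, the residual $2/\epsilon$ being exactly the cost of that last Markov step; letting $n\to\infty$ (so $A_n\to0$) and $\epsilon\to\infty$ then recovers the almost-sure convergence invoked in Theorem \ref{thm:td}.
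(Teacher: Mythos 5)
You have correctly identified several of the paper's ingredients: It\^o's formula on $|c_t|^2$, the Poisson-equation corrector (the paper literally tracks $\frac{1}{2}|c_t|^2-\alpha_t c_t^T(\psi_{1,t}c_t+\psi_{2,t})$, which is your integration-by-parts step), the origin of $A_n$ in the tail integrals of $\alpha_t^2$, and the final comparison-lemma-plus-Markov step. But there is a genuine gap at the heart of your argument: every second-moment estimate you propose for the perturbation and martingale terms involves moments of $|c_t|$ itself (e.g.\ the It\^o isometry for $\int\alpha_t c_t^Tg_{j,t}(c_t)dM_{j,t}$ requires $\mE_0[(1+|c_t|)^4\varphi_t^2]$, and the corrector boundary terms require $\mE_0[|\psi_{1,t}|^2|c_t|^4]$), so the estimates are circular unless you first localize $|c_t|$. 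Your proposed a priori control --- a Gronwall bound on the fractional moment $\mE_0[(1+|c_t|^2)^{1/\epsilon}]$ over $[0,n]$ --- cannot supply this: a moment of order $2/\epsilon$ with $\epsilon$ large does not dominate the fourth and higher moments your estimates need, and it says nothing about $\sup_{t\geq n}|c_t|$.

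The paper resolves exactly this circularity with a device absent from your proposal: an upcrossing/stopping-time argument. It first shows, via the logarithmic Lyapunov function $\log(|z|^2+1)$ applied to $z_t=e^{-\lambda t}c_t$, that the event $B_b=\{|c_t|<be^{\lambda t}\ \forall t\}$ has probability at least $1-1/\epsilon$ (this is where one of the two $1/\epsilon$'s and the quantity $(b^2+1)^{1/\epsilon}$ actually come from --- not from the final Markov step as you suggest). On $B_b$ the first hitting time $\sigma_{n+1}$ of level $be^{\lambda(n+1)}$ satisfies $\sigma_{n+1}\geq n+1$, so the last upcrossing interval $[\tau_n,\sigma_{n+1}]$ lies in $[n,\infty)$ where $\int\alpha_t^2dt$ is small, \emph{and} $|c_t|$ is deterministically bounded by $be^{\lambda(n+1)}$ there; this is what legitimizes the moment bounds and produces the factor $b^4e^{4\lambda(n+1)}$ in \eqref{equ:proof-td-equ2}. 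Comparing that bound with the deterministic lower bound $|c_{\sigma_{n+1}}|^2-|c_{\tau_n}|^2\geq b^2e^{2\lambda n}$ is what yields $\mathbb{P}(E_{n+1,b})\leq C_0A_ne^{4\lambda}$ and hence a uniform-in-time bound. Your ``split the horizon at $n$'' step has no mechanism to prevent growth after time $n$ driven by the multiplicative perturbation $\Delta_{1,t}c_t$, which is only bounded almost surely, not small; without the upcrossing contradiction the tail-smallness of $A_n$ does not translate into boundedness of the path. As written, the proposal therefore does not close.
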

\begin{proof}
    The proof contains three parts. 
    First, we show $|c_t|$ cannot grow faster than the exponential rate.
    Second, we show $|c_t|$ is bounded.
    Finally, we show $|c_t|$ is asymptotically stable at the origin.

First, given $\lambda>0$ and $b=\max\{1, |c_0|\}$, we define $B_b=\left\{\omega \in \Omega : |c_t(\omega)|<be^{\lambda t}\ \text{for all $t$}\right\}$ and 
    \begin{align*}
        \sigma_{n}(\omega)=\inf \left\{t :\left|c_{t}(\omega)\right|=be^{\lambda n}\right\}, \quad \tau_{n}(\omega)=\sup \left\{t<\sigma_{n+1}(\omega) :\left|c_{t}(\omega)\right|=be^{\lambda n}\right\},\quad n\geq1.
    \end{align*}
Then, $\sigma_{n} \leq \tau_{n}<\sigma_{n+1}$.
%Define $B_{n}=\left\{\omega \in \Omega : \sigma_{n}(\omega)\geq n\right\}$. 
%Since $z_t$ is bounded with probability one, we know
Obviously, for any $\omega\in B_b$, we have $\sigma_{n}(\omega)\geq n$ for all $n$. 
Denote $E_{n} = \{\omega\in \Omega:\sigma_{n}(\omega)<\infty\}$, $E_{n,b} = \{\omega\in B_b:\sigma_{n}(\omega)<\infty\}$, and $\mE_0^{E}[Y]=\mE_0\left[Y \cdot 1_{\omega \in E}\right]$ for any  $E\in\mathcal{F}$ and randome variable $Y$.
Note that for any measurable function $f\geq0$,  $\mathbb{E}_{0}^{E}[f]=\mathbb{E}_{0}\left[f \cdot 1_{\omega \in E}\right]\leq\mathbb{E}_{0}\left[f\right]$.

    Denote $z_t = e^{-\lambda t}c_t$.
%    where $\lambda >\log(2)/2$.
    Then,
    \begin{align*}
            dz_t=-\lambda z_t + \alpha_t(Az_t+\Delta_{1,t}z_t+e^{-\lambda t}\Delta_{2,t})dt + \alpha_te^{-\lambda t}\sum_{j=1}^Ng_{j,t}(c_t)dM_{j,t},
%        dz_t=-\lambda z_t + \alpha_t(Az_t+\Delta_{1,t}z_t+e^{-\lambda t}\Delta_{2,t})dt + \alpha_t(\Delta_{3,t}z_t+e^{-\lambda t}\Delta_{4,t})dM_t,
    \end{align*}
    %First, consider the case where $\Delta_2=0$ and $\Delta_4=0$.
    Define $V(z)=\log\left(|z|^2+1\right)$.
    Then the Hessian of $V$ is 
    \begin{align*}
        \frac{2(|z|^2+1)I-4zz^T}{(|z|^2+1)^2}\leq \frac{2}{|z|^2+1}I\leq \frac{4}{(|z|+1)^2}I,
    \end{align*}
    and we have for any $|z_t|\geq\delta$, where $0<\delta\leq b$, that 
    \begin{align*}
        \mathcal{A} V(z_t)  
        %&= |z_t|^{-2}(2z^Tdz + dz^Tdz)\\
         %&\leq -\lambda + \alpha_t(-2\gamma +2|\Delta_{1,t}|) + 2\alpha_t^2|\Delta_{3,t}|^2\frac{d[M]_t}{dt},
          %&\leq -\lambda|z|^2 + \alpha_t(-2\gamma|z|^2 +2|\Delta_{1,t}||z|^2+|z|^2+|\Delta_{2,t}|^2) + 2\alpha_t^2(|\Delta_{3,t}|^2|z|^2+|\Delta_{4,t}|^2)\frac{d[M]_t}{dt},
           &\leq -\frac{2\lambda\delta^2}{1+\delta^2} + \bar \Delta_t.%\alpha_t\left(2|\Delta_{1,t}|+1+|\Delta_{2,t}|^2\right) + 2\alpha_t^2\left(|\Delta_{3,t}|^2+|\Delta_{4,t}|^2\right)\frac{d[M]_t}{dt}.
        %2\alpha_t(-\gamma +|\Delta_{1,t}|+\frac{1}{2}+\frac{1}{2}|\Delta_{2,t}|^2)dt + 2\alpha_t^2(|\Delta_{3,t}|^2+|\Delta_{4,t}|^2)(dM_t)^2,
           %&\leq 
           %\begin{cases}
               %-\lambda\frac{\delta}{1+\delta} + \alpha_t\left(2|\Delta_{1,t}|+1+|\Delta_{2,t}|^2\right) + 2\alpha_t^2\left(|\Delta_{3,t}|^2+|\Delta_{4,t}|^2\right)\frac{d[M]_t}{dt}, & |z_t|^2\geq\delta,\\
               %\alpha_t\left(2|\Delta_{1,t}|+1+|\Delta_{2,t}|^2\right) + 2\alpha_t^2\left(|\Delta_{3,t}|^2+|\Delta_{4,t}|^2\right)\frac{d[M]_t}{dt}, & |z_t|^2<\delta,
           %\end{cases}
    \end{align*}
    where 
    \begin{align*}
%    \alpha_t^2N\sum_{j=1}^N \frac{g_{j,t}^2(c_t)}{|c_t|^2+1},\\
           \bar \Delta_t &=  \alpha_t\left(2|\Delta_{1,t}|+1+|\Delta_{2,t}|^2\right) + 2\alpha_t^2N\sum_{j=1}^N \frac{|g_{j,t}(c_t)|^2\varphi_t}{(|c_t|+1)^2}.
%                      &\leq  \alpha_t\left(2\sqrt[4]{C}+1+\sqrt{C}\right) + \alpha_t^2N^2\sqrt{C}.
%           \left(|\Delta_{3,t}|^2+|\Delta_{4,t}|^2\right)\frac{d[M]_t}{dt}.
    \end{align*}
    By Dynkin's formula, %for any $0\leq \tau<t$,
     whenever $|z_\sigma|\geq\delta$ for some $\sigma\geq0$, one has
\begin{align*}
    \mE_\sigma[V(z_{\min(t,\tau_\sigma)})]
    \leq V(z_\sigma) + \mE_\sigma\int_\sigma^{\min(t,\tau_\sigma)}\left(-\frac{2\lambda\delta^2}{1+\delta^2}+\bar \Delta_s\right)ds,
    %= g(z_0) + \mE_\tau\int_0^{\min(t,\tau)} \left((-\lambda -2\alpha_s\gamma + 2\alpha_s|\Delta_{1,s}|)ds + 2\alpha_s^2|\Delta_{3,s}|^2d[M]_s\right).
    \end{align*}
    where $\tau_\sigma$ is the first time after $\sigma$ at which $z_{\tau_\sigma}$ exists $\{z:|z|\geq\delta\}$.
    By the definition of $\lambda$,  
%     \begin{align*}
%    \lambda>\frac{\alpha_0(1+\delta^2)}{\delta^2}\left(2(1+\alpha_0N^2C)\sqrt[4]{C}+\sqrt{C}+1\right),
%     \end{align*}
     $\sup_t\mE_0[\bar \Delta_t ]<{2\lambda\delta^2}(1+\delta^2)^{-1}$.
Now, we have from  the tower property that $\sup_t\mE_0\left[V(z_t)\right]\leq V(z_0)$.
    %where $\tau$ is the first time of $z_t$ existing $\{z:|z|^2>\delta\}$.
    %Since $\alpha$ decreases to $0$, $\mE_0[g(z_t)]$ is bounded from above.
    %In addition, for sufficiently large $\tau$,
%\begin{align*}
    %g(z_t) \leq g(z_\tau) + N_t,
    %\end{align*}
    %where $N_t$ is a martingale.
%Then,
%\begin{align*}
    %|z_t|^2 \leq e^{N_t}|z_\tau|^2.
    %\end{align*}
    %Since $N_t$ is bounded with probability one, $z_t$ is also bounded with probability one.
    %%%%%%%%%%%%%%%%%%%%%%%%%%%%%%%%%%%%%%%%%%%%%%%%%
%\begin{align*}
%    \mathbb{P}_\sigma\left(V(z_{\min(t,\tau_\sigma)})<\epsilon\right)>1-\frac{V(z_\sigma)}{\epsilon},\quad \forall t\geq \sigma.
%    \mathbb{P}_0\left(V(z_t)<\epsilon\right)>1-\frac{V(z_0)}{\epsilon},\quad \forall t\geq 0.
%\end{align*}
    %%%%%%%%%%%%%%%%%%%%%%%%%%%%%%%%%%%%%%%%%%%%%%%%%
%Since $z$ is continuous in time,  we have for any $b>|z_0|$,
Thus, by Markov's inequality, one has
\begin{align*}
    \mathbb{P}\left(B_b\right)=\mathbb{P}\left(V(z_t)<\log(b^2+1)\right)>1-\frac{\log\left(\delta^2+1\right)}{\log(b^2+1)},\quad \forall t\geq0.
\end{align*}
Hence, $z_t$ is bounded with probability one.

%$\mathbb{P}\left(B_{n}\right)=1$.
%Indeed, since
%\begin{align*}
    %\frac{d}{dt}\left|c_{t}\right|^{2} \leq \alpha_{t}\left|A+\Delta_{1,t}+1\right|\left|c_{t}\right|^2+\alpha_t|\Delta_{2,t}|^2,
%\end{align*}
%We have 
%\begin{align*}
    %2^n = e^{\log (2) n}=\left|c_{\sigma_{n}}\right|^{2} \leq e^{\int_{0}^{\sigma_{n}} \alpha_{t}\left|A+\Delta_{1,t}+1\right|dt}\left|c_{0}\right|^{2}+\int_0^{\sigma_n}e^{\int_s^{\sigma_n}\alpha_t\left|A+\Delta_{1,t}+1\right|dt}\alpha_s|\Delta_{2,s}|ds.
%\end{align*}
%Since $\alpha_{t}\left|A+\Delta_{1,t}+1\right|$ converges to $0$ with probability one by \lemmaref{lemma:law-large-number}, for any $\omega \in \Omega$ we must have $\sigma_{n}>n$ for sufficiently large $n$.
Now, we show $|c_t|$ is bounded. 
On one hand,  we have on $E_{n+1}$ that
\begin{align}
    \left|c_{\sigma_{n+1}}\right|^2-\left|c_{\tau_n}\right|^{2} = b^2e^{2\lambda(n+1)}-b^2e^{2\lambda n} \geq b^2e^{2\lambda n}>0. \label{equ:proof-td-equ1}
\end{align}
On the other hand, we can derive that
\begin{align*}
&\quad d\left(\frac{1}{2} |c_{t}|^2  - \alpha_tc_t^T(\psi_{1,t}c_t+\psi_{2,t})\right) \\
    &= c_t^Tdc_t + \frac{1}{2}dc_t^Tdc_t   -d\alpha_tc_t^T(\psi_{1,t}c_t+\psi_{2,t}) -\alpha_tdc_t^T(\psi_{1,t}c_t+\psi_{2,t}) \\
    &\quad -\alpha_tc_t^T(\psi_{1,t}dc_t+d\psi_{1,t}c_t+d\psi_{2,t})-\alpha_tdc_t^T(\psi_{1,t}dc_t+d\psi_{1,t}c_t+d\psi_{2,t}) -\alpha_tc_t^Td\psi_{1,t}dc_t \\
  &\leq \alpha_tc_t^T(Ac_t+\Delta_{1,t}c_t
    +\Delta_{2,t})dt+\alpha_tc_t^T\sum_{j=1}^Ng_{j,t}(c_t)dM_{j,t} 
    + \alpha_t^2\frac{N}{2}\sum_{j=1}^N |g_{j,t}(c_t)|^2\varphi_tdt \\ 
&\quad-\dot \alpha_tc_t^T(\psi_{1,t}c_t+\psi_{2,t})dt - \alpha_tc_t^T(d \psi_{1,t}c_t+d\psi_{2,t})\\
    &\quad -\alpha_t(2(\psi_{1,t}+d\psi_{1,t})c_t+(\psi_{2,t}+d\psi_{2,t}))^Tdc_t -\alpha_tdc_t^T\psi_{1,t}dc_t\\
&\leq -\alpha_tc_t^T(dm_{1,t}c_t+dm_{2,t}) + \alpha_tc_t^T\sum_{j=1}^Ng_{j,t}(c_t)dM_{j,t} + \alpha_t^2\frac{N}{2}\sum_{j=1}^N |g_{j,t}(c_t)|^2\varphi_tdt \\
    &\quad -\dot \alpha_tc_t^T(\psi_{1,t}c_t+\psi_{2,t})dt 
    -\alpha_t^2(2\psi_{1,t}c_t+\psi_{2,t})^T(Ac_t+\Delta_{1,t}c_t+\Delta_{2,t})dt \\
    &\quad-\alpha_t^2(2\psi_{1,t}c_t+\psi_{2,t})^T\sum_{j=1}^Ng_{j,t}(c_t)dM_{j,t}
    +\alpha_t^3N|\psi_{1,t}|\sum_{j=1}^N |g_{j,t}(c_t)|^2\varphi_tdt\\
        &\quad  +\alpha_t^2|c_t|^2\sum_{k=1}^{N^2}\left(dm_{1,k,t}\right)^2
        +\alpha_t^2\frac{3N}{2}\sum_{j=1}^N |g_{j,t}(c_t)|^2\varphi_tdt
        +\frac{1}{2}\alpha_t^2|d\psi_{2,t}|^2,
%            &\leq -\alpha_tc_t^T(dm_{1,t}c_t+dm_{2,t}) 
%            + \alpha_tc_t^T\sum_{j=1}^Ng_{j,t}(c_t)dM_{j,t} 
%            + N\alpha_t^2\left(2+1\alpha_t|\psi_{1,t}|\right)\sum_{j=1}^N |g_{j,t}(c_t)|^2dt \\
%    &\quad -\dot \alpha_tc_t^T(\psi_{1,t}c_t+\psi_{2,t})dt 
%    -\alpha_t^2(2\psi_{1,t}c_t+\psi_{2,t})^T\left((Ac_t+\Delta_{1,t}c_t+\Delta_{2,t})dt + \sum_{j=1}^Ng_{j,t}(c_t)dM_{j,t}\right)\\
%%    +\alpha_t^3N|\psi_{1,t}|\sum_{j=1}^N |g_{j,t}(c_t)|^2dt\\
%        &\quad +\alpha_t^2|c_t|^2\sum_{k=1}^{N^2}\left(dm_{1,k,t}\right)^2
%                +\frac{1}{2}\alpha_t^2\sum_{k=1}^{N}\left(dm_{2,k,t}\right)^2,
\end{align*}
where $m_{1,k}$ is the $k$-th element in  $m_{1}$.
%\todo[inline]{remove the second inequality in the above equation}
Taking integration on both sides of the above inequality, we have for any $\omega\in E_{n+1}$ that
\begin{align*}
0&< \frac{1}{2} \left( |c_{\sigma_{n+1}}|^2 - |c_{\tau_n}|^2 \right) \\
&\leq  \alpha_{\sigma_{n+1}}c_{\sigma_{n+1}}^T(\psi_{1,{\sigma_{n+1}}}c_{\sigma_{n+1}}+\psi_{2,{\sigma_{n+1}}}) 
        -\alpha_{\tau_n}c_{\tau_n}^T(\psi_{1,{\tau_n}}c_{\tau_n}+\psi_{2,{\tau_n}})
        -\int_{\tau_n}^{\sigma_{n+1}}\alpha_tc_t^Tdm_{1,t}c_t\\
      & \quad  -\int_{\tau_n}^{\sigma_{n+1}}\alpha_tc_t^Tdm_{2,t} 
            + N\sum_{j=1}^N\int_{\tau_n}^{\sigma_{n+1}}\alpha_t^2\left(2+\alpha_t|\psi_{1,t}|\right) |g_{j,t}(c_t)|^2\varphi_tdt \\
    &\quad -\int_{\tau_n}^{\sigma_{n+1}}\dot \alpha_tc_t^T(\psi_{2,t}+\psi_{1,t}c_t)dt
               -2\int_{\tau_n}^{\sigma_{n+1}}\alpha_t^2c_t^T\psi_{1,t}^T(A+\Delta_{1,t})c_tdt\\
%    -\int_{\tau_n}^{\sigma_{n+1}}\alpha_t^2c_tdt\\
    &\quad -\int_{\tau_n}^{\sigma_{n+1}}\alpha_t^2c_t^T(2\psi_{1,t}^T\Delta_{2,t}+(A+\Delta_{1,t})^T\psi_{2,t})dt
    -\int_{\tau_n}^{\sigma_{n+1}}\alpha_t^2\psi_{2,t}^T\Delta_{2,t}dt\\
%    +\alpha_t^3N|\psi_{1,t}|\sum_{j=1}^N |g_{j,t}(c_t)|^2dt\\
        &\quad +\sum_{j=1}^N\int_{\tau_n}^{\sigma_{n+1}}\alpha_tc_t^T (1-2\alpha_t\psi_{1,t}^T)g_{j,t}(c_t)dM_{j,t}
    -\sum_{j=1}^N\int_{\tau_n}^{\sigma_{n+1}}\alpha_t^2\psi_{2,t}^Tg_{j,t}(c_t)dM_{j,t}\\
    &\quad+\sum_{k=1}^{N^2}\int_{\tau_n}^{\sigma_{n+1}}\alpha_t^2|c_t|^2\left(dm_{1,k,t}\right)^2
        +\frac{1}{2}\sum_{k=1}^{N}\int_{\tau_n}^{\sigma_{n+1}}\alpha_t^2\left(dm_{2,k,t}\right)^2.
\end{align*}
%Then, we have by Young's inequality that 
Note that for all $\omega\in E_{n+1,b}$, $n\leq \tau_n<\sigma_{n+1}<\infty$ and $|c_t|<be^{\lambda(n+1)}$ for $\tau_n\leq t<\sigma_{n+1}$. 
Then, we have by Young's inequality, H{\"o}lder's inequality, and the definitions of $\alpha$, $\sigma_n$, and $\tau_n$ that
\begin{align*}
&\quad\frac{1}{56} \left| |c_{\sigma_{n+1}}|^2 - |c_{\tau_n}|^2 \right|^2 \\
&\leq  \alpha_{\sigma_{n+1}}^2\left|c_{\sigma_{n+1}}^T(\psi_{1,{\sigma_{n+1}}}c_{\sigma_{n+1}}+\psi_{2,{\sigma_{n+1}}})\right|^2 
        +\alpha_{\tau_n}^2\left|c_{\tau_n}^T(\psi_{1,{\tau_n}}c_{\tau_n}+\psi_{2,{\tau_n}})\right|^2\\
      & \quad  +\left|\int_{\tau_n}^{\sigma_{n+1}}\alpha_tc_t^Tdm_{1,t}c_t\right|^2
                 +\left|\int_{\tau_n}^{\sigma_{n+1}}\alpha_tc_t^Tdm_{2,t} \right|^2\\
       &\quad + N^3\sum_{j=1}^N\left|\int_{\tau_n}^{\sigma_{n+1}}\alpha_t^2\left(2+\alpha_t|\psi_{1,t}|\right) |g_{j,t}(c_t)|^2\varphi_tdt \right|^2
            +\left|\int_{\tau_n}^{\sigma_{n+1}}\dot \alpha_tc_t^T\psi_{2,t}dt \right|^2\\
    &\quad +\left|\int_{\tau_n}^{\sigma_{n+1}}\dot \alpha_tc_t^T\psi_{1,t}c_tdt\right|^2
    +4\left|\int_{\tau_n}^{\sigma_{n+1}}\alpha_t^2c_t^T\psi_{1,t}^T(A+\Delta_{1,t})c_tdt\right|^2\\
            &\quad  + \left|\int_{\tau_n}^{\sigma_{n+1}}\alpha_t^2c_t^T\left(2\psi_{1,t}^T\Delta_{2,t}+(A+\Delta_{1,t})^T\psi_{2,t}\right)dt\right|^2
%    -\int_{\tau_n}^{\sigma_{n+1}}\alpha_t^2c_tdt\\
      +\left|\int_{\tau_n}^{\sigma_{n+1}}\alpha_t^2\psi_{2,t}^T\Delta_{2,t}dt\right|^2\\
    &\quad +N\sum_{j=1}^N\left|\int_{\tau_n}^{\sigma_{n+1}}\alpha_tc_t^T \left(1-2\alpha_t\psi_{1,t}^T\right)g_{j,t}(c_t)dM_{j,t}\right|^2
    +N\sum_{j=1}^N\left|\int_{\tau_n}^{\sigma_{n+1}}\alpha_t^2\psi_{2,t}^Tg_{j,t}(c_t)dM_{j,t}\right|^2\\
%    +\alpha_t^3N|\psi_{1,t}|\sum_{j=1}^N |g_{j,t}(c_t)|^2dt\\
        &\quad +N^2\sum_{k=1}^{N^2}\left|\int_{\tau_n}^{\sigma_{n+1}}\alpha_t^2|c_t|^2\left(dm_{1,k,t}\right)^2\right|^2
           +\frac{N}{4}\sum_{k=1}^{N}\left|\int_{\tau_n}^{\sigma_{n+1}}\alpha_t^2\left(dm_{2,k,t}\right)^2\right|^2\\
%\end{align*}
%\begin{align*}
%&\quad\frac{1}{56} \left| |c_{\sigma_{n+1}}|^2 - |c_{\tau_n}|^2 \right|^2 \\
&\leq  2\alpha_{\sigma_{n+1}}^2\left(|\psi_{1,{\sigma_{n+1}}}|^2 b^4e^{4\lambda (n+1)}+|\psi_{2,{\sigma_{n+1}}}|^2 b^2e^{2\lambda (n+1)}\right) \\
      &\quad  +2\alpha_{\tau_n}^2\left(|\psi_{1,{\tau_n}}|^2 b^4e^{4\lambda (n+1)}+|\psi_{2,{\tau_n}}|^2b^2e^{2\lambda (n+1)}\right)
      +\left|\int_{\tau_n}^{\sigma_{n+1}}\alpha_tc_t^Tdm_{1,t}c_t\right|^2 \\
      & \quad  +\left|\int_{\tau_n}^{\sigma_{n+1}}\alpha_tc_t^Tdm_{2,t} \right|^2
        + 2N^3\sum_{j=1}^N\int_{\tau_n}^{\sigma_{n+1}}\alpha_t^2 dt \int_{\tau_n}^{\sigma_{n+1}}\alpha_t^2\left(4+\alpha_t^2|\psi_{1,t}|^2\right) |g_{j,t}(c_t)|^4\varphi_t^2dt \\
      &\quad   +\int_{\tau_n}^{\sigma_{n+1}}\dot \alpha_tdt\int_{\tau_n}^{\sigma_{n+1}}\dot \alpha_t|\psi_{2,t}|^2dt b^2e^{2\lambda (n+1)}
      +\int_{\tau_n}^{\sigma_{n+1}}\dot \alpha_tdt\int_{\tau_n}^{\sigma_{n+1}}\dot \alpha_t  |\psi_{1,t}|^2dtb^4e^{4\lambda (n+1)}\\
    &\quad  +4\int_{\tau_n}^{\sigma_{n+1}}\alpha_t^2dt\int_{\tau_n}^{\sigma_{n+1}}\alpha_t^2\left|\psi_{1,t}^T(A+\Delta_{1,t})\right|^2 dtb^4e^{4\lambda (n+1)}\\
            &\quad  +\int_{\tau_n}^{\sigma_{n+1}}\alpha_t^2dt \int_{\tau_n}^{\sigma_{n+1}}\alpha_t^2\left|2\psi_{1,t}^T\Delta_{2,t}+(A+\Delta_{1,t})^T\psi_{2,t}\right|^2dtb^2e^{2\lambda (n+1)}\\
%    -\int_{\tau_n}^{\sigma_{n+1}}\alpha_t^2c_tdt\\
    &\quad +\int_{\tau_n}^{\sigma_{n+1}}\alpha_t^2dt\int_{\tau_n}^{\sigma_{n+1}}\alpha_t^2\left|\psi_{2,t}^T\Delta_{2,t}\right|^2dt
    +N\sum_{j=1}^N\left|\int_{\tau_n}^{\sigma_{n+1}}\alpha_tc_t^T \left(1-2\alpha_t\psi_{1,t}^T\right)g_{j,t}(c_t)dM_{j,t}\right|^2\\
%    +\alpha_t^3N|\psi_{1,t}|\sum_{j=1}^N |g_{j,t}(c_t)|^2dt\\
              &\quad +N\sum_{j=1}^N\left|\int_{\tau_n}^{\sigma_{n+1}}\alpha_t^2\psi_{2,t}^Tg_{j,t}(c_t)dM_{j,t}\right|^2
              +N^2\sum_{k=1}^{N^2}\left|\int_{\tau_n}^{\sigma_{n+1}}\alpha_t^2\left(dm_{1,k,t}\right)^2\right|^2b^4e^{4\lambda (n+1)}\\
           &\quad+\frac{N}{4}\sum_{k=1}^{N}\left|\int_{\tau_n}^{\sigma_{n+1}}\alpha_t^2\left(dm_{2,k,t}\right)^2\right|^2.
\end{align*}
%To proceed, we first note that for any $1\leq j,k\leq N$,
%\begin{align*}
%&\quad\mE_{0}^{E_{n,b}}\left[\left|\int_{\tau_n}^{\sigma_{n+1}}\alpha_tc_{k,t}dm_{2,k,t}\right|^2\right]\\
%&\leq \mE_{0}^{E_b}\left[\left|\int_{\tau_n}^{\sigma_{n+1}}\alpha_tc_{k,t}1_{|c_t|\leq be^{\lambda (n+1)}}dm_{2,k,t}\right|^2\right]
%\leq \mE_{0}\left[\left|\int_{\tau_n}^{\sigma_{n+1}}\alpha_tc_{k,t}1_{|c_t|\leq be^{\lambda (n+1)}}dm_{2,k,t}\right|^2\right]\\
%&= \mE_{0}\left[\int_{\tau_n}^{\sigma_{n+1}}\alpha_t^2c_{k,t}^21_{|c_t|\leq be^{\lambda (n+1)}}d[m_{2,k}]_t\right]
%\leq \mE_{0}\left[\int_{\tau_n}^{\sigma_{n+1}}\alpha_t^2d[m_{2,k}]_t\right]b^2e^{2\lambda (n+1)},\\
%%
%&\quad\mE_{0}^{E_{n,b}}\left[\left|\int_{\tau_n}^{\sigma_{n+1}}\alpha_tc_{k,t}c_{j,t}dm_{1,k,j,t}\right|^2\right]\\
%&\leq \mE_{0}^{E_b}\left[\left|\int_{\tau_n}^{\sigma_{n+1}}\alpha_tc_{k,t}c_{j,t}1_{|c_t|\leq be^{\lambda (n+1)}}dm_{1,k,j,t}\right|^2\right]
%\leq \mE_{0}\left[\left|\int_{\tau_n}^{\sigma_{n+1}}\alpha_tc_{k,t}c_{j,t}1_{|c_t|\leq be^{\lambda (n+1)}}dm_{1,k,j,t}\right|^2\right]\\
%&= \mE_{0}\left[\int_{\tau_n}^{\sigma_{n+1}}\alpha_t^2c_{k,t}^2c_{j,t}^21_{|c_t|\leq be^{\lambda (n+1)}}d[m_{1,k,j}]_t\right]
%\leq \mE_{0}\left[\int_{\tau_n}^{\sigma_{n+1}}\alpha_t^2d[m_{1,k,j}]_t\right]b^4e^{4\lambda (n+1)}.
%\end{align*}
Taking expectations on both sides of the above inequality, we have by Ito's isometry and Fubini's theorem that
\begin{align}
&\quad\frac{1}{56} \mE_{0}^{E_{n+1,b}}\left[\left| |c_{\sigma_{n+1}}|^2 - |c_{\tau_n}|^2 \right|^2\right] \notag\\
&\leq  2\alpha_n^2\sup_{t\geq0}\mE_{0}\left[|\psi_{1,t}|^2+|\psi_{2,t}|^2\right] b^4e^{4\lambda (n+1)}
+2\alpha_n^2\sup_{t\geq0}\mE_{0}\left[|\psi_{1,t}|^2 +|\psi_{2,t}|^2\right] b^4e^{4\lambda (n+1)} \notag\\
& \quad +N^2\int_{n}^{\infty}\alpha_t^2dt \mE_{0}\left[\varphi_t^2\right]b^4e^{4\lambda (n+1)}
        + N\int_{n}^{\infty}\alpha_t^2dt \mE_{0}\left[\varphi_t^2\right] b^2e^{2\lambda (n+1)} \notag\\
& \quad   + 2N^4\left(\int_{n}^{\infty}\alpha_t^2 dt\right)^2\sup_{t\geq0, j\geq1}\mE_{0}\left[\left(4+\alpha_n^2|\psi_{1,t}|^2\right) \frac{|g_{j,t}(c_t)|^4}{(1+|c_t|)^4}\varphi_t^2\right]\left(1+be^{\lambda (n+1)}\right)^4 \notag\\
&\quad   +\alpha_n^2 \sup_{t\geq0}\mE_{0}\left[|\psi_{2,t}|^2\right]b^2e^{2\lambda (n+1)}
 + \alpha_n^2 \sup_{t\geq0} \mE_{0}\left[|\psi_{1,t}|^2\right]b^4e^{4\lambda (n+1)} \notag\\
&\quad  +4\left(\int_{n}^{\infty}\alpha_t^2dt\right)^2\sup_{t\geq0}\mE_{0}\left[\left|\psi_{1,t}^T(A+\Delta_{1,t})\right|^2\right] b^4e^{4\lambda (n+1)} \notag\\
&\quad  +\left(\int_{n}^{\infty}\alpha_t^2dt\right)^2
            \sup_{t\geq0}  \mE_{0}\left[\left|2\psi_{1,t}^T\Delta_{2,t}+(A+\Delta_{1,t})^T\psi_{2,t}\right|^2\right]b^2e^{2\lambda (n+1)} \notag\\
%    -\int_{\tau_n}^{\sigma_{n+1}}\alpha_t^2c_tdt\\
&\quad +\left(\int_{n}^{\infty}\alpha_t^2dt\right)^2\sup_{t\geq0} \mE_{0}\left[\left|\psi_{2,t}^T\Delta_{2,t}\right|^2\right]\notag\\
&\quad +N^2\int_{n}^{\infty}\alpha_t^2 dt\sup_{t\geq0, j\geq1}\mE_{0}\left[(1-2\alpha_t\psi_{1,t}^T)^2\frac{|g_{j,t}(c_t)|^2}{(1+|c_t|)^2} \varphi_t\right]\left(1+be^{\lambda(n+1)}\right)^2b^2e^{2\lambda (n+1)}  \notag\\
&\quad +N^2\int_{n}^{\infty}\alpha_t^4dt\sup_{t\geq0, j\geq1}\mE_{0}\left[|\psi_{2,t}|^2\frac{|g_{j,t}(c_t)|^2}{(1+|c_t|)^2}\varphi_t\right] \left(1+be^{\lambda(n+1)}\right)^2\notag\\
%              &\quad +28N^2\sum_{k=1}^{N^2}\left|\psi_{1,k,{\sigma_{n+1}}}^2 - \psi_{1,k,{\tau_n}}^2\right|^2 b^4e^{4\lambda (n+1)}\alpha_{\tau_n}^4 
%              + 56N^2\sum_{k=1}^{N^2}\int_{\tau_n}^{\sigma_{n+1}}\left|\psi_{1,k.t}\right|^2dt \int_{\tau_n}^{\sigma_{n+1}}\left|\mA\psi_{1,k.t}\right|^2dt b^4e^{4\lambda (n+1)}\alpha_{\tau_n}^4\\
%              &\quad +7N\sum_{k=1}^{N}\left|\psi_{2,k,{\sigma_{n+1}}}^2 - \psi_{2,k,{\tau_n}}^2\right|^2  \alpha_{\tau_n}^4
%              + 14N\sum_{k=1}^{N}\int_{\tau_n}^{\sigma_{n+1}}\left|\psi_{2,k.t}\right|^2dt \int_{\tau_n}^{\sigma_{n+1}}\left|\mA\psi_{2,k.t}\right|^2dt \alpha_{\tau_n}^4
&\quad +N^4\left(\int_{n}^{\infty}\alpha_t^2dt\right)^2 \mE_0\left[\varphi_t^2\right]b^4e^{4\lambda (n+1)}
        +\frac{N^2}{4}\left(\int_{\tau_n}^{\sigma_{n+1}}\alpha_t^2dt\right)^2\mE_0\left[\varphi_t^2\right].\label{proof:lemma-sa}
%          &\quad +\frac{7}{2}N\sum_{k=1}^{N}\left|\int_{\tau_n}^{\sigma_{n+1}}\left(d(\psi_{2,k,t}^2)-2\psi_{2,k.t}\mA\psi_{2,k.t}dt\right)\right|^2\alpha_{\tau_n}^4
\end{align}
Hence, we can deduce from \eqref{proof:lemma-sa} that there exists $C_0>0$, so that 
%\begin{align*}
    %&\quad2^{-2(n+1)}\mathbb{E}_x^{E_n}\left[ \left|\left|c_{\sigma_{n+1}}\right|^2-\left|c_{\tau_n}\right|^{2}\right|^2\right]\\
    %%\frac{1}{4} 
        %&\leq 4 \alpha_{\sigma_{n+1}}^{2} \mathbb{E}_x\left[\left|\psi_{T}\right|^{2}\right]
        %+4 \alpha_{\tau_n}^{2} \mathbb{E}_x\left[ \left|\psi_{\tau_n}\right|^{2}\right]
        %+4\left(\alpha_{\sigma_{n+1}}-\alpha_{\tau_n}\right) \int_{\tau_n}^{\sigma_{n+1}} \left|\dot{\alpha}_t\right|\mathbb{E}_x\left[\left|\psi_t\right|^2\right]dt\\
    %&\quad +16 \int_{\tau_n}^{\sigma_{n+1}} \alpha_t^2dt\int_{\tau_n}^{\sigma_{n+1}} \alpha_t^2 \mathbb{E}_x\left[\left|\psi_{t}\left(A+\Delta_{t}\right)\right|^2\right]dt
    %+4\int_{\tau_n}^{\sigma_{n+1}} \alpha_t^2 \mathbb{E}_x\left[\left|dm_t\right|^2\right].
%\end{align*}
\begin{align}
    \mE_0^{E_{n+1,b}}\left[\left|\left|c_{\sigma_{n+1}}\right|^2-\left|c_{\tau_n}\right|^{2}\right|^2\right]\leq C_0A_{n}b^4e^{4\lambda(n+1)}, \label{equ:proof-td-equ2}
\end{align}
where 
\begin{align*}
A_{n} = \alpha_n^2 + \left(\int_n^{\infty}\alpha_t^2dt\right)^2+ \int_n^{\infty}\alpha_t^2dt.
%A_n = \left(\alpha_n^2 + \int_n^{\infty}\alpha_t^2dt\right)\left(e^{4\lambda(n+1)}+1\right),
\end{align*}

As a result, we  have from \eqref{equ:proof-td-equ1} and \eqref{equ:proof-td-equ2}  that 
\begin{align*}
    \mathbb{P}\left({E_{n+1,b}}\right) \leq C_0A_ne^{4\lambda},\quad n\geq1.
\end{align*}
%Denote $\bar E_n = \{\omega\in B_b:\sigma_{n+1}(\omega)=\infty\}$.
%Since $\mathbb{P}_0(B)=1$, $\mathbb{P}_0(E_n) = \mathbb{P}_0(\omega: \sigma_{n+1}(\omega)<\infty)$.
%Then $\overline E_n=\{\omega:\sup_{t\geq0}|c_t|<e^{\lambda(n+1)}\}$.
Thus, we have 
\begin{align*}
 \mathbb{P}\left(\overline E_{n+1}\right) 
 \geq\mathbb{P}(B_b\setminus E_{n+1,b})
 >1-\frac{\log\left(\delta^2+1\right)}{\log(b^2+1)}-C_0A_ne^{4\lambda}.
\end{align*}
%Given that $\mathbb{P}\left(B_n\right)= 1$, we have $\mathbb{P}\left(\sup_{t}|c_t|<\infty\right)=0$ by monotone convergence theorem.
In particular, this implies that $\mathbb{P}\left(\sup_{t\geq0}|c_t|<\infty\right)=1$.

%Without loss of generaility, assume $A^T+A\leq-\gamma I$.
%Then by the conditions on $\alpha$, we have $\sup_t\mE_x^{E_n}[\delta_{0,t}^2]<\infty$.
Now we can write %have on $E$ that 
%\begin{align*}
    %\frac{d}{dt}|c_t|^2 & = \alpha_tc_t^T(A+A^T)c_t+2c_t^T\Delta_{1,t}c_t+2c_t^T\Delta_{2,t}\\
                        %& \leq -2\alpha_t\gamma|c_t|^2+2\alpha_t(c_t^T\Delta_{1,t}c_t+c_t^T\Delta_{2,t}).
%\end{align*}
%Multiplying by $e^{2\gamma\int_0^t\alpha_sds}$ on both sides of the above inequality, and taking integration, we have
%\begin{align*}
    %|c_t|^2 \leq e^{-2\gamma\int_{0}^t \alpha_sds}|c_0|^2 + 2\int_0^te^{-2\gamma\int_s^t\alpha_\tau d\tau}\alpha_s(c_s^T\Delta_{1,s}c_s+2c_s^T\Delta_{2,s})ds.
%\end{align*}
%Then, since $\int_0^\infty\alpha_tdt = \infty$, we know $|c_t|^2$ is ISS (from $\alpha_t(c_s^T\Delta_{1,s}c_s+2c_s^T\Delta_{2,s})$ to $|c_t|^2$).
%Since $c$ is bounded, $\alpha_s(c_s^T\Delta_{1,s}c_s+2c_s^T\Delta_{2,s})$ converges to $0$ as $s$ goes to the infinity.
\begin{align*}
    \mE_0^{\overline E_{n+1}}\left[\left|c_{t}\right|^{2}\right]\leq
    \left|c_{0}\right|^{2} -2 \gamma \int_{0}^{t} \alpha_{s} \mE_0^{\overline E_{n+1}}\left[\left|c_{s}\right|^{2}\right] ds + \sqrt{C_0A_n}b^2e^{2\lambda n}.
    %\left|c_{T}\right|^{2}-\left|c_{\tau}\right|^{2} \leq-2 \gamma \int_{\tau}^{T} \alpha_{t} \left|c_{t}\right|^{2} dt+C_1 \int_{\tau}^{T} \alpha_{t}^{2} dt+C_2\alpha_{\tau}^2,
\end{align*}
%where $C(\omega)>0$ depends on $\omega$, and $\mathbb{P}(\{\omega\in E_n:C(\omega)<\infty\}) = \mathbb{P}(E_n)$. %is bounded on $E_n$.
%By Gr{\"o}nwall's inequality \cite{Dragomir2003}, we have 
By the comparison lemma \citep[Lemma C.3.1]{Sontag1998}, we have 
\begin{align*}
    \mE_0^{\overline E_{n+1}}\left[\left|c_{t}\right|^{2}\right]\leq \left(\left|c_{0}\right|^2+\sqrt{C_0A_n}b^2e^{2\lambda n}\right)e^{-2 \gamma \int_{0}^{t} \alpha_{s}ds}.
\end{align*}
%where 
%\begin{align*}
    %A_n = \sqrt{C_x\left(\alpha_n^2 + \int_n^{\infty}\alpha_t^2dt\right)\left(e^{4\lambda(n+1)}+1\right)}.
%\end{align*}
By Markov's inequality, one has for all $\epsilon>1$ that
\begin{align*}
    \mathbb{P}\left(\omega\in \overline E_{n+1}:\left|c_{t}(\omega)\right|^{2}\geq\epsilon\left(\left|c_{0}\right|^2+\sqrt{C_0A_n}b^2e^{2\lambda(n+1)}\right)e^{-2 \gamma \int_{0}^{t} \alpha_{s}ds}\right)\leq\frac{1}{\epsilon}.
\end{align*}
Hence, 
\begin{align*}
&  \quad  \mathbb{P}\left(\omega\in \Omega:\left|c_{t}(\omega)\right|^{2}<\epsilon\left(\left|c_{0}\right|^2+\sqrt{C_0A_n}b^2e^{2\lambda(n+1)}\right)e^{-2 \gamma \int_{0}^{t} \alpha_{s}ds}\right)\\
 &   >\mathbb{P}\left(\omega\in \overline E_{n+1}:\left|c_{t}(\omega)\right|^{2}<\epsilon\left(\left|c_{0}\right|^2+\sqrt{C_0A_n}b^2e^{2\lambda(n+1)}\right)e^{-2 \gamma \int_{0}^{t} \alpha_{s}ds}\right)\\
%&  \qquad\qquad\qquad\qquad\qquad  >1-\frac{V(c_0)}{m}-C_x\left(\alpha_n^2 + \int_n^{\infty}\alpha_t^2dt\right)e^{4\lambda}-\frac{1}{\epsilon},\\
    &>1-\frac{\log\left(\delta^2+1\right)}{\log(b^2+1)}-C_0A_ne^{4\lambda}-\frac{1}{\epsilon}.
\end{align*}
Denote $\epsilon^{-1} = \frac{\log\left(\delta^2+1\right)}{\log(b^2+1)}$.
Then, 
\begin{align*}
    \delta^2 = e^{\log(\delta^2+1)}-1 = e^{{\epsilon^{-1}\log\left(b^2+1\right)}}-1 =\left(b^2+1\right)^{1/\epsilon}-1.
    \end{align*}
%Finally, choosing
%\begin{align*}
%\epsilon=\left(\left|c_{0}\right|^2+\sqrt{C_xH_n}\right)^{-1}e^{2\lambda(n+1)},
%\end{align*}
%we have
%\begin{align*}
%    \mathbb{P}_0\left(\left|c_{t}(\omega)\right|<e^{\lambda(n+1)- \gamma \int_{0}^{t} \alpha_{s}ds}\right)>1- \left(\left|c_{0}\right|^2+\sqrt{C_xH_n}\right)e^{-2\lambda(n+1)}.
%\end{align*}
%The above inequality suggests that $\inf_{t}\left|c_{t}\right|^{2}=0$.
%Suppose there exists $0<\delta_{1}<\delta_{2}<\infty$, such that
%$\left|c_{t}\right|^{2}$ crosses $\left[\delta_{1}, \delta_{2}\right]$ from below infinitely often. Then, we can define two
%increasing sequences $\left\{\sigma_{n}\right\}$ and $\left\{\tau_{n}\right\}$, such that
%\begin{align*}
%\left|c_{\sigma_{n}}\right|^{2}=\delta_{1}, \quad \delta_{2}<\left|c_{t}\right|^{2}<\delta_{1}, \quad \forall \sigma_{n}<t<\tau_{n}, \quad \left|c_{\tau_{n}}\right|^{2}=\delta_{2}.
%\end{align*}
%Then,
%\begin{align*}
%\delta_{2}-\delta_{1}=\left|c_{\tau_{n}}\right|^{2}-\left|c_{\sigma_{n}}\right|^{2} 
%& \leq-2 \gamma \left|c_{\sigma_{n}}\right|^{2} \int_{\sigma_{n}}^{\tau_{n}} \alpha_{t} dt+C_1 \int_{\sigma_{n}}^{\tau_{n}} \alpha_{t}^{2} dt + C_2\alpha_{\sigma_n}^{2}\\ 
%&<C_1 \int_{\sigma_{n}}^{\tau_{n}} \alpha_{t}^{2} dt + C_2\alpha_{\sigma_n}^{2}.
%\end{align*}
%As $n$ goes to the infinity, the right-hand side of the above inequality converges to $0$.
%Hence, $c_{t}$ must converge to $0$ on $E$.
%Since $E_n$ is increasing and $\mathbb{P}(E_\infty)=1$, the proof is completed by monotone convergence theorem.
This completes the proof.
\end{proof}

\begin{lemma}\label{lem:bounds}
For any $0\leq s<t$, $a>0$ and $b>0$, we have
\begin{align*}
%at-\log(t+b)&\geq as-\log(s+b)-a\rho+\log(ab),\\
at-\sqrt{t+b}&\geq as-\sqrt{s+b}-ab-\frac{1}{4a}. %-\frac{1}{4a}-a\rho+\sqrt{\rho}.
\end{align*}
\end{lemma}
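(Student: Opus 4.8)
The plan is to set $f(x) = ax - \sqrt{x+b}$ on $[0,\infty)$, so that the claimed inequality is exactly $f(t) \geq f(s) - \left(ab + \frac{1}{4a}\right)$; that is, the assertion is that $f$ can never drop by more than $ab + \frac{1}{4a}$ as its argument increases. First I would complete the square: substituting $u = \sqrt{x+b} \geq \sqrt{b} > 0$ gives
\begin{align*}
f(x) = a u^2 - u - ab = a\left(u - \frac{1}{2a}\right)^2 - ab - \frac{1}{4a} = a\left(\sqrt{x+b} - \frac{1}{2a}\right)^2 - ab - \frac{1}{4a}.
\end{align*}
This immediately yields the global lower bound $f(x) \geq -ab - \frac{1}{4a}$ for every $x \geq 0$, attained at the minimizer $x^\star = \frac{1}{4a^2} - b$ (whenever $x^\star \geq 0$).

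The subtlety is that this global bound alone does not close the argument: it gives $f(t) \geq -ab - \frac{1}{4a}$, which only settles the claim when $f(s) \leq 0$. To handle $f(s) > 0$ I would exploit monotonicity. Since $f'(x) = a - \frac{1}{2\sqrt{x+b}}$ is negative for $x < x^\star$ and positive for $x > x^\star$, the function $f$ decreases and then increases, with its unique minimum at $x^\star$. I would then split into cases. If $x^\star \leq 0$, or more generally if $s \geq x^\star$, then $f$ is nondecreasing on $[s,t]$, so $f(t) \geq f(s)$ and the inequality is trivial. If instead $0 \leq s < x^\star$, then $f(t) \geq f(x^\star) = -ab - \frac{1}{4a}$, while $f(s) \leq f(0) = -\sqrt{b} \leq 0$ because $f$ decreases on $[0,x^\star]$, so that
\begin{align*}
f(t) - f(s) \geq -ab - \frac{1}{4a} + \sqrt{b} \geq -ab - \frac{1}{4a},
\end{align*}
as required.

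The main obstacle, and the conceptual content of the lemma, is precisely the recognition that the crude global minimum bound is insufficient by itself; one must observe that the largest possible descent of $f$ across any interval $[s,t] \subseteq [0,\infty)$ is the depth of its valley measured from the left endpoint, namely $f(0) - f(x^\star) = ab + \frac{1}{4a} - \sqrt{b} \leq ab + \frac{1}{4a}$. Everything else is the routine verification of the completed square and the sign of $f'$. A purely algebraic alternative that avoids the case split is to write $f(t) - f(s) = a(p-q)\left(p + q - \frac{1}{a}\right)$ with $p = \sqrt{t+b} > q = \sqrt{s+b}$, and to maximize $a(p-q)\left(\frac{1}{a} - p - q\right)$ subject to $q \geq \sqrt{b}$; this constrained maximum equals $a\left(\frac{1}{2a} - \sqrt{b}\right)^2 = \frac{1}{4a} - \sqrt{b} + ab$, again bounded by $ab + \frac{1}{4a}$, but the monotonicity argument is shorter to write out.
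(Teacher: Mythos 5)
Your proof is correct and follows essentially the same route as the paper: both analyze the sign of the derivative of $x\mapsto ax-\sqrt{x+b}$, observe that the function decreases to its minimum value $-ab-\tfrac{1}{4a}$ and then increases, and split into cases according to whether the minimizer lies to the left of $s$. You are in fact slightly more explicit than the paper in the nontrivial case, where the paper's bound $y_t-y_s\geq \min_\tau y_\tau - y_0$ tacitly relies on the sub-case analysis you spell out.
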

\begin{proof}
%Denote $x_t = at-\log(t+b)$.
%Then
%\begin{align*}
%\dot x_t = a-\frac{1}{t+b}>0,\quad \forall t\geq0.
%\end{align*}
%Hence, for all $a>b^{-1}$, $\dot x_t >0$ for all $t$. 
%As a result, $x_t>x_s$ for all $t>s$.
%For $a\leq b^{-1}$, $\min_tx_t = 1-ab+\log(a)$.
%%Note that in this case
%%\begin{align*}
%%\min_tx_t -x_0 = 1-a\rho+\log(a)+\log(\rho)\geq\log(a\rho).
%%\end{align*}
%Hence, $x_t\geq x_s+\min_tx_t -x_0$.
Denote $y_t = at-\sqrt{t+b}$.
Then
\begin{align*}
\dot y_t = a - \frac{1}{2\sqrt{t+b}}.
\end{align*}
If $a>(2\sqrt{b})^{-1}$, then $\dot y_t >0$ for all $t$. 
As a result, $y_t>y_s$ for all $t>s$.
If $a\leq(2\sqrt{b})^{-1}$, then $y_t-y_s \geq \min_ty_t -y_0$, where $\min_ty_t = -ab-(4a)^{-1}$.
%Note that in this case
%\begin{align*}
%\min_ty_t -y_0 = -a\rho-\frac{1}{4a}+\sqrt{\rho}\geq-\frac{1}{4a}.
%\end{align*}
%Hence, $y_t\geq y_s+\min_ty_t -y_0$.
This completes the proof.
\end{proof}

\begin{lemma}\label{lem:int-bound}
    Suppose $a$ is a positive real.
    Then for any $t_0>0$, 
    \begin{align*}
        \int_{t_0}^t\frac{1}{s}e^{as}ds\leq \frac{h}{t}e^{at}
    \end{align*}
    for some $h>0$.
\end{lemma}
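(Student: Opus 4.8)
The plan is to extract the leading term $\frac{1}{at}e^{at}$ by integration by parts and then control the resulting remainder integral by the integral itself, turning the estimate into a self-referential inequality that can be solved. First I would integrate by parts with $u=1/s$ and $dv=e^{as}ds$, obtaining
\begin{align*}
\int_{t_0}^t\frac{1}{s}e^{as}ds = \frac{1}{at}e^{at}-\frac{1}{at_0}e^{at_0}+\frac{1}{a}\int_{t_0}^t\frac{1}{s^2}e^{as}ds.
\end{align*}
The first term is already of the desired form, so the task reduces to absorbing the remainder integral into a small multiple of the original one.

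The key step is that on any tail interval $[T,t]$ with $aT>1$ I can use $\frac{1}{s^2}\le\frac{1}{Ts}$ for $s\ge T$ to get $\frac{1}{a}\int_T^t\frac{1}{s^2}e^{as}ds\le\frac{1}{aT}\int_T^t\frac{1}{s}e^{as}ds$. Writing $J=\int_T^t s^{-1}e^{as}ds$ and discarding the negative boundary term $-\frac{1}{aT}e^{aT}$, the identity above restricted to $[T,t]$ gives $J\le\frac{1}{at}e^{at}+\frac{1}{aT}J$, that is $J\left(1-\frac{1}{aT}\right)\le\frac{1}{at}e^{at}$, hence $J\le\frac{T}{(aT-1)t}e^{at}$. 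The condition $aT>1$ is precisely what makes the coefficient $1-\frac{1}{aT}$ strictly positive so that I can solve for $J$.

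To cover an arbitrary $t_0>0$ (where possibly $at_0\le1$), I would fix $T>\max\{t_0,1/a\}$ and split $\int_{t_0}^t=\int_{t_0}^T+\int_T^t$ for $t\ge T$. The first piece is a fixed constant $C_0=\int_{t_0}^T s^{-1}e^{as}ds$, which I absorb using that $t\mapsto e^{at}/t$ is increasing on $[T,\infty)$ (since $T>1/a$), so $C_0\le C_0Te^{-aT}\cdot\frac{e^{at}}{t}$; the second piece is bounded by the display above. For $t_0\le t\le T$ the integrand is continuous on a compact interval, so the integral is at most some constant $M$, while $e^{at}/t$ is bounded below by a positive constant $m$ there, giving the bound with $h=M/m$. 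Taking $h$ to be the maximum of $\frac{T}{aT-1}+C_0Te^{-aT}$ and $M/m$ completes the argument.

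The main obstacle is exactly the requirement $aT>1$ needed to close the self-referential inequality: a naive one-shot integration by parts on $[t_0,t]$ leaves the remainder with coefficient $\frac{1}{at_0}$, which exceeds $1$ when $t_0$ is small and makes the rearrangement collapse. The remedy is to run the integration-by-parts bound only on a tail $[T,t]$ with $aT>1$ and separately absorb the bounded initial segment through the eventual monotonicity of $e^{at}/t$; everything else is a routine estimate.
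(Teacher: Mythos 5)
Your proof is correct, and it takes a genuinely different route from the paper's. The paper recasts the inequality as a comparison between two components of an auxiliary dynamical system ($\dot x_t = z_t$, $\dot z_t = (-t^{-1}+a)z_t$ with $z_t = t^{-1}e^{at}$), sets $e = x - hz$, and argues about the sign of $\dot e = z_t(1 + h/t - ah)$, splitting into the cases $a > t_0^{-1}$ and $a \le t_0^{-1}$ and in the latter locating the maximizer of $e$ to choose $h$. You instead integrate by parts to peel off the leading term $\frac{1}{at}e^{at}$ and close a self-referential inequality $J \le \frac{1}{at}e^{at} + \frac{1}{aT}J$ on a tail $[T,t]$ with $aT>1$, then absorb the bounded head $[t_0,T]$ using the eventual monotonicity of $e^{at}/t$. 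The two case splits play the same structural role (the paper's threshold $a > t_0^{-1}$ is your $aT>1$ with $T=t_0$), but your argument is more elementary, produces a fully explicit constant $h = \max\bigl\{\frac{T}{aT-1}+C_0Te^{-aT},\, M/m\bigr\}$, and avoids the somewhat delicate maximizer computation in the paper's second case; the paper's ODE-comparison phrasing is in keeping with the control-theoretic style used throughout its appendix but yields a less transparent bound. All the individual estimates you use (dropping the negative boundary term, $s^{-2}\le (Ts)^{-1}$ on $[T,\infty)$, compactness on $[t_0,T]$) are valid, so there is no gap.
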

\begin{proof}
    First, consider the following auxiliary dynamical system defined on $[t_0,\infty)$:
\begin{align*}
    \dot x_t & = z_t\quad x_{t_0}=0,\\
    \dot z_t & = -\frac{1}{t}z_t+az_t,\quad z_{t_0}=\frac{1}{t_0}e^{at_0}.
\end{align*}
To complete the proof, we only need to show $x_t<hz_t$ for some $h>0$.
Denote $e = x-hz$, where $h>0$ is a sufficiently large constant that will be determined later.
Then $e_{t_0}<0$, and we have
\begin{align*}
    \dot e & = \dot x-h\dot z = z_t+\frac{h}{t}z_t-ahz_t= z_t\left(1+\frac{h}{t}-ah\right).
\end{align*}
Now, if $a>t_0^{-1}$, we can choose $h>(a-t_0^{-1})^{-1}$, such that $1+ht^{-1}-ah<0$ for all $t\geq t_0$.
Since $z_t>0$, we have $\dot e_t<0$, and as a result $x_t< hz_t$ for all $t\geq t_0$.

If $a\leq t_0^{-1}$, then $1+ht^{-1}-ah=0$ admits a solution $t=t'\triangleq\frac{h}{ah-1}>t_0$
for all $h>a^{-1}$.
Hence, $e_t$ reaches its maximum at $t=t'$.
As a result, we can choose 
\begin{align*}
h\geq \frac{1}{ae}\int_{t_0}^{\frac{2}{a}}\frac{1}{s}e^{as}ds+\frac{2}{a},
\end{align*}
which implies that
\begin{align*}
\int_{t_0}^{\frac{h}{ha-1}}\frac{1}{s}e^{as}ds\leq (ha-1)e^{\frac{1}{ha-1}+1},
\end{align*}
and hence $e_{t_0}\leq0$.
%Obviously, the above inequality always holds with sufficiently large $h$.
This completes the proof.
\end{proof}

\begin{lemma}\label{lem:step}
Consider $\alpha_t>0$  with $\dot\alpha_t\leq0$.
If there exist $h>0$ and $t_0\geq0$, such that $\alpha_t^2\geq -h\dot \alpha_t$ for all $t\geq t_0$, then
\begin{align*}
e^{-\int_{t_0}^t\alpha_sds}\leq \left(\frac{\alpha_t}{\alpha_{t_0}}\right)^h.
\end{align*}
\end{lemma}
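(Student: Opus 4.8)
The plan is to reduce the claimed inequality to a statement about the logarithmic derivative of $\alpha_t$ and then integrate. First I would take logarithms of both sides of the target inequality. Since the exponential and the $h$-th power are monotone, proving $e^{-\int_{t_0}^t\alpha_s ds}\leq(\alpha_t/\alpha_{t_0})^h$ is equivalent to establishing
\[
-\int_{t_0}^t\alpha_s\,ds \leq h\left(\log\alpha_t-\log\alpha_{t_0}\right),
\]
which is the form I would actually work toward.

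The key manipulation is to rewrite the hypothesis $\alpha_t^2\geq -h\dot\alpha_t$. Because $\alpha_t>0$, I would divide through by $\alpha_t$ to obtain
\[
\alpha_t \geq \frac{-h\dot\alpha_t}{\alpha_t} = -h\,\frac{d}{dt}\log\alpha_t,
\]
using $\frac{d}{dt}\log\alpha_t=\dot\alpha_t/\alpha_t$. Equivalently $-\alpha_t\leq h\,\frac{d}{dt}\log\alpha_t$, so the pointwise bound already has exactly the shape needed for the integrated inequality.

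I would then integrate this pointwise bound over $[t_0,t]$. The right-hand side integrates, by the fundamental theorem of calculus, to $h\left(\log\alpha_t-\log\alpha_{t_0}\right)$, yielding
\[
-\int_{t_0}^t\alpha_s\,ds \leq h\left(\log\alpha_t-\log\alpha_{t_0}\right),
\]
which is precisely the logarithmic form of the claim. Exponentiating both sides recovers $e^{-\int_{t_0}^t\alpha_s ds}\leq(\alpha_t/\alpha_{t_0})^h$ and finishes the argument.

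There is essentially no serious obstacle here: the entire lemma hinges on recognizing that the hypothesis, once divided by $\alpha_t$, is exactly a bound on $\frac{d}{dt}\log\alpha_t$. The only point requiring minor care is the regularity of $\alpha_t$; I would note that since $\alpha_t$ is positive and nonincreasing, it satisfies $\alpha_s\geq\alpha_t>0$ on the compact interval $[t_0,t]$, so $\log\alpha_s$ is well-defined and (given the existence of $\dot\alpha_t$, i.e. absolute continuity of $\alpha_t$) the fundamental theorem of calculus applies to $\log\alpha_s$, justifying the integration step above.
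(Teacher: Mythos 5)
Your proof is correct and follows essentially the same route as the paper: divide the hypothesis by $\alpha_t$ to get $\alpha_t \geq -h\,\dot\alpha_t/\alpha_t = -h\,\frac{d}{dt}\log\alpha_t$, integrate over $[t_0,t]$, and exponentiate. The extra remark on the positivity and regularity of $\alpha_t$ is a reasonable refinement but does not change the argument.
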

\begin{proof}
By definition,
\begin{align*}
 \alpha_t \geq -h\frac{\dot \alpha_t}{\alpha_t}\geq0,\quad \forall t\geq t_0.
\end{align*}
Taking integration from $t_0$ to $t$, we have
\begin{align*}
\int_{t_0}^t\alpha_sds \geq h\log(\alpha_{t_0}) -h\log(\alpha_t).
\end{align*}
Hence,
\begin{align*}
e^{-\int_{t_0}^t\alpha_sds}\leq \left(\frac{\alpha_t}{\alpha_{t_0}}\right)^h.
\end{align*}
This completes the proof.
\end{proof}

\vskip 0.2in
%\bibliography{sample}

%\bibliographystyle{abbrvnat}
\bibliography{VIbib}

\end{document}